\newcommand{\lbopt}{\underline{\vz}^*}
\newcommand{\ubopt}{\overline{\vz}^*}
\newcommand{\lbbox}{\underline{\vz}^\dagger}
\newcommand{\ubbox}{\overline{\vz}^\dagger}
\def\eqref#1{equation~\ref{#1}}
\def\1{\bm{1}}
\def\eps{{\epsilon}}
\def\vb{{\bm{b}}}
\def\vd{{\bm{d}}}
\def\vf{{\bm{f}}}
\def\vh{{\bm{h}}}
\def\vu{{\bm{u}}}
\def\vv{{\bm{v}}}
\def\vx{{\bm{x}}}
\def\vy{{\bm{y}}}
\def\vz{{\bm{z}}}
\def\mW{{\bm{W}}}
\DeclareMathAlphabet{\mathsfit}{\encodingdefault}{\sfdefault}{m}{sl}
\SetMathAlphabet{\mathsfit}{bold}{\encodingdefault}{\sfdefault}{bx}{n}
\newcommand{\E}{\mathbb{E}}
\newcommand{\R}{\mathbb{R}}
\newcommand{\Var}{\mathrm{Var}}
\DeclareMathOperator*{\argmax}{arg\,max}
\DeclareMathOperator*{\argmin}{arg\,min}
\definecolor{my-full-blue}{HTML}{1F77B4}
\definecolor{my-full-orange}{HTML}{FF7F0E}
\definecolor{my-full-green}{HTML}{2CA02C}
\definecolor{my-full-red}{HTML}{d62728}
\definecolor{my-full-purple}{HTML}{9467bd}
\definecolor{my-full-brown}{HTML}{8c564b}
\definecolor{my-full-pink}{HTML}{e377c2}
\definecolor{my-full-gray}{HTML}{7f7f7f}
\definecolor{my-full-olive}{HTML}{bcbd22}
\definecolor{my-full-cyan}{HTML}{17becf}
\definecolor{c1}{RGB}{86, 100, 26}
\definecolor{c2}{RGB}{192, 175, 251}
\definecolor{c3}{RGB}{230, 161, 118}
\definecolor{c4}{RGB}{0, 103, 138}
\definecolor{c5}{RGB}{152, 68, 100}
\definecolor{c6}{RGB}{94, 204, 171}
\definecolor{c7}{RGB}{205, 205, 205}
\definecolor{cm1}{HTML}{1f77b4}
\definecolor{cm2}{HTML}{ff7f0e}
\definecolor{cm3}{HTML}{2ca02c}
\definecolor{cm4}{HTML}{d62728}
\definecolor{cm5}{HTML}{9467bd}
\definecolor{cm6}{HTML}{8c564b}
\definecolor{cm7}{HTML}{e377c2}
\definecolor{cm8}{HTML}{7f7f7f}
\definecolor{cm9}{HTML}{bcbd22}
\definecolor{cm10}{HTML}{17becf}
\colorlet{my-blue}{my-full-blue!60}
\colorlet{my-orange}{my-full-orange!30}
\colorlet{my-green}{my-full-green!30}
\colorlet{my-red}{my-full-red!30}
\colorlet{my-purple}{my-full-purple!30}
\colorlet{my-brown}{my-full-brown!30}
\colorlet{my-pink}{my-full-pink!30}
\colorlet{my-gray}{my-full-gray!30}
\colorlet{my-olive}{my-full-olive!30}
\colorlet{my-cyan}{my-full-cyan!30}
\definecolor{ckeyword}{HTML}{7F0055}
\definecolor{ccomment}{HTML}{3F7F5F}
\definecolor{cstring}{HTML}{2A0099}
\lstdefinestyle{numbers}{
	numbers=left,
	framexleftmargin=20pt,
	numberstyle=\tiny,
	firstnumber=auto,
	numbersep=1em,
	xleftmargin=2em
}
\lstdefinestyle{layout}{
	frame=none,
	captionpos=b,
}
\lstdefinestyle{comment-style}{
	morecomment=[l]//,
	morecomment=[s]{/*}{*/},
	commentstyle={\color{ccomment}\itshape},
}
\lstdefinestyle{string-style}{
	morestring=[b]",%
	morestring=[b]',%
	stringstyle={\color{cstring}},
	showstringspaces=false,%
}
\lstdefinestyle{keyword-style}{
	keywordstyle={\ttfamily\bfseries},
	morekeywords={
		function,
		constructor,
		int,
		bool,
		return,
		returns,
		uint
	},
	morekeywords = [2]{},
	keywordstyle = [2]{\text},
	sensitive=true,
}
\lstdefinestyle{input-encoding}{
	inputencoding=utf8,
	extendedchars=true,
	literate=
	{ℝ}{$\reals$}1%
	{→}{$\rightarrow$}1%
	{α}{$\alpha$}1%
	{β}{$\beta$}1%
	{λ}{$\lambda$}1%
	{θ}{$\theta$}1%
	{ϕ}{$\phi$}1%
}
\lstdefinestyle{escaping}{
	moredelim={**[is][\color{blue}]{\%}{\%}},
	escapechar=|,
	mathescape=true
}
\lstdefinestyle{default-style}{
	basicstyle=\fontencoding{T1}\ttfamily\footnotesize,
	style=numbers,
	style=layout,
	style=comment-style,
	style=string-style,
	style=keyword-style,
	style=input-encoding,
	style=escaping,
	tabsize=2,
	upquote=true
}
\lstdefinelanguage{BASIC}{
	language=C++,
	style=default-style
}[keywords,comments,strings]%
\crefname{listing}{Lst.}{listings}
\crefname{line}{Lin.}{Lin.}
\crefname{appendix}{App.}{App.}
\crefname{lemma}{Lemma}{Lemmas}
\Crefname{lemma}{Lemma}{Lemmas}
\crefname{thm}{Theorem}{Theorems}
\Crefname{thm}{Theorem}{Theorems}
\newcommand{\app}[1]{%
	\ifbool{includeappendix}{\cref{#1}}{the appendix}%
}
\newcommand{\App}[1]{%
	\ifbool{includeappendix}{\cref{#1}}{The appendix}%
}
\newcolumntype{C}{D{.}{.}{2.1}}
\declaretheoremstyle[
  spaceabove=0.5em, %
  spacebelow=0.01em,%
  headfont=\normalfont\bfseries,
  bodyfont=\normalfont\itshape,
]{mystyle}
\declaretheorem[name=Theorem,numberwithin=section,style=mystyle]{thm}
\crefname{thm}{Theorem}{Theorems}
\crefname{lem}{Lemma}{Lemmas}
\declaretheorem[name=Definition,numberlike=thm,style=mystyle]{mydef}
\crefname{mydef}{Definition}{Definitions}
\crefname{cor}{Corollary}{Corollaries}
\DeclareMathOperator*{\ibpm}{Box}
\newcommand{\ibpb}{\ensuremath{\ibpm^*}\xspace}
\newcommand{\ibpl}{\ensuremath{\ibpm^\dagger}\xspace}
\newcommand{\optW}{\ensuremath{\mW^*}\xspace}
\newcommand{\ibpW}{\ensuremath{\mW^\dagger}\xspace}
\DeclareMathOperator*{\diag}{diag}
\DeclareMathOperator*{\relu}{ReLU}
\newcommand{\bc}[1]{\mathcal{#1}}
\newcommand{\bs}[1]{\boldsymbol{#1}}
\newcommand{\B}{\bc{B}}
\renewcommand{\L}{\bc{L}}
\renewcommand{\paragraph}[1]{\textbf{#1}\hspace{1em}}
\newcommand{\mnbab}{\textsc{MN-BaB}\xspace}
\newcommand{\pgd}{\textsc{PGD}\xspace}
\newcommand{\milp}{\textsc{MILP}\xspace}
\newcommand{\crownibp}{\textsc{CROWN-IBP}\xspace}
\newcommand{\ibp}{\textsc{IBP}\xspace}
\newcommand{\sabr}{\textsc{SABR}\xspace}
\newcommand{\taps}{\textsc{TAPS}\xspace}
\newcommand{\staps}{\textsc{STAPS}\xspace}
\newcommand{\ibpr}{\textsc{IBP-R}\xspace}
\newcommand{\ccibp}{\textsc{CC-IBP}\xspace}
\newcommand{\mtlibp}{\textsc{MTL-IBP}\xspace}
\newcommand{\colt}{\textsc{COLT}\xspace}
\newcommand{\boxd}{\textsc{Box}\xspace}
\newcommand{\cifar}{CIFAR-10\xspace}
\newcommand{\mnist}{\textsc{MNIST}\xspace}
\newcommand{\cnnt}{\texttt{CNN3}\xspace}
\newcommand{\cnns}{\texttt{CNN7}\xspace}
\renewcommand{\th}{\textsuperscript{th}\xspace}
\newcommand\ubar[1]{\underaccent{\bar}{#1}}
\newcommand\obar[1]{\bar{#1}}
\newcolumntype{d}[1]{S[table-format=#1]}
\colorlet{cbackground}{c7!20}
\colorlet{cexact}{my-full-blue!45}
\colorlet{cexactlatent}{my-full-green!40}
\colorlet{cfwd}{black!100}
\colorlet{cpgd}{my-full-purple!75}
\colorlet{cpgdsignle}{my-full-red!90!black!65}
\colorlet{cbwd}{my-full-red!90!black!65}
\colorlet{cibp}{black!80}
\colorlet{ctool}{c4!100}
\colorlet{netinside}{c7!100}
\tikzstyle{toolstyle}=[dotted, line width = 1.2pt,draw=ctool]
\tikzstyle{ibpstyle}=[dash pattern=on 5pt off 2pt, cibp]
\tikzstyle{exactstyle}=[fill=cexact, opacity=1.0, draw=none]
\tikzstyle{exactlatentstyle}=[fill=cexactlatent, opacity=0.8, draw=none]
\tikzstyle{point}=[line width=0.5pt, draw=black, cross out, inner sep=0pt, minimum width=3pt, minimum height=3pt, anchor=center]
\tikzstyle{wcpoint}=[line width=0.5pt, draw=black, fill=black, circle, inner sep=0pt, minimum width=2.5pt, minimum height=2.5pt, anchor=center]
\tikzstyle{pgdarrow}=[color=cpgd, thick]
\tikzstyle{pgdsinglearrow}=[color=cpgdsignle, thick]
\tikzstyle{fwdarrow}=[color=cfwd, thick, dashed]
\tikzstyle{bwdarrow}=[color=cbwd,  line width = 1.2pt,, dotted]
\tikzstyle{pane}=[fill=cbackground, rectangle, rounded corners=2pt]
\newcommand{\markerb}[1]{\tikz[]{\node[fill, aspect=1, color=#1, inner sep=0pt, minimum size=2.1mm]{};}\hspace{-0.225em}\xspace}
\newcommand{\markeribp}{\protecting{\tikz[]{\node[aspect=1, draw=cibp, inner sep=0pt, minimum size=2.1mm, ibpstyle, dash pattern=on 2.5pt off 1pt]{};}\xspace}}
\newcommand{\markeropt}{\protecting{\markerb{my-full-green!50}\xspace}}
\newcommand{\markerexact}{\protecting{\markerb{my-full-blue!50}\xspace}}
\title{Understanding Certified Training\\with Interval Bound Propagation}
\author{Yuhao Mao, Mark Niklas Müller, Marc Fischer \& Martin Vechev \\
Department of Computer Science, ETH Zürich, Swizterland \\
\texttt{\{yuhao.mao, mark.mueller, marc.fischer, martin.vechev\}@inf.ethz.ch}
}
\begin{document}
\maketitle
\begin{abstract}

As robustness verification methods are becoming more precise, training certifiably robust neural networks is becoming ever more relevant. To this end, certified training methods compute and then optimize an upper bound on the worst-case loss over a robustness specification. Curiously, training methods based on the imprecise interval bound propagation (IBP) consistently outperform those leveraging more precise bounds. Still, we lack a theoretical understanding of the mechanisms making IBP so successful. In this work, we investigate these mechanisms by leveraging a novel metric measuring the tightness of IBP bounds. We first show theoretically that, for deep linear models (DLNs), tightness decreases with width and depth at initialization, but improves with IBP training. We, then, derive sufficient and necessary conditions on weight matrices for IBP bounds to become exact and demonstrate that these impose strong regularization, providing an explanation for the observed robustness-accuracy trade-off. Finally, we show how these results on DLNs transfer to ReLU networks, before conducting an extensive empirical study, (i) confirming this transferability and yielding state-of-the-art certified accuracy, (ii) finding that while all IBP-based training methods lead to high tightness, this increase is dominated by the size of the propagated input regions rather than the robustness specification, and finally (iii) observing that non-IBP-based methods do not increase tightness. Together, these results help explain the success of recent certified training methods and may guide the development of new ones.

\end{abstract}

\section{Introduction}

The increasing deployment of deep-learning-based systems in safety-critical domains has made their trustworthiness and especially formal robustness guarantees against adversarial examples \citep{BiggioCMNSLGR13,SzegedyZSBEGF13} an ever more important topic.
As significant progress has been made on neural network certification \citep{ZhangWXLLJ22,FerrariMJV22}, the focus in the field is increasingly shifting to the development of specialized training methods that improve certifiable robustness while minimizing the accompanying reduction in standard accuracy.

\author{Antiquus S.~Hippocampus, Natalia Cerebro \& Amelie P. Amygdale \thanks{ Use footnote for providing further information
about author (webpage, alternative address)---\emph{not} for acknowledging
funding agencies.  Funding acknowledgements go at the end of the paper.} \\
Department of Computer Science\\
Cranberry-Lemon University\\
Pittsburgh, PA 15213, USA \\
\texttt{\{hippo,brain,jen\}@cs.cranberry-lemon.edu} \\
\And
Ji Q. Ren \& Yevgeny LeNet \\
Department of Computational Neuroscience \\
University of the Witwatersrand \\
Joburg, South Africa \\
\texttt{\{robot,net\}@wits.ac.za} \\
\AND
Coauthor \\
Affiliation \\
Address \\
\texttt{email}
}

\paragraph{Certified Training}
These certified training methods aim to compute and then optimize approximations of the network's worst-case loss over an input region defined by an adversary specification. 
To this end, they compute an over-approximation of the network's reachable set using symbolic bound propagation methods \citep{SinghGMPV18,SinghGPV19,GowalIBP2018}. Surprisingly, training methods based on the least precise bounds, obtained via interval bound propagation (\ibp), empirically yield the best performance \citep{ShiWZYH21}. \citet{jovanovic2022paradox} investigate this surprising observation theoretically and find that more precise bounding methods induce harder optimization problems. 

As a result, \emph{all} methods obtaining state-of-the-art performance leverage IBP bounds either directly \citep{ShiWZYH21}, as regularizer \citep{PalmaIBPR22}, or to precisely but unsoundly approximate the worst-case loss \citep{MuellerEFV22,MaoMFV2023,PalmaBDKSL23}. 
However, while \ibp is crucial to their success, none of these works develop a theoretical understanding of what makes \ibp training so effective and how it affects bound tightness and network regularization.

\paragraph{This Work}
We take a first step towards building a deeper understanding of the mechanisms making \ibp training so successful and thereby pave the way for further advances in certified training. To this end, we derive necessary and sufficient conditions on a network's weights under which \ibp bounds become tight, a property we call \emph{propagation invariance}, and prove that it implies an extreme regularization, agreeing well with the empirically observed trade-off between certifiable robustness and accuracy \citep{TsiprasSETM19,MuellerEFV22}.
To investigate how close real networks are to full propagation invariance, we introduce the metric \emph{propagation tightness} as the ratio of optimal and \ibp bounds, and show how to efficiently compute it globally for deep linear networks (DLNs) and locally for ReLU networks.

This novel metric enables us to theoretically investigate the effects of model architecture, weight initialization, and training methods on \ibp bound tightness for deep linear networks (DLNs). We show that (i) at initialization, tightness decreases with width (polynomially) and depth (exponentially), (ii) tightness is increased by \ibp training, and (iii) sufficient width becomes crucial for trained networks. 

Conducting an extensive empirical study, we confirm the predictiveness of our theoretical results for deep ReLU networks and observe that:
(i) increasing network width but not depth improves state-of-the-art certified accuracy, (ii) \ibp training significantly increases tightness, almost to the point of propagation invariance, (iii) unsound \ibp-based training methods increase tightness to a smaller degree, determined by the size of the propagated input region and the weight of the \ibp-loss, but yield better performance, and (iv) non-\ibp-based training methods barely increase tightness, leading to higher accuracy but worse robustness. 
These findings suggest that while \ibp-based training methods improve robustness by increasing tightness at the cost of standard accuracy, high tightness is not generally necessary for robustness. This observation explains the recent success of unsound \ibp-based methods and, in combination with the theoretical and practical insights developed here,  promises to be a key step toward constructing novel and more effective certified training methods.

\section{Background} \label{sec:background}
Here, we provide a background on adversarial and certified robustness. We consider a classifer $\vf\colon \R^{d_\text{in}} \mapsto \R^{c}$ predicting a numerical score $\vy \coloneqq \vf(\vx)$ per class given an input $\vx \in \bc{X} \subseteq \R^{d_\text{in}}$.

\paragraph{Adversarial Robustness} describes the property of a classifer $\vf$ to consistently predict the target class $t$ for all perturbed inputs $\vx'$ in an $\ell_p$-norm ball $\bc{B}_p^{\epsilon_p}(\vx)$ of radius $\epsilon_p$. As we focus on $\ell_\infty$ perturbations in this work, we henceforth drop the subscript $p$ for notational clarity. More formally, we define \emph{adversarial robustness} as:
\begin{equation}
	\label{eq:adv_robustness}
	\argmax_j f(\vx')_j = t, \quad \forall \vx' \in \bc{B}_p^{\epsilon_p}(\vx) := \{\vx' \in \bc{X} \mid \|\vx -\vx'\|_p \leq \epsilon_p\}.
\end{equation}

\paragraph{Neural Network Certification} can be used to formally prove the robustness of a classifier $\vf$ for a given input region $\bc{B}^{\epsilon}(\vx)$.
Interval bound propagation (IBP) \citep{GowalIBP2018,MirmanGV18} is a simple but popular such certification method. It is based on propagating an input region $\bc{B}^{\epsilon}(\vx)$ through a neural network by computing \boxd over-approximations (each dimension is described as an interval) of the hidden state after every layer until we reach the output space. There, it is checked whether all points in the resulting over-approximation of the network's reachable set yield the correct classification.
As an example, consider an $L$-layer network $\vf = \vh_L \circ \bs{\sigma} \circ \vh_{L-2} \circ \dotsc \circ \vh_1$, with linear layers $\vh_i$ and ReLU activation functions $\bs{\sigma}$.
We first over-approximate the input region $\bc{B}^{\epsilon}(\vx)$ as \boxd with radius $\bs{\delta}^0 = \epsilon$ and center $\dot{\vx}^0 = \vx$, such that we have the $i$\th dimension of the input $x^0_i \in [\ubar{x}_i, \obar{x}_i] \coloneqq [\dot{x}^0_i - \delta^0_i, \dot{x}^0_i + \delta^0_i]$.
Propagating such a \boxd through the linear layer $\vh_i(\vx^{i-1}) = \mW \vx^{i-1} + \vb =: \vx^i$, we obtain the output hyperbox with centre $\dot{\vx}^i = \mW \dot{\vx}^{i-1} + \vb$ and radius $\bs{\delta}^{i} = |\mW| \bs{\delta}^{i-1}$, where $| \cdot |$ denotes the element-wise absolute value.
To propagate a \boxd through the ReLU activation $\relu(\vx^{i-1}) \coloneqq \max(0,\vx^{i-1})$, we propagate the lower and upper bound separately, resulting in an output \boxd with $\dot{\vx}^{i} = \tfrac{\obar{\vx}^{i} + \ubar{\vx}^{i}}{2}$ and $\bs{\delta}^i = \tfrac{\obar{\vx}^{i} - \ubar{\vx}^{i}}{2}$ where  $\ubar{\vx}^{i} = \relu(\dot{\vx}^{i-1} - \bs{\delta}^{i-1})$ and $\obar{\vx}^{i} = \relu(\dot{\vx}^{i-1} + \bs{\delta}^{i-1})$.
We proceed this way for all layers obtaining first lower and upper bounds on the network's output $\vy$ and then an upper bound $\obar{\vy}^\Delta$ on the logit difference  $y^\Delta_i := y_i - y_t$. Showing that $\obar{y}^\Delta_i < 0, \; \forall i \neq t$ is then equivalent to proving adversarial robustness on the considered input region.

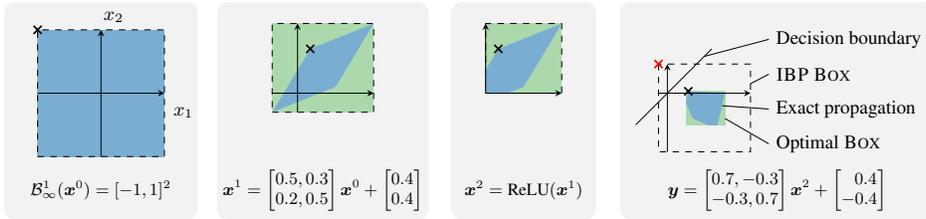
\begin{figure}
\vspace{-40mm}
\centering
\resizebox{.9\textwidth}{!}{
\tikzset{
	point/.style={
		thick,
		draw=black,
		cross out,
		inner sep=0pt,
		minimum width=3pt,
		minimum height=3pt,
	},
	point_r/.style={
		thick,
		draw=red,
		cross out,
		inner sep=0pt,
		minimum width=3pt,
		minimum height=3pt,
	},
}

\begin{tikzpicture}[scale=0.70]
\tikzset{>=latex}

	\def \dy{0.2cm}
	\def \ddy{-2.3cm}
	\def \dx{-8cm}
	\def \ddx{5cm}
	\colorlet{c_exact}{my-full-blue!60}
	\colorlet{c_ibp}{c5}
	\colorlet{c_opt}{my-full-green!40}

    \begin{scope}[xshift=\dx, yshift=\dy]
		\node[fill=black!05, rectangle, rounded corners=5pt, minimum width=3.2cm, minimum height=3.6cm] at (0, 0.18*\ddy) {};
		\begin{scope}[yshift=0.0cm, scale=1.5]
			\coordinate (O) at ({0.0000},{0.0000});
			\coordinate (X) at ({1.0000},{0.0000});
			\coordinate (Y) at ({0.0000},{1.0000});
			\coordinate (XO) at ({-1.0000},{0.0000});
			\coordinate (YO) at ({0.0000},{-1.0000});

			\coordinate (input_head) at ({0.0000},{0.0000});
			\coordinate (input_p_0) at ({-1.0000},{-1.0000});
			\coordinate (input_p_1) at ({1.0000},{-1.0000});
			\coordinate (input_p_2) at ({1.0000},{1.0000});
			\coordinate (input_p_3) at ({-1.0000},{1.0000});
	
			\fill [fill=c_exact, opacity=1.0, rounded corners=0mm] (input_p_0) -- (input_p_1) -- (input_p_2) -- (input_p_3)  -- cycle;
			\draw [black, dashed, opacity=1.0, rounded corners=0mm] (input_p_0) -- (input_p_1) -- (input_p_2) -- (input_p_3)  -- cycle;
			\draw[-stealth] (XO) -- (X);
			\draw[-stealth] (YO) -- (Y);
			
			\node at ($(X)+(0.3,-0.3)$) {\footnotesize$x_1$};
			\node at ($(Y)+(0.2,0.2)$) {\footnotesize$x_2$};
			\node[point] at (input_p_3) {};
	
		\end{scope}
		\node[anchor=center,scale=0.9] at (0, \ddy) {\footnotesize $\bc{B}_\infty^{1}(\vx^0) = [-1,1]^2$};
	\end{scope}
		
	\begin{scope}[xshift=\dx + 1.05*\ddx, yshift=\dy]
		\node[fill=black!05, rectangle, rounded corners=5pt, minimum width=3.5cm, minimum height=3.6cm] at (0, 0.18*\ddy) {};

		\begin{scope}[xshift=-0.6cm,yshift=0.0cm, scale=1.5]
			\coordinate (O) at ({0.0000},{0.0000});
			\coordinate (X) at ({1.2000},{0.0000});
			\coordinate (Y) at ({0.0000},{1.1000});
			\coordinate (XO) at ({-0.4000},{0.0000});
			\coordinate (YO) at ({0.0000},{-0.3000});
			
			\coordinate (linear_box_p_0) at ({-0.4000},{-0.3000});
			\coordinate (linear_box_p_1) at ({1.2000},{-0.3000});
			\coordinate (linear_box_p_2) at ({1.2000},{1.1000});
			\coordinate (linear_box_p_3) at ({-0.4000},{1.1000});
			\fill [fill=c_opt, opacity=1.0, rounded corners=0mm] (linear_box_p_0) -- (linear_box_p_1) -- (linear_box_p_2) -- (linear_box_p_3)  -- cycle;
			\draw [black, dashed, opacity=1.0, rounded corners=0mm] (linear_box_p_0) -- (linear_box_p_1) -- (linear_box_p_2) -- (linear_box_p_3)  -- cycle;

			\coordinate (linear_head) at ({0.4000},{0.4000});
			\coordinate (linear_p_0) at ({0.2000},{0.7000});
			\coordinate (linear_p_1) at ({-0.4000},{-0.3000});
			\coordinate (linear_p_2) at ({0.6000},{0.1000});
			\coordinate (linear_p_3) at ({1.2000},{1.1000});
			
			\fill [fill=c_exact, opacity=1.0, rounded corners=0mm] (linear_p_0) -- (linear_p_1) -- (linear_p_2) -- (linear_p_3)  -- cycle;
			
			\node[point] at (linear_p_0) {};
			
			\draw[-stealth] (XO) -- (X);
			\draw[-stealth] (YO) -- (Y);
		\end{scope}
		\node[anchor=center,scale=0.9] at (0, \ddy) {\footnotesize $\vx^1 = \begin{bmatrix}
			0.5, 0.3 \\
			0.2, 0.5 \\
			\end{bmatrix}\vx^0 + \begin{bmatrix}
			0.4\\0.4
			\end{bmatrix}$};
		
	\end{scope}
	
	\begin{scope}[xshift=\dx + 2.0 * \ddx, yshift=\dy]
		\node[fill=black!05, rectangle, rounded corners=5pt, minimum width=2.4cm, minimum height=3.6cm] at (0, 0.18*\ddy) {};
		\begin{scope}[xshift = -0.9cm, yshift=0.0cm, scale=1.5]
		\coordinate (O) at ({0.0000},{0.0000});
		\coordinate (X) at ({1.2000},{0.0000});
		\coordinate (Y) at ({0.0000},{1.1000});
		\coordinate (XO) at ({-0.0000},{0.0000});
		\coordinate (YO) at ({0.0000},{-0.0000});
		\coordinate (XX) at ({5.0000},{0.0000});
		\coordinate (YY) at ({0.0000},{5.0000});
		
		\coordinate (linear_box_p_0) at ({-0.000},{-0.000});
		\coordinate (linear_box_p_1) at ({1.2000},{-0.000});
		\coordinate (linear_box_p_2) at ({1.2000},{1.1000});
		\coordinate (linear_box_p_3) at ({-0.000},{1.1000});
		\fill [fill=c_opt, opacity=1.0, rounded corners=0mm] (linear_box_p_0) -- (linear_box_p_1) -- (linear_box_p_2) -- (linear_box_p_3)  -- cycle;
		\draw [black, dashed, opacity=1.0, rounded corners=0mm] (linear_box_p_0) -- (linear_box_p_1) -- (linear_box_p_2) -- (linear_box_p_3)  -- cycle;

		\coordinate (linear_p_0) at ({0.2000},{0.7000});
		\coordinate (linear_p_1) at ({-0.4000},{-0.3000});
		\coordinate (linear_p_2) at ({0.6000},{0.1000});
		\coordinate (linear_p_3) at ({1.2000},{1.1000});

  		\path[name path=x] (O) -- (XX);
  		\path[name path=y] (O) -- (YY);
  		\path[name path=top] (linear_p_1) -- (linear_p_0);
  		\path[name path=bottom] (linear_p_1) -- (linear_p_2);
		\path[name intersections={of=y and top}];
		\coordinate (linear_p_4) at (intersection-1);
		\path[name intersections={of=x and bottom}];
		\coordinate (linear_p_5) at (intersection-1);

		\fill [fill=c_exact, opacity=1.0, rounded corners=0mm] (linear_p_0) -- (linear_p_4) --(O) -- (linear_p_5) -- (linear_p_2) -- (linear_p_3)  -- cycle;
		
		\node[point] at (linear_p_0) {};
				
		\draw[-stealth] (XO) -- (X);
		\draw[-stealth] (YO) -- (Y);
		\end{scope}
		\node[anchor=center,scale=0.9] at (0, \ddy) {\footnotesize$\vx^2 = \text{ReLU}(\vx^1)$};
	
	\end{scope}

	\begin{scope}[xshift=\dx + 3.2*\ddx, yshift=\dy]
		\node[fill=black!05, rectangle, rounded corners=5pt, minimum width=5.2cm, minimum height=3.6cm] at (0, 0.18*\ddy) {};

		\begin{scope}[xshift=-2.6cm, yshift=0.0cm, scale=1.5]
			\coordinate (O) at ({0.0000},{0.0000});
			\coordinate (DA) at ({-0.500},{-0.500});
			\coordinate (DB) at ({0.700},{0.700});
			\coordinate (X) at ({1.3107},{0.0000});
			\coordinate (Y) at ({0.0000},{0.4600});
			\coordinate (XO) at ({-0.1400},{0.0000});
			\coordinate (YO) at ({0.0000},{-0.9250});					
			\coordinate (XX) at ({5.0000},{0.0000});
			\coordinate (YY) at ({0.0000},{5.0000});
			
			\coordinate (box_2_p_0) at ({-0.1400},{0.4600});
			\coordinate (box_2_p_1) at ({-0.1400},{-0.9250});
			\coordinate (box_2_p_2) at ({1.3107},{-0.9250});
			\coordinate (box_2_p_3) at ({1.3107},{0.4600});
			\draw [black, dashed, opacity=1.0, rounded corners=0mm] (box_2_p_0) -- (box_2_p_1) -- (box_2_p_2) -- (box_2_p_3)  -- cycle;

			\coordinate (linear_2_p_0) at ({0.7900},{-0.5100});
			\coordinate (linear_2_p_1) at ({0.9100},{0.0100});
			\coordinate (linear_2_p_2) at ({0.3300},{0.0300});
			\coordinate (linear_2_p_3) at ({0.2900},{-0.1433});
			\coordinate (linear_2_p_4) at ({0.4000},{-0.4000});
			\coordinate (linear_2_p_5) at ({0.6450},{-0.5050});

			\coordinate (box_3_p_0) at ({0.2900},{0.0300});
			\coordinate (box_3_p_1) at ({0.2900},{-0.5100});
			\coordinate (box_3_p_2) at ({0.9100},{-0.5100});
			\coordinate (box_3_p_3) at ({0.9100},{0.0300});

			\fill [fill=c_opt, opacity=1.0, rounded corners=0mm] (box_3_p_0) -- (box_3_p_1) -- (box_3_p_2) -- (box_3_p_3)  -- cycle;
			\fill [fill=c_exact, opacity=1.0, rounded corners=0mm] (linear_2_p_0) -- (linear_2_p_1) -- (linear_2_p_2) -- (linear_2_p_3) -- (linear_2_p_4) -- (linear_2_p_5)  -- cycle;

			\draw[black] ($(DA)!0.90!(DB)$) -- (1.6, 0.85) node[right, font=\footnotesize, text=black] {Decision boundary};
			\draw[black] (1.3107, 0.15) -- (1.6, 0.3) node[right, font=\footnotesize, text=black] {\ibp \boxd};
			\draw[black] ($(linear_2_p_0)!0.60!(linear_2_p_1)$) -- (1.6, -0.25) node[right, font=\footnotesize, text=black] {Exact propagation};
			\draw[black] (0.91, -0.4) -- (1.6, -0.8) node[right, font=\footnotesize, text=black] {Optimal \boxd};
			
			\node[point] at (linear_2_p_2) {};
			\node[point_r] at (box_2_p_0) {};

			\draw[-stealth] (XO) -- (X);
			\draw[-stealth] (YO) -- (Y);
			\draw[-,solid] (DA) -- (DB);
		\end{scope}

\node[anchor=center,scale=0.9] at (0, \ddy) {\footnotesize $\vy = \begin{bmatrix}
			0.7, -0.3 \\
			-0.3, 0.7 \\
			\end{bmatrix}\vx^2 + \begin{bmatrix}
			\;\;\, 0.4\\-0.4
			\end{bmatrix}$};

\end{scope}

	\coordinate (z) at ({0.1231},{0.0846});
\end{tikzpicture}
}
\vspace{-2mm}
\caption{Comparison of exact (\markerexact), optimal box (\markeropt), and \ibp (\markeribp) propagation through a one layer network. We show the concrete points maximizing the logit difference $y_2-y_1$ as a black $\times$ and the corresponding relaxation as a red \textcolor{red}{$\times$}.}%
\label{fig:overview}
\vspace{-4mm}
\end{figure}

We illustrate this propagation process for a two-layer network in \cref{fig:overview}. There, we show the exact propagation of the input region \markerexact in blue, its optimal box approximation \markeropt in green, and the \ibp approximation as dashed boxes \markeribp. Note how after the first linear and ReLU layer (third column), the box approximations (both optimal \markeropt and \ibp \markeribp) contain already many points outside the reachable set \markerexact, despite it being the smallest hyper-box containing the exact region. These so-called approximation errors accumulate quickly when using \ibp, leading to an increasingly imprecise abstraction, as can be seen by comparing the optimal box \markeropt and \ibp \markeribp approximation after an additional linear layer (rightmost column).
To verify that this network classifies all inputs in $[-1,1]^2$ to class $1$, we have to show the upper bound of the logit difference $y_2-y_1$ to be less than $0$. While the concrete maximum of $-0.3 \geq y_2-y_1$ (black $\times$) is indeed less than $0$, showing that the network is robust, \ibp \markeribp only yields $0.6 \geq y_2-y_1$ (red {\color{red}$\times$}) and is thus too imprecise to prove it. In contrast, the optimal box \markeropt yields a precise approximation of the true reachable set, sufficient to prove robustness.

\paragraph{Training for Robustness} is required to obtain (certifiably) robust neural networks.
For a data distribution $(\vx, t) \sim \bc{D}$, standard training optimizes the network parametrization $\bs{\theta}$ to minimize the expected cross-entropy loss:
\begin{equation}\label{eq:std_train}
	\theta_\text{std} = \argmin_\theta \E_\bc{D} [\bc{L}_\text{CE}(\vf_{\bs{\theta}}(\vx),t)], \quad \text{with} \quad \bc{L}_\text{CE}(\vy, t) = \ln\big(1 + \sum_{i \neq t} \exp(y_i-y_t)\big).
\end{equation}
To train for robustness, we, instead, aim to minimize the expected \emph{worst-case loss} for a given robustness specification, leading to a min-max optimization problem:
\begin{equation}
\label{eq:rob_opt}
	\theta_\text{rob} = \argmin_\theta \mathbb{E}_{\bc{D}} \left[ \max_{\vx' \in \bc{B}^{\epsilon}(\vx) }\bc{L}_\text{CE}(\vf_{\bs{\theta}}(\vx'),t) \right].
\end{equation}
As computing the worst-case loss by solving the inner maximization problem is generally intractable, it is commonly under- or over-approximated, yielding adversarial and certified training, respectively. 

\paragraph{Adversarial Training} optimizes a lower bound on the inner optimization objective in \cref{eq:rob_opt}. It first computes concrete examples $\vx'\in \bc{B}^{\epsilon}(\vx)$ that approximately maximize the loss term $\bc{L}_\text{CE}$ and then optimizes the network parameters $\bs{\theta}$ for these examples.
While networks trained this way typically exhibit good empirical robustness, they remain hard to formally certify and are sometimes vulnerable to stronger attacks \citep{TramerCBM20,Croce020a}.

\paragraph{Certified Training} typically optimizes an upper bound on the inner maximization objective in \cref{eq:rob_opt}. The resulting robust cross-entropy loss $\bc{L}_\text{CE,rob}(\bc{B}^{\epsilon}(\vx),t) = \bc{L}_\text{CE}(\overline{\vy}^\Delta,t)$ is obtained by first computing an upper bound $\overline{\vy}^\Delta$ on the logit differences $\vy^\Delta := \vy - y_t$ with a bound propagation method as described above and then plugging it into the standard cross-entropy loss.

Surprisingly, the imprecise \ibp bounds \citep{MirmanGV18,GowalIBP2018,ShiWZYH21} consistently yield better performance than methods based on tighter approximations \citep{WongSMK18,ZhangCXGSLBH20,balunovic2020Adversarial}. \citet{jovanovic2022paradox} trace this back to the optimization problems induced by the more precise methods becoming intractable to solve. 

However, the heavy regularization that makes \ibp trained networks amenable to certification also severely reduces their standard accuracy. To alleviate the resulting robustness-accuracy trade-off, \emph{all} current state-of-the-art certified training methods combine \ibp and adversarial training by using \ibp bounds only for regularization (\ibpr \citep{PalmaIBPR22}), by only propagating small, adversarially selected regions (\sabr \citep{MuellerEFV22}), using \ibp bounds only for the first layers and \pgd bounds for the remainder of the network (\taps \citep{MaoMFV2023}), or combining losses over adversarial samples and \ibp bounds (\ccibp, \mtlibp \citep{PalmaBDKSL23}).

In light of this surprising dominance of \ibp-based training methods, understanding the regularization \ibp induces and its effect on tightness promises to be a key step towards developing novel and more effective certified training methods.

\section{Understanding \ibp Training}

In this section, we theoretically investigate the relationship between the box bounds obtained by layer-wise propagation, \ie, \ibp, and optimal propagation. We illustrate both in \cref{fig:overview} and note that the latter are sufficient for exact robustness certification (see \cref{lem:opt_box_bound}).
First, we formally define layer-wise (\ibp) and optimal box propagation, before deriving sufficient and necessary conditions under which the resulting bounds become identical. Then, we show that these conditions induce strong regularization, motivating us to introduce the propagation tightness $\tau$ as a relaxed measure of bound precision, which can be efficiently computed globally for deep linear (DLN) and locally for ReLU networks. Based on these results, we first investigate how tightness depends on network architecture at initialization, before showing that it improves with \ibp training. Finally, we demonstrate that even linear dimensionality reduction is inherently imprecise for both optimal and \ibp propagation, making sufficient network width key for tight box bounds. We defer all proofs to \cref{app:proofs}.

\paragraph{Setting} We focus our theoretical analysis on deep linear networks (DLNs), \ie, $\vf(x) = \Pi_{i=1}^L \mW^{(i)} \vx$, popular for theoretical discussion of neural networks \citep{SaxeMG13, JiT19, WuWM19}. While such a reduction of a ReLU network to an overall linear function may seem restrictive, it preserves many interesting properties and allows for theoretical insights, while ReLU networks are theoretically unwieldy. As ReLU networks become linear for fixed activation patterns, the DLN approximation becomes exact for robustness analysis at infinitesimal perturbation magnitudes. Further, DLNs retain the layer-wise structure and joint non-convexity in the weights of different layers of ReLU networks, making them a widely popular analysis tool \citep{Ribeiro0G16}.
After proving key results on DLNs, we will show how they transfer to ReLU networks.

\subsection{Layer-wise and Optimal Box Propagation}
We define the optimal hyper-box approximation $\ibpb(\vf, \B^{\bm{\epsilon}}(\vx))$ as the smallest hyper-box $[\underline{\vz}, \overline{\vz}]$ such that it contains the image $\vf(\vx^\prime)$ of all points $\vx^\prime$ in $\B^{\bm{\epsilon}}(\vx)$, \ie, $\vf(\vx^\prime) \in [\underline{\vz}, \overline{\vz}],\forall \vx^\prime \in \B^{\bm{\epsilon}}(\vx)$. Similarly, we define the layer-wise box approximation as the result of sequentially applying the optimal approximation to every layer individually: $\ibpl(\vf,\B^{\bm{\epsilon}}(\vx)) \coloneqq \ibpb(\mW_L,\ibpb(\cdots, \ibpb(\mW^{(1)},\B^{\bm{\epsilon}}(\vx))))$. We write their upper- and lower-bounds as $[\lbopt, \ubopt]$ and $[\lbbox, \ubbox]$, respectively.
We note that optimal box bounds on the logit differences $\vy^\Delta \coloneqq \vy - y_t$ (instead of on the logits $\vy$ as shown in \cref{fig:overview}) are sufficient for exact robustness verification:
\begin{restatable}{lem}{exactcert}
        \label[lemma]{lem:opt_box_bound}
        Any $\bc{C}^0$ continuous classifier $\vf$, computing the logit difference $y^\Delta_i := y_i - y_t, \forall i \neq t$, is robustly correct on $\B^{\bm{\epsilon}}(\vx)$ if and only if $\ibpb(\vf, \B^{\bm{\epsilon}}(\vx)) \subseteq \R_{< 0}^{c-1}$, \ie $\obar{y}^{\Delta^*}_i < 0 , \forall i \neq t$.
\end{restatable}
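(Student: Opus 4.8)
The plan is to translate both sides of the equivalence into statements about the coordinate-wise suprema of the continuous maps $y^\Delta_i$ over the compact set $\B^{\bm{\epsilon}}(\vx)$. By definition, $\ibpb(\vf, \B^{\bm{\epsilon}}(\vx))$ is the \emph{smallest} hyperbox $[\lbopt, \ubopt]$ containing $\{\vf(\vx') : \vx' \in \B^{\bm{\epsilon}}(\vx)\}$, so its upper corner in coordinate $i$ is exactly $\obar{y}^{\Delta^*}_i = \sup_{\vx' \in \B^{\bm{\epsilon}}(\vx)} y^\Delta_i(\vx')$. Since a hyperbox lies in the open negative orthant $\R^{c-1}_{<0}$ if and only if its upper corner does, the condition $\ibpb(\vf, \B^{\bm{\epsilon}}(\vx)) \subseteq \R^{c-1}_{<0}$ is equivalent to $\obar{y}^{\Delta^*}_i < 0$ for every $i \neq t$. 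On the other side, robust correctness unfolds to the statement that for every $\vx' \in \B^{\bm{\epsilon}}(\vx)$ and every $i \neq t$ the target logit strictly dominates, i.e. $y^\Delta_i(\vx') < 0$.

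For the reverse direction ($\obar{y}^{\Delta^*}_i < 0 \Rightarrow$ robust correctness) I would simply use the defining property of the supremum: $y^\Delta_i(\vx') \leq \obar{y}^{\Delta^*}_i < 0$ holds for every $\vx' \in \B^{\bm{\epsilon}}(\vx)$ and every $i \neq t$, which is precisely robust correctness. This direction needs nothing beyond $\obar{y}^{\Delta^*}_i$ being an upper bound on the image.

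The forward direction is where the continuity and compactness hypotheses become essential, and I expect it to be the only nontrivial step. Robust correctness gives only \emph{pointwise} strict negativity, $y^\Delta_i(\vx') < 0$ for all $\vx'$, which a priori does not force $\sup_{\vx'} y^\Delta_i(\vx') < 0$, since the supremum of a family of strictly negative values may equal $0$. To close this gap I would invoke the extreme value theorem: $\B^{\bm{\epsilon}}(\vx)$ is a closed, bounded $\ell_\infty$-ball and hence compact, and each $y^\Delta_i$ is continuous because $\vf \in \bc{C}^0$, so the supremum is attained at some maximizer $\hat{\vx} \in \B^{\bm{\epsilon}}(\vx)$. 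Then $\obar{y}^{\Delta^*}_i = y^\Delta_i(\hat{\vx}) < 0$ by robust correctness, yielding the claim. The key point I would state explicitly is that compactness is exactly what upgrades the pointwise strict inequality to a strict inequality on the attained maximum; this is precisely why the $\bc{C}^0$ assumption appears in the hypothesis.
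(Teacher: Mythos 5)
Your proposal is correct and follows essentially the same route as the paper's proof: the reverse direction uses the containment of the image of $\B^{\bm{\epsilon}}(\vx)$ in the optimal box, and the forward direction invokes the extreme value theorem on the compact ball to turn pointwise strict negativity into strict negativity of the attained maximum. If anything, your write-up makes the role of compactness more explicit than the paper does (noting that a supremum of strictly negative values could otherwise be $0$), but the underlying argument is identical.
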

For DLNs, we can efficiently compute both optimal $\ibpb$ and layerwise $\ibpl$ box bounds as follows:
\vspace{-3mm}
\begin{restatable}[Box Propagation]{thm}{boxprop}
    \label{lem:box_size_L}
    For an $L$-layer DLN $\vf = \Pi_{k=1}^L \mW^{(k)}$, we obtain the box centres $\dot{\vz}^* = \dot{\vz}^\dagger = \vf(\vx)$ and the radii
    \begin{equation}
        \frac{\ubopt - \lbopt}{2} =  \left| \Pi_{k=1}^L \mW^{(k)} \right| \bm{\epsilon}, \quad \text{and} \quad \frac{\ubbox - \lbbox}{2} = \left(\Pi_{k=1}^L \left|\mW^{(k)} \right|\right) \bm{\epsilon}.
    \end{equation} 
\end{restatable}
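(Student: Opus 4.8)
The plan is to isolate a single elementary fact — how the optimal box transforms under one linear map — and then assemble both claimed formulas from it. Concretely, I first establish the \emph{single-layer lemma}: for any matrix $\mW$ and any input box with centre $\dot{\vz}$ and radius $\bm{\delta} \geq \vzero$, the optimal box $\ibpb(\mW, [\dot{\vz} - \bm{\delta}, \dot{\vz} + \bm{\delta}])$ has centre $\mW \dot{\vz}$ and radius $|\mW| \bm{\delta}$. Both target formulas then follow by applying this lemma once (to the full product matrix) and $L$ times (layer by layer), respectively.

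To prove the single-layer lemma I would use the componentwise-extremum characterization of the smallest enclosing axis-aligned box. Writing an arbitrary input as $\vx' = \dot{\vz} + \bm{\eta}$ with $|\bm{\eta}| \leq \bm{\delta}$, the $i$-th output coordinate is the affine functional $(\mW \vx')_i = (\mW \dot{\vz})_i + \sum_j W_{ij} \eta_j$. Maximizing and minimizing each such functional independently over the box $|\eta_j| \leq \delta_j$ is achieved by the sign choice $\eta_j = \pm \delta_j \, \sign(W_{ij})$, yielding extreme values $(\mW \dot{\vz})_i \pm \sum_j |W_{ij}| \delta_j$. Since each bound is attained by a feasible point, the resulting interval is tight in every coordinate, so the optimal box indeed has centre $\mW \dot{\vz}$ and radius $|\mW| \bm{\delta}$, establishing the lemma.

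For the optimal bound $\ibpb(\vf, \B^{\bm{\epsilon}}(\vx))$, I apply the lemma a single time with $\mW = \mM := \Pi_{k=1}^L \mW^{(k)}$ to the input box of centre $\vx$ and radius $\bm{\epsilon}$, using that the DLN computes the linear map $\vf(\vx') = \mM \vx'$. This immediately gives centre $\dot{\vz}^* = \mM \vx = \vf(\vx)$ and radius $\tfrac{\ubopt - \lbopt}{2} = |\mM| \bm{\epsilon} = \big| \Pi_{k=1}^L \mW^{(k)} \big| \bm{\epsilon}$. For the layer-wise bound $\ibpl$, which by definition applies $\ibpb$ to each $\mW^{(k)}$ in turn, I induct on the layers: the lemma shows that each application sends a box of centre $\dot{\vz}$ and radius $\bm{\delta}$ to one of centre $\mW^{(k)} \dot{\vz}$ and radius $|\mW^{(k)}| \bm{\delta}$. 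Composing these $L$ updates starting from $(\vx, \bm{\epsilon})$ yields centre $\dot{\vz}^\dagger = \big(\Pi_{k=1}^L \mW^{(k)}\big)\vx = \vf(\vx)$ — matching the optimal centre — and radius $\tfrac{\ubbox - \lbbox}{2} = \big(\Pi_{k=1}^L |\mW^{(k)}|\big)\bm{\epsilon}$.

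I do not expect a genuine obstacle here; the content is entirely in the single-layer lemma. The one point deserving care is the \emph{optimality} (smallest-box) half of that lemma: it is not enough that $|\mW|\bm{\delta}$ bounds the output spread, one must exhibit, for each coordinate $i$, a feasible $\bm{\eta}$ attaining the extremum, which the explicit $\sign$ construction provides. It is worth flagging that the two formulas differ only in the placement of the absolute value — $\big|\Pi_k \mW^{(k)}\big|$ versus $\Pi_k |\mW^{(k)}|$ — and that this non-commutativity of $|\cdot|$ with matrix multiplication is precisely the source of the \ibp imprecision analyzed in the remainder of the section.
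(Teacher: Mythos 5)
Your proof is correct and takes essentially the same approach as the paper: both arguments reduce everything to the single-linear-layer propagation fact (centre $\mapsto \mW\dot{\vz}$, radius $\mapsto |\mW|\bm{\delta}$) and then obtain the optimal bound by applying it once to the full product matrix and the layer-wise bound by composing it across layers, the paper merely packaging this as a two-layer lemma followed by induction that groups the first $m$ layers into one. Your version is marginally tidier in that it makes the attainability (smallest-box) argument explicit via the sign construction, which the paper leaves implicit in its computation of $\ibpm$.
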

Comparing the radius computation of the optimal and layer-wise approximations, we observe that the main difference lies in where the element-wise absolute value $|\cdot|$ of the weight matrix is taken. For the optimal box, we first multiply all weight matrices before taking the absolute value $|\Pi_{k=1}^L \mW^{(k)}|$, thus allowing for cancellations of terms of opposite signs. For the layer-wise approximation, in contrast, we first take the absolute value of each weight matrix before multiplying them together $\Pi_{k=1}^L |\mW^{(k)}|$, thereby losing all relational information between variables. Let us now investigate under which conditions layer-wise and optimal bounds become identical.

\subsection{Propagation Invariance and \ibp Bound Tightness}
\label{sec:pi_conditions}
\paragraph{Propagation Invariance} We call a network (globally) \emph{propagation invariant} (PI) if the layer-wise and optimal box over-approximations are identical for every input box.
Clearly, non-negative weight matrices lead to PI networks \citep{LinIWSL22}, as the absolute value in \cref{lem:box_size_L} loses its effect.
However, non-negative weights significantly reduce network expressiveness and performance \citep{ChorowskiZ14}, raising the question of whether they are a necessary condition.
We show that they are not, by deriving a sufficient \emph{and} necessary condition for a two-layer DLN:
\begin{restatable}[Propagation Invariance]{lem}{invariant}
    \label[lemma]{lem:invariant}
    A two-layer DLN $\vf = \mW^{(2)} \mW^{(1)}$ is propagation invariant if and only if for every fixed $(i, j)$, we have $\left| \sum_k W^{(2)}_{i,k} \cdot  W^{(1)}_{k,j} \right| = \sum_k |W^{(2)}_{i,k} \cdot  W^{(1)}_{k,j}|$, \ie, either $W^{(2)}_{i, k} \cdot W^{(1)}_{k, j} \ge 0$ for all $k$ or $W^{(2)}_{i, k} \cdot W^{(1)}_{k, j} \le 0$ for all $k$.
\end{restatable}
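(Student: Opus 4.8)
The plan is to reduce propagation invariance to an entrywise identity between the two radius matrices appearing in \cref{lem:box_size_L}, and then to recognize the resulting per-entry equation as the equality case of the triangle inequality. First I would observe that, by \cref{lem:box_size_L}, the optimal and layer-wise approximations of any input box $\B^{\bm{\epsilon}}(\vx)$ share the same centre $\dot{\vz}^* = \dot{\vz}^\dagger = \vf(\vx)$, so the two boxes coincide if and only if their radii agree, namely $|\mW^{(2)}\mW^{(1)}|\,\bm{\epsilon} = |\mW^{(2)}|\,|\mW^{(1)}|\,\bm{\epsilon}$. Propagation invariance demands this for \emph{every} input box, i.e.\ for every centre $\vx$ and every radius vector $\bm{\epsilon} \ge \vzero$; since the centre drops out, this is exactly the requirement that $|\mW^{(2)}\mW^{(1)}|\,\bm{\epsilon} = |\mW^{(2)}|\,|\mW^{(1)}|\,\bm{\epsilon}$ holds for all $\bm{\epsilon} \ge \vzero$.

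Next I would convert this family of vector identities into a single matrix identity. Both $|\mW^{(2)}\mW^{(1)}|$ and $|\mW^{(2)}|\,|\mW^{(1)}|$ are entrywise nonnegative matrices, and two matrices agreeing on all nonnegative $\bm{\epsilon}$ must be equal: evaluating at the standard basis vectors $\bm{\epsilon} = \ve_j$ recovers column $j$ of each matrix, and the converse is immediate. Hence propagation invariance is equivalent to the entrywise identity $|\mW^{(2)}\mW^{(1)}| = |\mW^{(2)}|\,|\mW^{(1)}|$. Reading off the $(i,j)$ entry yields $\big|\sum_k W^{(2)}_{i,k} W^{(1)}_{k,j}\big|$ on the left and $\sum_k |W^{(2)}_{i,k}|\,|W^{(1)}_{k,j}| = \sum_k |W^{(2)}_{i,k} W^{(1)}_{k,j}|$ on the right, which is precisely the claimed condition.

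Finally I would supply the elementary sign characterization. Setting $a_k := W^{(2)}_{i,k} W^{(1)}_{k,j} \in \R$, the equation $\big|\sum_k a_k\big| = \sum_k |a_k|$ is the equality case of the triangle inequality over the reals, which holds if and only if all $a_k$ share a common sign, i.e.\ $a_k \ge 0$ for all $k$ or $a_k \le 0$ for all $k$. I would establish this by induction on the number of summands, or by comparing $\big(\sum_k |a_k|\big)^2 - \big|\sum_k a_k\big|^2 = \sum_{k \ne l} \big(|a_k a_l| - a_k a_l\big) \ge 0$ term by term, with equality exactly when every product $a_k a_l$ is nonnegative.

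I do not anticipate a serious obstacle; the only step requiring care is the quantifier reduction in the first two paragraphs, namely ensuring that ``identical for every input box'' is faithfully captured by the single matrix identity (via the nonnegativity of both matrices and evaluation at basis vectors) rather than by equality at one particular $\bm{\epsilon}$. The sign condition itself then follows from the standard equality case of the triangle inequality, completing the equivalence.
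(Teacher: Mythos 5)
Your proposal is correct and follows essentially the same route as the paper's proof: both reduce propagation invariance, via the box-propagation formulas (equal centres, radii $|\mW^{(2)}\mW^{(1)}|\bm{\epsilon}$ versus $|\mW^{(2)}||\mW^{(1)}|\bm{\epsilon}$), to the entrywise identity and then invoke the equality case of the triangle inequality. The only difference is that you spell out the quantifier reduction over all input boxes and prove the triangle-inequality equality case explicitly, both of which the paper simply asserts.
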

\paragraph{Conditions for Propagation Invariance} To see how strict the condition described by \cref{lem:invariant} is, we observe that propagation invariance requires the sign of the last element in any two-by-two block in $\mW^{(2)} \mW^{(1)}$ to be determined by the signs of the other three elements:
\begin{restatable}[Non-Propagation Invariance]{thm}{noninvariant}
    \label{thm:invariant-reg}
     Assume $\exists i, i^\prime, j, j^\prime$, such that $W^{(1)}_{\cdot, j}$, $W^{(1)}_{\cdot, j^\prime}$, $W^{(2)}_{i, \cdot}$ and $W^{(2)}_{i^\prime, \cdot}$ are all non-zero. If $(\mW^{(2)} \mW^{(1)})_{i,j} \cdot (\mW^{(2)} \mW^{(1)})_{i, j^\prime} \cdot (\mW^{(2)} \mW^{(1)})_{i^\prime, j} \cdot (\mW^{(2)} \mW^{(1)})_{i^\prime, j^\prime} < 0$, then $\vf = \mW^{(2)} \mW^{(1)}$ is not propagation invariant.
\end{restatable}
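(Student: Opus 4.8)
The plan is to prove the contrapositive: assuming $\vf = \mW^{(2)}\mW^{(1)}$ \emph{is} propagation invariant, I would show that the four-fold product is non-negative, contradicting the hypothesis that it is strictly negative. Throughout I write $P_{i,j} \coloneqq (\mW^{(2)}\mW^{(1)})_{i,j} = \sum_k W^{(2)}_{i,k} W^{(1)}_{k,j}$, and I note at the outset that since the product $P_{i,j}\,P_{i,j'}\,P_{i',j}\,P_{i',j'}$ is strictly negative, each of the four factors must itself be nonzero.

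First I would invoke \cref{lem:invariant}, applied in turn to each of the four index pairs $(i,j)$, $(i,j')$, $(i',j)$ and $(i',j')$. For each such pair it tells me that the summands $W^{(2)}_{i,k} W^{(1)}_{k,j}$ (over $k$) all carry a common sign. Consequently, since $P_{i,j}\neq 0$ there is at least one nonzero summand, and the sign of the whole sum equals the common sign of its nonzero summands: for any index $k$ at which both factors are nonzero, $\sign(P_{i,j}) = \sign(W^{(2)}_{i,k})\,\sign(W^{(1)}_{k,j})$, and analogously for the other three entries.

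The crux is then to exhibit a \emph{single} index $k$ at which all four weights $W^{(2)}_{i,k}$, $W^{(2)}_{i',k}$, $W^{(1)}_{k,j}$ and $W^{(1)}_{k,j'}$ are simultaneously nonzero. Given such a $k$, I would factor each of the four signs through it to obtain
\begin{equation}
\begin{aligned}
&\sign(P_{i,j})\,\sign(P_{i,j'})\,\sign(P_{i',j})\,\sign(P_{i',j'}) \\
&\quad = \sign(W^{(2)}_{i,k})^2\,\sign(W^{(2)}_{i',k})^2\,\sign(W^{(1)}_{k,j})^2\,\sign(W^{(1)}_{k,j'})^2 = 1,
\end{aligned}
\end{equation}
so that the product is strictly positive, contradicting the assumption that it is negative. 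Hence $\vf$ cannot be propagation invariant, which is exactly the claim.

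The main obstacle is securing the common index $k$. The non-vanishing hypothesis on $W^{(1)}_{\cdot,j}, W^{(1)}_{\cdot,j'}, W^{(2)}_{i,\cdot}, W^{(2)}_{i',\cdot}$ guarantees each of the four supports is non-empty, and the fact that each $P$ entry is nonzero guarantees the four relevant row/column support pairs overlap; however, pairwise overlap alone does not force a point common to all four supports. I expect the cleanest route is to read the non-vanishing hypothesis in its strong, entrywise form (every entry of the four vectors is nonzero), under which every $k$ lies in all four supports and the factorization above is immediate. Absent full support, one must instead argue that the shared sign structure enforced by \cref{lem:invariant} propagates a consistent sign through a chain of overlapping supports, and this sign-chaining step is the delicate part of the argument.
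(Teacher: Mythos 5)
Your core argument is correct and is essentially the paper's own proof: both invoke \cref{lem:invariant} on each of the four index pairs, turn propagation invariance into a sign identity per pair, and derive a contradiction with the negativity of the four-fold product. Your packaging — multiplying the four identities at a single common index $k$ so that every weight sign appears squared — is a slightly cleaner variant of what the paper does by chaining sign-pattern equalities (after a WLOG that three entries are positive and one negative) until a column of $\mW^{(1)}$ is forced to vanish.

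The point you leave open, namely which reading of ``non-zero'' is intended, resolves in favor of the strong entrywise one, and under that reading your proof is complete as written, since every index $k$ is then a valid common index. Do not invest in the ``sign-chaining'' rescue you sketch for the weak reading (each vector merely not identically zero): the statement is \emph{false} under that reading, so no argument can close that gap. Concretely, take
\begin{equation*}
\mW^{(2)} = \begin{pmatrix} 1 & 0 \\ 0 & 1 \end{pmatrix}, \qquad \mW^{(1)} = \begin{pmatrix} 1 & 1 \\ 1 & -1 \end{pmatrix}.
\end{equation*}
Every sum $\sum_k W^{(2)}_{i,k} W^{(1)}_{k,j}$ has a single non-zero summand, so the network is propagation invariant by \cref{lem:invariant}; yet $\mW^{(2)}\mW^{(1)} = \mW^{(1)}$ has four non-zero entries whose product is $-1 < 0$, while all four relevant rows and columns are non-zero as vectors. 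This example also shows that pairwise support overlap — which is all that the weak hypothesis together with the non-vanishing of the four entries of $\mW^{(2)}\mW^{(1)}$ gives you — genuinely cannot be upgraded to a common index, so the obstacle you identified is real and insurmountable there. Note that the paper's own proof makes the same implicit assumption: equating the sign pattern of a row of $\mW^{(2)}$ with that of a column of $\mW^{(1)}$ is only meaningful when no entry vanishes. The hypothesis should therefore be read entrywise — consistent with \cref{cor:linear_param}, which states its non-degeneracy condition in exactly that form — and with that reading your proof stands.
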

To obtain a propagation invariant network with weights $\mW^{(2)} \mW^{(1)} \in \mathcal{R}^{d\times d}$, we can thus only choose $2d-1$ (e.g., one row and one column) of the $d^2$ signs freely  (see \cref{cor:linear_param} in \cref*{app:theory}).

The statements of \cref{lem:invariant,thm:invariant-reg} naturally extend to DLNs with more than two layers $L>2$. However, the conditions within \cref{thm:invariant-reg} become increasingly complex and strict as more and more terms need to yield the same sign. Thus, we focus our analysis on $L=2$ for clarity.

\paragraph{\ibp Bound Tightness}
To analyze the tightness of \ibp bounds for networks that do not satisfy the strict conditions for propagation invariance, we relax it to introduce \emph{propagation tightness} as the ratio between the optimal and layer-wise box radii, simply referred to as \emph{tightness} in this paper. 
\begin{mydef} \label{def:tightness}
    Given a DLN $\vf$, we define the global propagation tightness $\boldsymbol{\tau}$ as the ratio between optimal $\ibpb(\vf, \B^{\bm{\epsilon}}(\vx))$ and layer-wise $\ibpl(\vf, \B^{\bm{\epsilon}}(\vx))$ approximation radius, \ie, $\boldsymbol{\tau} = \frac{\lbopt - \ubopt}{\ubbox - \lbbox}$. 
\end{mydef}
Intuitively, tightness measures how much smaller the exact dimension-wise $\ibpb$ bounds are, compared to the layer-wise approximation $\ibpl$, thus quantifying the gap between \ibp certified and true adversarial robustness. When tightness equals $1$, the network is propagation invariant and can be certified exactly with \ibp; when tightness is close to $0$, \ibp bounds become arbitrarily imprecise. We highlight that this is orthogonal to the box diameter $\Delta = \ubbox - \lbbox$, considered by \citet{ShiWZYH21}.

\paragraph{ReLU Networks}  
The nonlinearity of ReLU networks leads to locally varying tightness and makes the computation of optimal box bounds intractable. However, for infinitesimal perturbation magnitudes $\epsilon$, the activation patterns of ReLU networks remain stable, making them locally linear. We thus introduce a local version of tightness around concrete inputs.
\begin{mydef} \label{def:relu_tightness}
    For an $L$-layer ReLU network with weight matrices $\mW^{(k)}$ and activation pattern $\vd^{(k)}(\vx) = \mathds{1}_{\vx^{(k-1)}>0} \in \{0,1\}^{d_k}$ ($1$ for active and $0$ for inactive ReLUs), depending on the input $\vx$, we define its \emph{local tightness} as
    \begin{equation*}
        \bm{\tau} = \frac{\frac{d}{d \epsilon}(\ubopt - \lbopt)\big|_{\epsilon=0}}{\frac{d}{d \epsilon}(\ubbox - \lbbox)\big|_{\epsilon=0}}=
        \frac{\left|\Pi_{k=1}^L \diag(\vd^{(k)}) \mW^{(k)} \right|\bm{1}}{(\Pi_{k=1}^L \diag(\vd^{(k)}) \left|\mW^{(k)} \right|) \bm{1}}.
    \end{equation*}
\end{mydef}
\begin{wrapfigure}[13]{r}{0.29\linewidth}
    \vspace{-5mm}
    \centering
    \includegraphics[width=0.95\linewidth]{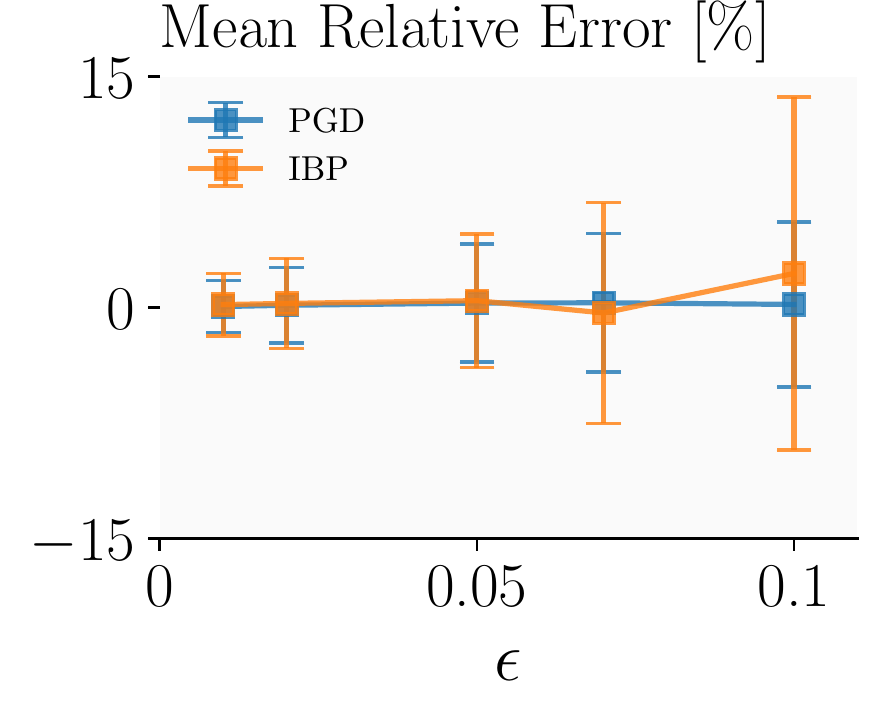}
    \vspace{-5.5mm}
    \caption{Mean relative error between local tightness (\cref{def:relu_tightness}) and true tightness computed with \milp for a \cnnt trained with \pgd or \ibp at $\epsilon=0.05$ on \mnist.}
    \label{fig:tightness_approximation_error}
\end{wrapfigure}
In \cref{def:relu_tightness}, we calculate tightness as the ratio of box size growth rates, evaluated for an infinitesimal input box size $\epsilon$. In this setting, the ReLU network will not have any unstable neurons, making our analysis exact. Only when considering larger perturbation magnitudes will neurons become unstable, making our analysis an approximation of the tightness at that $\epsilon$. 
However, for the networks and perturbation magnitudes typically considered in the literature, only a very small fraction ($\approx 1\%$) of neurons are unstable \citep{MuellerEFV22}. To assess the estimation quality of our local tightness, we show its mean relative error compared to the exact tightness computed with \milp for a small \cnnt in \cref{fig:tightness_approximation_error} for \mnist and in \cref{fig:tightness_approximation_error_cifar} for \cifar. We find that for perturbations smaller than those used during training ($\epsilon \leq 0.05$) relative errors are extremely small ($<0.5\%$), and only increase slowly after, reaching $2.2\%$ at $\epsilon=0.1$.

\subsection{Tightness at Initialization} \label{sec:initialization}
We first investigate the (expected) tightness $\tau = \tfrac{\E_{\mathcal{D}_{\boldsymbol{\theta}}}(\lbopt - \ubopt)}{\E_{\mathcal{D}_{\boldsymbol{\theta}}}(\ubbox - \lbbox)}$ (independent of the dimension due to symmetry) at initialization, \ie, w.r.t. a weight distribution $\mathcal{D}_{\boldsymbol{\theta}}$.
Let us consider a two-layer DLN at initialization, \ie, with i.i.d. weights following a zero-mean Gaussian distribution $\mathcal{N}(0, \sigma^2)$ with an arbitrary but fixed variance $\sigma^2$ \citep{GlorotB10,HeZRS15}. 
\begin{restatable}[Initialization Tightness w.r.t.~Width]{lem}{initwidth}
    \label{thm:tightnessinit}
    Given a 2-layer DLN with weight matrices $\mW^{(1)} \in \mathcal{R}^{d_1\times d_0}$, $\mW^{(2)} \in \mathcal{R}^{d_2\times d_1}$ with i.i.d. entries from $\mathcal{N}(0, \sigma_1^2)$ and $\mathcal{N}(0, \sigma_2^2)$ (together denoted as $\boldsymbol{\theta}$), we obtain the expected tightness $\tau (d_1) = \frac{\E_{\boldsymbol{\theta}}(\lbopt - \ubopt)}{\E_{\boldsymbol{\theta}}(\ubbox - \lbbox)} = \frac{\sqrt{\pi} \, \Gamma(\frac{1}{2}(d_1+1))}{d_1 \Gamma(\frac{1}{2}d_1)} \in {\Theta}(\frac{1}{\sqrt{d_1}})$.
\end{restatable}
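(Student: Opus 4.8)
The plan is to reduce both radii to the expectation of a single entry of the product matrix using the closed forms of \cref{lem:box_size_L}, and then to exploit a conditioning trick that turns the product of two Gaussian matrices into a conditionally Gaussian scalar. Concretely, \cref{lem:box_size_L} gives the optimal radius $\tfrac12(\ubopt-\lbopt)=\left|\mW^{(2)}\mW^{(1)}\right|\bm{\epsilon}$ and the layer-wise radius $\tfrac12(\ubbox-\lbbox)=\left(\left|\mW^{(2)}\right|\left|\mW^{(1)}\right|\right)\bm{\epsilon}$. Taking $\bm{\epsilon}=\epsilon\bm{1}$, both radii are $\epsilon$ times a matrix-vector product whose $i$-th component is a sum over the output columns $j$. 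Since all weights are i.i.d.\ and zero-mean, every entry of $\mW^{(2)}\mW^{(1)}$ shares the same marginal law, so the expectation of each radius component is identical across dimensions; I may therefore fix a single output row $i$, and the $\epsilon$ and the column count $d_0$ factor out and cancel in the ratio.

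For the denominator I would use independence together with the half-normal mean $\E|X|=\sigma\sqrt{2/\pi}$ for $X\sim\mathcal{N}(0,\sigma^2)$, giving $\E\big[(\left|\mW^{(2)}\right|\left|\mW^{(1)}\right|)_{ij}\big]=\sum_k \E|W^{(2)}_{ik}|\,\E|W^{(1)}_{kj}|=d_1\,\sigma_1\sigma_2\,\tfrac{2}{\pi}$. The numerator is the delicate part: the entry $S:=(\mW^{(2)}\mW^{(1)})_{ij}=\sum_k W^{(2)}_{ik}W^{(1)}_{kj}$ is a sum of products of Gaussians and is \emph{not} itself Gaussian, so $\E|S|$ cannot be read off directly. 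The key step is to condition on the column $\vb:=W^{(1)}_{\cdot,j}$: conditionally on $\vb$, the quantity $S$ is a linear combination of the independent Gaussians $W^{(2)}_{ik}$, hence $S\mid\vb\sim\mathcal{N}(0,\sigma_2^2\|\vb\|^2)$ and $\E[\,|S|\mid\vb\,]=\sigma_2\|\vb\|\sqrt{2/\pi}$. Taking the outer expectation reduces the problem to $\E\|\vb\|$, and since $\|\vb\|/\sigma_1$ is a chi distribution with $d_1$ degrees of freedom, $\E\|\vb\|=\sigma_1\sqrt{2}\,\Gamma(\tfrac12(d_1+1))/\Gamma(\tfrac12 d_1)$, so $\E|S|=\sigma_1\sigma_2\,\tfrac{2}{\sqrt{\pi}}\,\Gamma(\tfrac12(d_1+1))/\Gamma(\tfrac12 d_1)$.

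Assembling the ratio, all of $d_0,\epsilon,\sigma_1,\sigma_2$ cancel and I obtain exactly
\[
\tau(d_1)=\frac{\sigma_1\sigma_2\,\tfrac{2}{\sqrt{\pi}}\,\Gamma(\tfrac12(d_1+1))/\Gamma(\tfrac12 d_1)}{d_1\,\sigma_1\sigma_2\,\tfrac{2}{\pi}}=\frac{\sqrt{\pi}\,\Gamma(\tfrac12(d_1+1))}{d_1\,\Gamma(\tfrac12 d_1)},
\]
matching the claimed expression. For the $\Theta(1/\sqrt{d_1})$ asymptotics I would invoke Wendel's (or Gautschi's) inequality, which yields $\Gamma(\tfrac12(d_1+1))/\Gamma(\tfrac12 d_1)=\Theta(\sqrt{d_1})$ via $\Gamma(x+\tfrac12)/\Gamma(x)\sim\sqrt{x}$ as $x\to\infty$ with $x=d_1/2$; dividing by $d_1$ and multiplying by $\sqrt{\pi}$ then sandwiches $\tau(d_1)$ between constant multiples of $d_1^{-1/2}$, with leading constant $\sqrt{\pi/2}$.

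I expect the conditioning step that reveals the conditional Gaussianity of $S$, and thereby the hidden chi distribution governing $\|\vb\|$, to be the crux: a direct attack on $\E|S|$ as the absolute value of a sum of Gaussian products looks intractable and would not obviously produce a Gamma-function closed form. Once the conditioning is in place, the remaining ingredients are the standard half-normal mean and the chi-distribution mean, and the exact cancellation of all scale parameters is purely mechanical.
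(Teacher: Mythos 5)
Your proof is correct and takes essentially the same route as the paper's: both compute the layer-wise radius via independence and the half-normal mean $\E|\mathcal{N}(0,1)|=\sqrt{2/\pi}$, and both handle the optimal radius with the same conditioning trick---fixing the first-layer column so that the product-matrix entry becomes conditionally Gaussian with variance proportional to $\|\vb\|^2$, reducing the computation to the chi-distribution mean $\E\sqrt{\chi^2(d_1)}=\sqrt{2}\,\Gamma(\tfrac12(d_1+1))/\Gamma(\tfrac12 d_1)$---before cancelling all scale factors in the ratio and invoking the Gamma-ratio asymptotics $\Gamma(x+\tfrac12)/\Gamma(x)\sim\sqrt{x}$. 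The only cosmetic difference is that the paper phrases the conditioning as the law of total expectation over the entries $Y_k=W^{(1)}_{k,j}$ rather than over the vector $\vb$, and it derives the tightness as the reciprocal of the computed ratio.
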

Tightness at initialization, thus, decreases quickly with internal width ($\Theta(\frac{1}{\sqrt{d_1}})$), e.g., by a factor of $\tau(500) \approx 0.056$ for the penultimate layer of the popular \cnns \citep{GowalIBP2018,ZhangCXGSLBH20}. %
It, further, follows directly that tightness will decrease exponentially w.r.t. network depth.
\begin{restatable}[Initialization Tightness w.r.t. Depth]{cor}{initdepth}
    \label[corollary]{cor:exp_growth}
	The expected tightness of an $L$-layer DLN $\vf$ with minimum internal dimension $d_\text{min}$ is at most $\tau \leq \tau(d_\text{min})^{\lfloor \frac{L}{2} \rfloor}$ at initialization.
\end{restatable}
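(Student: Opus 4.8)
The plan is to leverage \cref{lem:box_size_L}, which gives the optimal and layer-wise radii as $\left|\Pi_{k=1}^L \mW^{(k)}\right|\bm{\epsilon}$ and $\left(\Pi_{k=1}^L |\mW^{(k)}|\right)\bm{\epsilon}$ respectively, and to reduce the $L$-layer tightness to a product of $\lfloor L/2 \rfloor$ two-layer tightnesses, each bounded by $\tau(d_\text{min})$ via \cref{thm:tightnessinit}. Because the i.i.d.\ Gaussian initialization is coordinate-symmetric, the expected radii are identical in every output coordinate, so it suffices to track the scalar
\begin{equation*}
	\tau = \frac{\E_{\bm{\theta}}\big[\left|\Pi_{k=1}^L \mW^{(k)}\right|\,\bm{\epsilon}\big]}{\E_{\bm{\theta}}\big[\left(\Pi_{k=1}^L |\mW^{(k)}|\right)\,\bm{\epsilon}\big]}.
\end{equation*}

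First I would group the $L$ factors into $\lfloor L/2 \rfloor$ consecutive pairs (leaving one unpaired matrix when $L$ is odd) and apply the entrywise triangle inequality $|\mA\mB| \le |\mA|\,|\mB|$ repeatedly to obtain, for every realization of the weights, $\left|\Pi_{k=1}^L \mW^{(k)}\right| \le \Pi_p \left|\mW^{(2p-1)}\mW^{(2p)}\right|$ entrywise, where the product over $p$ respects the original layer ordering. Since $\bm{\epsilon} \ge \vzero$ and expectation is monotone, this upper-bounds the numerator by $\E_{\bm{\theta}}\big[\Pi_p |\mW^{(2p-1)}\mW^{(2p)}|\,\bm{\epsilon}\big]$, while the denominator is \emph{exactly} $\E_{\bm{\theta}}\big[\Pi_p (|\mW^{(2p-1)}|\,|\mW^{(2p)}|)\,\bm{\epsilon}\big]$ under the same (associativity-preserving) regrouping.

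The crux is then to factorize both expectations across pairs. Because the weight matrices of distinct pairs are independent, the expectation of the matrix product equals the product of the per-pair expectations; and because each pair's entries are i.i.d.\ zero-mean Gaussians, both $\E[|\mW^{(2p-1)}\mW^{(2p)}|]$ and $\E[|\mW^{(2p-1)}|\,|\mW^{(2p)}|]$ are, by coordinate symmetry, constant multiples $a_p$ and $b_p$ of the all-ones matrix $\bm{J}_p$. The shared all-ones structure means the products $(\Pi_p a_p \bm{J}_p)\bm{\epsilon}$ and $(\Pi_p b_p \bm{J}_p)\bm{\epsilon}$ differ only by the scalar factor $\Pi_p (a_p/b_p)$, with all dimension-dependent terms cancelling between numerator and denominator. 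By construction $a_p/b_p$ is precisely the (scale-free) two-layer tightness of pair $p$ with inner width $d_p$, i.e.\ $\tau(d_p)$ from \cref{thm:tightnessinit}, and an unpaired leftover layer contributes a trivial factor $\tau = 1$. Hence $\tau \le \Pi_p \tau(d_p)$.

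Finally, since every inner width satisfies $d_p \ge d_\text{min}$ and $\tau(\cdot)$ is monotonically decreasing (clear from its $\Theta(1/\sqrt{d})$ closed form), each factor obeys $\tau(d_p) \le \tau(d_\text{min})$, yielding $\tau \le \tau(d_\text{min})^{\lfloor L/2 \rfloor}$. The main obstacle I anticipate is the bookkeeping in the factorization step: one must verify carefully that the independence-based factorization together with coordinate symmetry collapses both expected products onto the same all-ones vector, so that the dimension factors cancel exactly and only the clean product of per-pair ratios survives. A secondary technical point is confirming monotonicity of $\tau$ over the full admissible range of widths rather than only asymptotically.
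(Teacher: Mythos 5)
Your proof is correct and takes essentially the same route as the paper: pair consecutive layers, bound the optimal radius via the entrywise inequality $|\mA\mB|\le|\mA|\,|\mB|$ applied at the pair level, invoke \cref{thm:tightnessinit} once per pair, and let any unpaired layer contribute a factor of $1$ (your independence-based factorization of the two expectations just makes explicit what the paper compresses into the remark that output box size is proportional to input box size). The one step you flag as needing care---monotonicity of $\tau(\cdot)$ in the width, which indeed does not follow from the $\Theta(1/\sqrt{d})$ asymptotics alone---is precisely \cref{lem:property} in the paper, so invoking that lemma closes the argument.
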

This result is independent of the variance $\sigma_1^2,\sigma_2^2$. Thus, tightness at initialization can not be increased by scaling $\sigma^2$, as proposed by \citet{ShiWZYH21} to achieve constant box radius over network depth. 

\paragraph{ReLU Networks}
We extend \cref{thm:tightnessinit} to two-layer ReLU networks (\cref{cor:2_layer_relu_init} in \cref{app:theory}), obtain an expected tightness of $\sqrt{2}\tau(d_1)$, and empirically validate it in \cref{sec:network_config}.

\subsection{IBP Training Increases Tightness}
We now show theoretically that \ibp training increases this tightness.
To this end, we again consider a DLN with layer-wise propagation matrix $\ibpW = \Pi_{i=1}^L |\mW^{(i)}|$ and optimal propagation matrix $\optW = |\Pi_{i=1}^L \mW^{(i)}|$, yielding the expected risk for \ibp training as $R(\bm{\epsilon}) = \E_{\vx,y} \L(\ibpl(\vf, \B^{\bm{\epsilon}}(\vx)), y)$.%

\begin{restatable}[IBP Training Increases Tightness]{thm}{ibptraining}
    \label{thm:radius_tightness}
    Assume homogenous tightness, \ie, $\optW  = \tau\ibpW$, and $\frac{\|\nabla_\theta \optW_{ij}\|_2}{\optW_{ij}} \le \frac{1}{2} \frac{\|\nabla_\theta \ibpW_{ij}\|_2}{\ibpW_{ij}}$ for all $i,j$, then, the gradient difference between the IBP and standard loss is aligned with an increase in tightness, \ie, $\langle \nabla_\theta (R(\bm{\epsilon}) - R(0)), \nabla_\theta \tau \rangle \le 0$ for all $\bm{\eps}>0$. 
\end{restatable}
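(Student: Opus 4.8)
The plan is to differentiate the two risks with the chain rule, isolate the part of the gradient that is specific to IBP training, and reduce the whole claim to a single per-entry inequality that the two hypotheses of \cref{thm:radius_tightness} are precisely engineered to produce. By \cref{lem:box_size_L} the layer-wise box has center $\vf(\vx)$ and radius $\ibpW\bm{\epsilon}$, so the robust loss evaluates the cross-entropy at the upper bound $\overline{\vy}^\Delta = \vy^\Delta + \vr$, where each component $r_k$ is a non-negative combination of the entries of $\ibpW$ scaled by $\bm{\epsilon}$, whereas $R(0)$ evaluates it at $\vy^\Delta$. Writing $p_k = \partial \L / \partial \overline{y}^\Delta_k \ge 0$ for the softmax weights, the chain rule yields the clean split
\[
\nabla_\theta\big(R(\bm{\epsilon})-R(0)\big) = \E\Big[\textstyle\sum_k p_k^{(\epsilon)}\,\nabla_\theta r_k\Big] + \E\Big[\textstyle\sum_k \big(p_k^{(\epsilon)}-p_k^{(0)}\big)\,\nabla_\theta y^\Delta_k\Big],
\]
where the first (\emph{radius}) term flows through $\ibpW$ and the second (\emph{center}) term flows through the signed network matrix $\Pi_k\mW^{(k)}$, hence through $\optW=|\Pi_k\mW^{(k)}|$ up to sign factors.

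The heart of the argument is the entrywise estimate $\langle \nabla_\theta \ibpW_{ij}, \nabla_\theta \tau\rangle \le 0$. Under homogeneity $\optW=\tau\ibpW$ we have $\tau = \optW_{ij}/\ibpW_{ij}$ for every $(i,j)$, so differentiating gives $\nabla_\theta\tau = \tfrac{1}{\ibpW_{ij}}\big(\nabla_\theta\optW_{ij} - \tau\,\nabla_\theta\ibpW_{ij}\big)$. By Cauchy--Schwarz and the relative-gradient hypothesis, which under homogeneity reads $\|\nabla_\theta\optW_{ij}\|_2 \le \tfrac12\tau\,\|\nabla_\theta\ibpW_{ij}\|_2$, the cross term satisfies $\langle \nabla_\theta\ibpW_{ij},\nabla_\theta\optW_{ij}\rangle \le \tfrac12\tau\,\|\nabla_\theta\ibpW_{ij}\|_2^2$; subtracting $\tau\,\|\nabla_\theta\ibpW_{ij}\|_2^2$ leaves the strictly non-positive remainder $-\tfrac{\tau}{2\ibpW_{ij}}\|\nabla_\theta\ibpW_{ij}\|_2^2 \le 0$. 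The factor $\tfrac12$ is exactly what forces a negative remainder rather than a sign-indefinite one. Since the radius term is a non-negative combination of the $\nabla_\theta\ibpW_{ij}$ (as $\bm{\epsilon}>0$ and $p_k\ge 0$), its inner product with $\nabla_\theta\tau$ is automatically $\le 0$.

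The main obstacle is controlling the center term $\langle \sum_k(p_k^{(\epsilon)}-p_k^{(0)})\nabla_\theta y^\Delta_k,\ \nabla_\theta\tau\rangle$, whose sign is a priori indefinite. Here the same two hypotheses give only the two-sided bound $|\langle\nabla_\theta\optW_{ij},\nabla_\theta\tau\rangle| = O\!\big(\tau^2\|\nabla_\theta\ibpW_{ij}\|_2^2/\ibpW_{ij}\big)$, in contrast to the definite $O(\tau)$ magnitude of the radius contribution. Since the probability shift $p^{(\epsilon)}-p^{(0)}$ is itself $O(\vr)=O(\bm{\epsilon})$, both contributions scale identically in $\bm{\epsilon}$, so the decisive comparison is in the tightness: the negative radius contribution is $O(\tau)$ while the indefinite center contribution is $O(\tau^2)$. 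In the relevant regime of small tightness --- precisely where IBP bounds are loose and the statement is meaningful --- the radius term dominates and the sum is $\le 0$. Making this domination uniform in $\bm{\epsilon}$, i.e.\ bounding the center/probability-shift term against the radius term across all $\bm{\epsilon}>0$, is the step I expect to require the most care; the remainder is routine chain-rule bookkeeping together with the Cauchy--Schwarz computation above.
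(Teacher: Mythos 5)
Your radius-term analysis is correct and, in substance, it \emph{is} the paper's proof. Both arguments reduce the theorem to the same per-entry inequality $\langle\nabla_\theta\ibpW_{ij},\nabla_\theta\tau\rangle\le 0$: the paper states it in the equivalent form $\tau\langle\nabla_\theta\tau,\nabla_\theta\optW_{ij}\rangle\le\optW_{ij}\|\nabla_\theta\tau\|_2^2$ (equivalent because $\nabla_\theta\ibpW_{ij}=\nabla_\theta(\optW_{ij}/\tau)=(\tau\nabla_\theta\optW_{ij}-\optW_{ij}\nabla_\theta\tau)/\tau^2$), and proves it by expanding $\|\nabla_\theta\log\tau\|_2^2$ and using $\|\nabla_\theta\log\ibpW_{ij}\|_2\ge 2\|\nabla_\theta\log\optW_{ij}\|_2$; your direct substitution of $\nabla_\theta\tau=\tfrac{1}{\ibpW_{ij}}(\nabla_\theta\optW_{ij}-\tau\nabla_\theta\ibpW_{ij})$ followed by Cauchy--Schwarz is shorter and arguably cleaner. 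Your observation that the radius contribution is a non-negative combination of the $\nabla_\theta\ibpW_{ij}$ (weights $p_k^{(\epsilon)}\epsilon_j\ge 0$) then closes that half exactly as the paper does, whose non-negative weights are $\Delta\bm{\epsilon}>0$ and $\nabla_\vu g\ge\bm{0}$.

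The genuine gap is the center term, and it cannot be closed from the theorem's hypotheses. The two assumptions constrain only the gradients of $\optW$ and $\ibpW$ relative to their own magnitudes; they say nothing about what the center term depends on, namely the curvature of the cross-entropy (how much $p_k$ shifts per unit of radius) and the data scale entering $\nabla_\theta y^\Delta_k$. Your proposed domination---a negative radius contribution of order $\tau$ against an indefinite center contribution of order $\tau^2$---therefore only holds once $\tau$ is small relative to an uncontrolled constant, i.e., it proves an asymptotic small-$\tau$ statement rather than the theorem, which asserts the inequality for every $\bm{\epsilon}>0$ with no restriction on $\tau$. Worse, the regime where the theorem matters most is the opposite one: IBP training drives $\tau$ toward $1$ (the paper reports $\tau\approx 0.98$ after training), exactly where the domination collapses. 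The reason the paper never meets this obstacle is structural: after Taylor-expanding $L_2-L_1=\Delta\bm{\epsilon}^\top\ibpW\nabla_\vu g$, its proof applies $\nabla_\theta$ only to $\ibpW=\optW/\tau$ while holding the loss weights $\nabla_\vu g$ fixed, so the probability-shift term you isolated (which is of the same order in $\Delta\bm{\epsilon}$) never appears in its computation. Your exact chain-rule decomposition exposes precisely the term that treatment drops; to finish a proof along your lines one would need an additional hypothesis bounding the loss curvature or the center gradients $\nabla_\theta y^\Delta_k$ against the radius gradients $\nabla_\theta\ibpW_{ij}$.
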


\subsection{Network Width and Tightness after Training}

Many high-dimensional computer vision datasets were shown to have low intrinsic data dimensionality \citep{PopeZAGG21}. Thus, we study the reconstruction loss of a linear embedding into a low-dimensional subspace as a proxy for performance and find that tightness decreases with the width $w$ of a bottleneck layer as long as it is smaller than the data-dimensionality $d$, \ie, $w \ll d$. Further, while reconstruction becomes lossless for points as soon as the width $w$ reaches the intrinsic dimension $k$ of the data, even optimal box propagation requires a width of at least the original data dimension $d$ to achieve loss-less reconstruction.
For a $k$-dimensional data distribution, linearly embedded into a $d$ dimensional space with $d \gg k$, the data matrix $X$ has a low-rank eigendecomposition $\Var(X) = U \Lambda U^\top$ with $k$ non-zero eigenvalues. %
The optimal reconstruction $\hat{X} = U_k U^\top_k X$ is exact by choosing $U_k$ as the $k$ columns of $U$ with non-zero eigenvalues. Yet, box propagation is imprecise:

\begin{restatable}[Box Reconstruction Error]{thm}{embedding}
    \label{thm:ibp_reconstruction}
    Consider the linear embedding and reconstruction $\hat{\vx} = U_k U_k^\top \vx$ of a $d$ dimensional data distribution $\vx \sim \bc{X}$ into a $k$ dimensional space with $d\gg k$ and eigenmatrices $U$ drawn uniformly at random from the orthogonal group. Propagating the input box $\B^{\bm{\eps}}(\vx)$ layer-wise and optimally, thus, yields $\B^{\bm{\delta}^\dagger}\!(\hat{\vx})$, and $\B^{\bm{\delta}^*}\!(\hat{\vx})$, respectively. 
    Then, we have, (i) $\E(\delta_i / \epsilon) = ck \in \Theta(k)$ for a positive constant $c$ depending solely on $d$ and $c \rightarrow \frac{2}{\pi} \approx 0.64$ for large $d$; and (ii) $\E(\delta^*_i / \epsilon) \rightarrow \frac{2}{\sqrt{\pi}} \frac{\Gamma(\frac{1}{2}(k+)}{\Gamma(\frac{1}{2}k)} \in \Theta(\sqrt{k})$.
\end{restatable}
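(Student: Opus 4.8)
The plan is to reduce everything to the two-layer box-propagation formula of \cref{lem:box_size_L}, applied to the embedding weight $\mW^{(1)} = U_k^\top \in \R^{k\times d}$ and the reconstruction weight $\mW^{(2)} = U_k \in \R^{d\times k}$, whose product is the projector $M = U_k U_k^\top$. For a uniform input radius $\bm{\epsilon} = \epsilon\bm{1}$, \cref{lem:box_size_L} gives the optimal radius $\delta_i^*/\epsilon = \sum_{j=1}^d |M_{ij}|$ and the layer-wise radius $\delta_i^\dagger/\epsilon = (|U_k|\,|U_k^\top|\,\bm 1)_i = \sum_{j=1}^d\sum_{l=1}^k |U_{il}|\,|U_{jl}|$. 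Writing $r_i \in \R^k$ for the first $k$ entries of row $i$ of $U$, we have $M_{ij} = \langle r_i, r_j\rangle$, so both claims become expectations of sums of (absolute) entries of a Haar-random orthogonal matrix, which is what I would compute.

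For part (i), I would use the exchangeability of the columns of a Haar-orthogonal matrix to write $\E(\delta_i^\dagger/\epsilon) = k\,\E\!\big[|U_{i1}|\sum_{j=1}^d |U_{j1}|\big] =: ck$, so that $c$ depends only on the law of a single column, i.e.\ only on $d$. Separating the diagonal from the off-diagonal terms yields $c = \tfrac1d + (d-1)\,\E[|X_1||X_2|]$, where $X$ is uniform on $S^{d-1}$. I would evaluate $\E[|X_1||X_2|]$ through the representation $X = G/\|G\|$ with $G\sim\mathcal N(0,I_d)$ (or a Beta integral over the sphere); letting $d\to\infty$, the coordinates $X_1,X_2$ decouple into independent $\mathcal N(0,1/d)$ variables with $\E[|X_1|]=\sqrt{2/(\pi d)}$, so $\E[|X_1||X_2|]\to 2/(\pi d)$ and hence $c\to \tfrac2\pi$. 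This gives $\E(\delta_i^\dagger/\epsilon)=ck\in\Theta(k)$.

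For part (ii), I would condition on the row $r_i$. In the large-$d$ regime, where the rescaled entries $\sqrt d\,U_{il}$ converge to i.i.d.\ standard Gaussians and distinct rows become independent, the off-diagonal inner product $\langle r_i, r_j\rangle = \sum_{l=1}^k U_{il}U_{jl}$ is conditionally $\mathcal N(0,\|r_i\|^2/d)$, whence $\E[\,|\langle r_i, r_j\rangle|\mid r_i\,]\to \sqrt{2/(\pi d)}\,\|r_i\|$. Taking expectations and using $\|r_i\|\approx d^{-1/2}\chi_k$ with $\E[\chi_k]=\sqrt2\,\Gamma(\tfrac{k+1}2)/\Gamma(\tfrac k2)$ gives $\E|\langle r_i, r_j\rangle|\to \tfrac{2}{\sqrt\pi d}\,\Gamma(\tfrac{k+1}2)/\Gamma(\tfrac k2)$. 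Summing the $d-1$ off-diagonal terms (the diagonal term $\E\|r_i\|^2=k/d\to0$ since $d\gg k$) produces the stated limit $\tfrac{2}{\sqrt\pi}\,\Gamma(\tfrac{k+1}2)/\Gamma(\tfrac k2)$, and the Gamma-ratio asymptotic $\Gamma(\tfrac{k+1}2)/\Gamma(\tfrac k2)\in\Theta(\sqrt k)$ yields the $\Theta(\sqrt k)$ growth.

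The technical heart — and the step I expect to require the most care — is justifying the large-$d$ Gaussian approximation to the Haar measure rigorously, i.e.\ quantifying the error in replacing $\sqrt d\,U_{il}$ by i.i.d.\ Gaussians and in treating distinct rows as independent, so that the stated limits (not merely leading-order heuristics) hold. I would handle this by realizing $U_k$ as the Gram--Schmidt orthonormalization of $k$ i.i.d.\ Gaussian vectors in $\R^d$, which are already nearly orthonormal for $d\gg k$, and bounding the resulting $O(1/\sqrt d)$ corrections; the exactness of the diagonal contributions ($\E[U_{il}^2]=1/d$) and the column-exchangeability argument in part (i) are unaffected by this and remain exact for every finite $d$.
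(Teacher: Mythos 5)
Your proposal is correct and follows essentially the same route as the paper's proof: both reduce to the two-layer box-propagation formula of \cref{lem:box_size_L}, use column exchangeability and the diagonal/off-diagonal split to get $c = \tfrac{1}{d} + (d-1)\,\E|X_1 X_2| \to \tfrac{2}{\pi}$ for part (i), and use row conditioning with the Gaussian limit of Haar coordinates plus the $\chi_k$ moment $\E[\chi_k] = \sqrt{2}\,\Gamma(\tfrac{k+1}{2})/\Gamma(\tfrac{k}{2})$ for part (ii). The only cosmetic difference is in how the large-$d$ approximation is justified (you propose Gram--Schmidt on i.i.d.\ Gaussians, the paper cites a Kolmogorov-distance convergence rate), which does not change the argument.
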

Intuitively, \cref{thm:ibp_reconstruction} implies that, while input points can be embedded into and reconstructed from a $k$ dimensional space losslessly, box propagation will yield a box growth of $\Theta(\sqrt{k})$ for optimal and $\Theta(k)$ for layer-wise propagation. However, with $k=d$, we can choose $U_k$ to be an identity matrix, thus obtaining lossless "reconstruction", highlighting the importance of network width.

\section{Empirical Evaluation Analysis}
\begin{figure}
    \begin{minipage}[c]{.64\linewidth}
        \centering
        \vspace{-4mm}
        \begin{subfigure}{.48\linewidth}
            \centering
            \includegraphics[width=\linewidth]{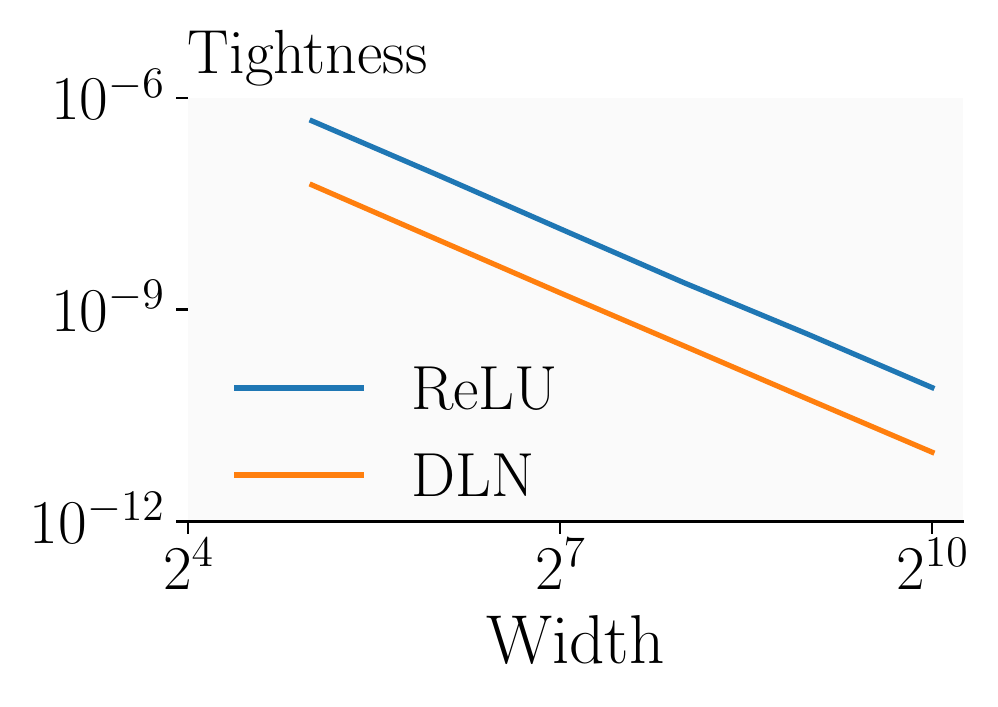}
        \end{subfigure}
        \hfil
        \begin{subfigure}{.48\linewidth}
            \centering
            \includegraphics[width=\linewidth]{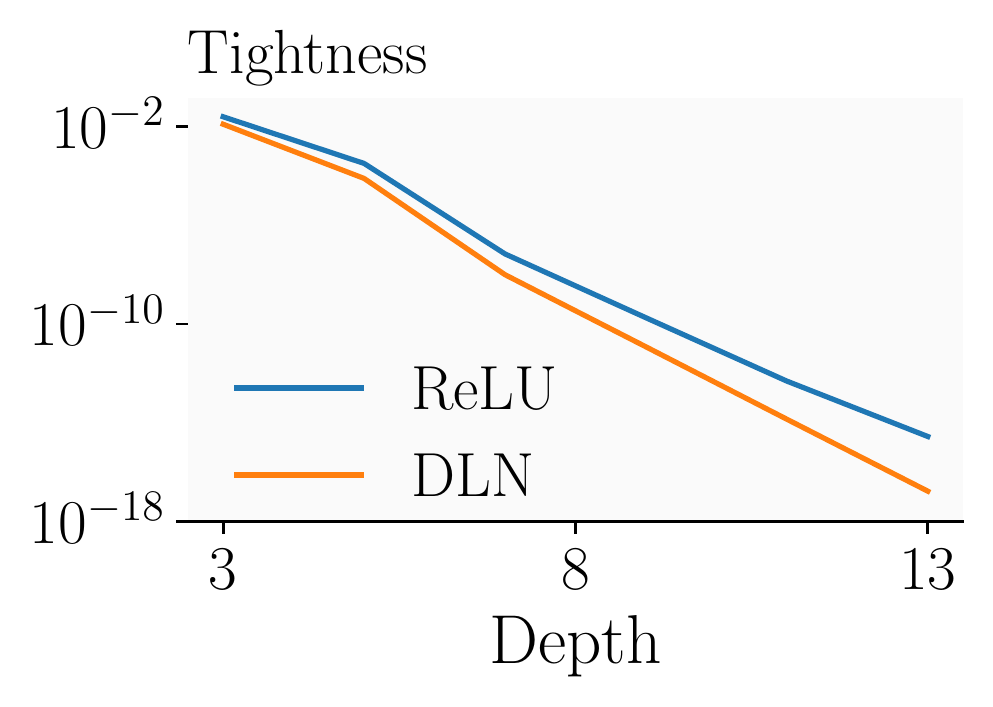}
        \end{subfigure}
        \vspace{-3mm}

        \caption{Dependence of tightness at initialization on width (left) and depth (right) for  a \cnns and \cifar.}
        \label{fig:init_scaling}
            
    \end{minipage}
    \hfil
    \begin{minipage}[c]{.31\linewidth}
        \centering
        \vspace{-4mm}
        \includegraphics[width=0.96\linewidth]{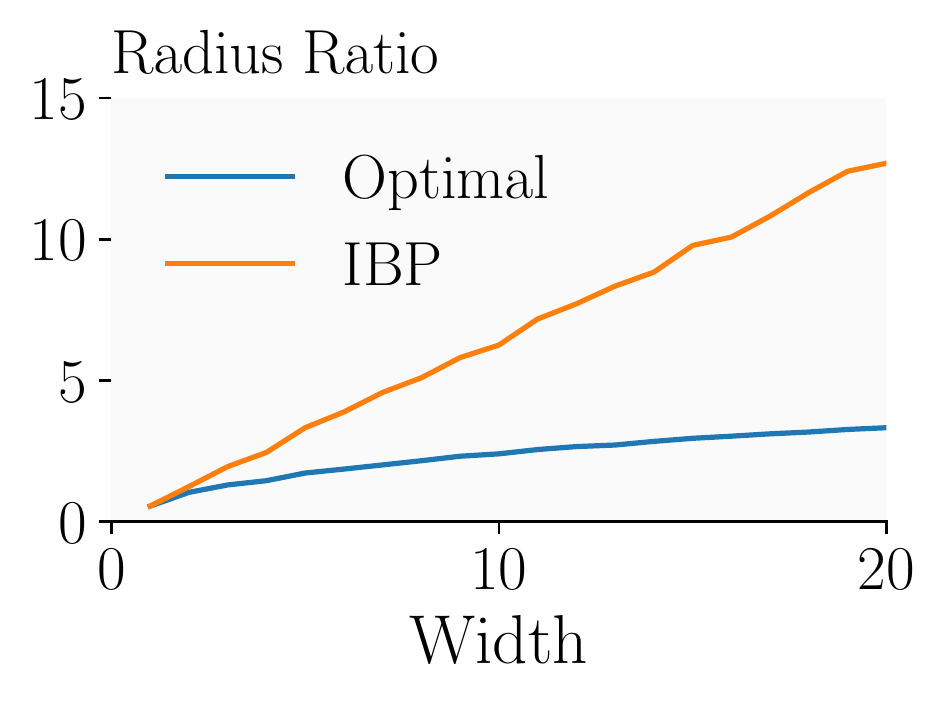}
        \vspace{-3mm}
        \caption{Box reconstruction error over bottleneck width $w$.} 
        \label{fig:linear_reconstruction}
    \end{minipage}
    \vspace{-4mm}
\end{figure}

Now, we conduct an empirical study of \ibp-based certified training, leveraging our novel tightness metric and specifically its local variant (see \cref{def:relu_tightness}) to gain a deeper understanding of these methods and confirm the applicability of our theoretical analysis to ReLU networks.
For certification, we use \mnbab \citep{FerrariMJV22}, a state-of-the-art verifier, and defer further details to \cref{app:exp_details}.

\subsection{Network Architecture and Tightness} \label{sec:network_config}
\begin{wrapfigure}[11]{r}{0.56\linewidth}
    \vspace{-5.0mm}
    \centering
    \begin{minipage}[t]{0.45\linewidth}
        \begin{subfigure}{1.0\linewidth}
            \includegraphics[width=.88\linewidth]{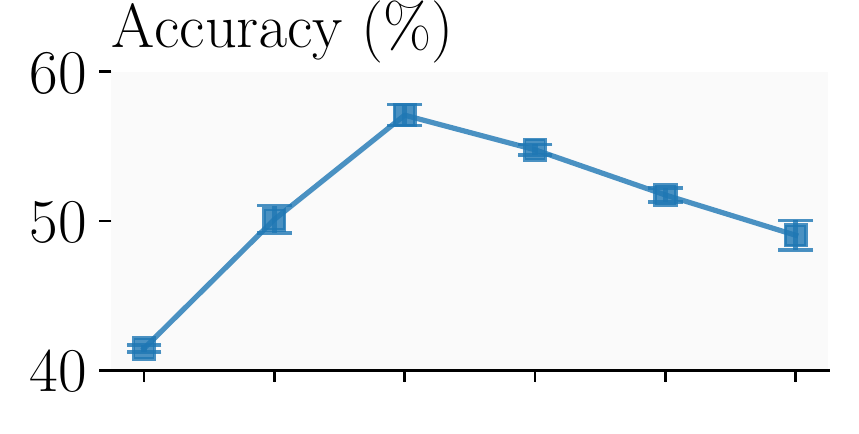}
            \vspace{-1.8mm}
        \end{subfigure}
        \centering
        \begin{subfigure}{1.0\linewidth}
            \centering
            \includegraphics[width=\linewidth]{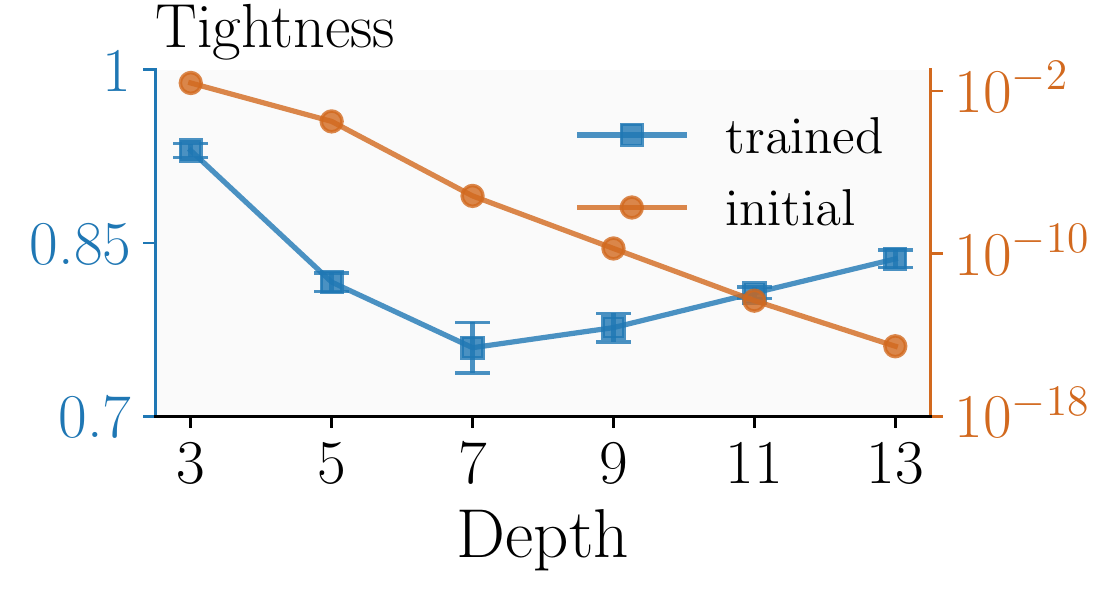}
            \vspace{-5mm}
        \end{subfigure}        
    \end{minipage}
    \hfil
    \begin{minipage}[t]{0.45\linewidth}
        \begin{subfigure}{1.0\linewidth}
            \includegraphics[width=.88\linewidth]{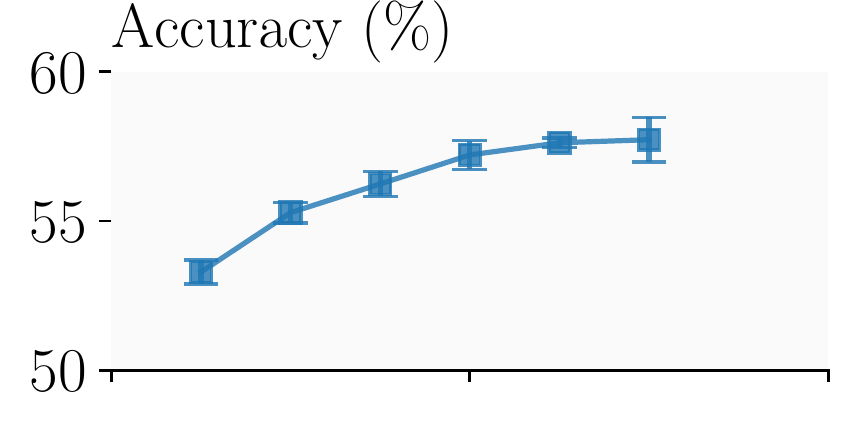}
            \vspace{-1.8mm}
        \end{subfigure}
        \begin{subfigure}{1.0\linewidth}
            \centering
            \includegraphics[width=\linewidth]{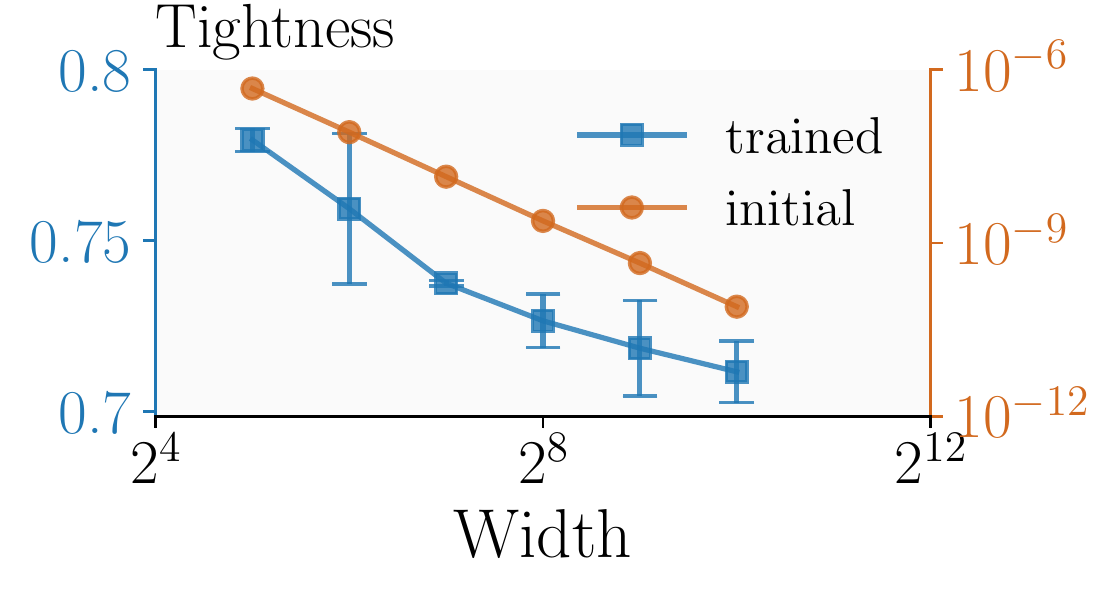}
        \end{subfigure}
    \end{minipage}
    \vspace{-3mm}
    \caption{Effect of network depth (top) and width (bottom) on tightness and \emph{training set} \ibp -certified accuracy.}
    \label{fig:IBP_architecture_effect}
\end{wrapfigure}

First, we confirm the predictiveness of our theoretical results on the effect of network width and depth on tightness at initialization and after training.
In \cref{fig:init_scaling}, we visualize tightness at initialization, depending on network width and depth for DLNs and ReLU networks. As predicted by \cref{thm:tightnessinit,cor:exp_growth}, tightness decreases polynomially with width (see \cref{fig:init_scaling} left) and exponentially with depth (see \cref{fig:init_scaling} right), both for DLNs and ReLU networks. 
We confirm our results on the inherent hardness of linear reconstruction in \cref{fig:linear_reconstruction}, where we plot the ratio of recovered and original box radii, given a bottleneck layer of width $w$ and synthetic data with intrinsic dimensionality $k=w$. As predicted by \cref{thm:ibp_reconstruction}, \ibp propagation yields linear and \ibpb sublinear growth.%

\begin{wraptable}[19]{r}{.400\linewidth}
    \centering
    \vspace{-4.3mm}
    \caption{Certified and standard accuracy depending on network width.} \label{tb:sabr_wide}
    \vspace{-3mm}
    \scalebox{0.55}{
        \begin{tabular}{@{}cclcccc@{}} \toprule
            Dataset                                      & $\epsilon$                                     & Method                              & Width       & Accuracy       & Certified      \\ \midrule

            \multirow{9}{*}{\mnist}                      & \multirow{4.5}{*}{$0.1$}           & \multirow{2}{*}{\ibp}   & $1\times$   & 98.83          & 98.10          \\
                                                         &                                                &                                     & $4\times$   & 98.86          & 98.23          \\
            \cmidrule(lr){3-3}
                                                         &                                                & \multirow{2}{*}{\sabr}  & $1\times$   & 98.99          & 98.20          \\
                                                         &                                                &                                     & $4\times$   & \textbf{98.99} & \textbf{98.32} \\
            \cmidrule(lr){2-3}

                                                         & \multirow{4.5}{*}{$0.3$}                       & \multirow{2}{*}{\ibp}               & $1\times$   & 97.44          & 93.26          \\
                                                         &                                                &                                     & $4\times$   & 97.66          & 93.35          \\
            \cmidrule(lr){3-3}
                                                         &                                                & \multirow{2}{*}{\sabr}              & $1\times$   & \textbf{98.82} & 93.38          \\
                                                         &                                                &                                     & $4\times$   & 98.48          & \textbf{93.85} \\

            \cmidrule(lr){1-3}
            \multirow{14.5}{*}{\cifar}                   & \multirow{7}{*}{$\frac{2}{255}$}               & \multirow{2}{*}{\ibp}               & $1\times$   & 67.93          & 55.85          \\
                                                         &                                                &                                     & $2\times$   & 68.06          & 56.18          \\
            \cmidrule(lr){3-3}
                                                         &                                                & \multirow{2}{*}{\ibpr}              & $1\times$   & 78.43          & 60.87          \\
                                                         &                                                &                                     & $2\times$   & \textbf{80.46} & 62.03          \\
            \cmidrule(lr){3-3}
                                                         &                                                & \multirow{2}{*}{\sabr}              & $1\times$   & 79.24          & 62.84          \\
                                                         &                                                &                                     & $2\times$   & 79.89          & \textbf{63.28} \\

            \cmidrule(lr){2-3}
                                            & \multirow{4.5}{*}{$\frac{8}{255}$} & \multirow{2}{*}{\ibp}  & $1\times$   & 47.35          & 34.17          \\
                                            &                                    &                        & $2\times$   & 47.83          & 33.98          \\
            \cmidrule(lr){3-3}
                                            &                                    & \multirow{2}{*}{\sabr} & $1\times$   & 50.78          & 34.12          \\
                                            &                                    &                        & $2\times$   & \textbf{51.56} & \textbf{34.95} \\

            \cmidrule(lr){1-3}

             \multirow{6.5}{*}{TinyImageNet} & \multirow{6.5}{*}{$\frac{1}{255}$} & \multirow{3}{*}{\ibp}   & $0.5\times$ & 24.47          & 18.76          \\
                                                         &                                                &                                     & $1\times$   & 25.33          & 19.46          \\
                                                         &                                                &                                     & $2\times$   & 25.40          & 19.92          \\

            \cmidrule(lr){3-3}
                                                         &                                                &  \multirow{3}{*}{\sabr} & $0.5\times$ & 27.56          & 20.54          \\
                                                         &                                                &                                     & $1\times$   & 28.63          & 21.21          \\
                                                         &                                                &                                     & $2\times$   & \textbf{28.97} & \textbf{21.36}           \\

            \bottomrule
        \end{tabular}
    }
\end{wraptable}

Next, we study the interaction of network architecture and \ibp training.
To this end, we train CNNs with 3 to 13 layers on \cifar for $\eps = 2/255$, visualizing results in \cref{fig:IBP_architecture_effect} (right). To quantify the regularizing effect of propagation tightness, we report training set \ibp-certified accuracy as a measure of the goodness of fit. Generally, we would expect increased depth to increase capacity and thus decrease the robust training loss and increase training set accuracy. However, we only observe such an increase in accuracy until a depth of 7 layers before accuracy starts to drop. We can explain this by analyzing the corresponding tightness. As expected, tightness is high for shallow networks but decreases quickly with depth, reaching a minimum for 7 layers. From there, tightness increases again, indicating significant regularization, and thereby decreasing accuracy. This is in line with the popularity of the 7-layer \cnns in the certified training literature \citep{ShiWZYH21, MuellerEFV22}.

Continuing our study of architecture effects, we train networks with 0.5 to 16 times the width of a standard \cnns using \ibp training and visualize the resulting \ibp certified train set accuracy and tightness in \cref{fig:IBP_architecture_effect} (left). We observe that increasing capacity via width instead of depth yields a monotone although diminishing increase in accuracy as tightness decreases gradually. The different trends for width and depth agree well with our theoretical results, predicting 

\begin{wrapfigure}[15]{r}{.33\linewidth}
    \centering
    \vspace{-2mm}
    \begin{subfigure}{0.80\linewidth}
        \centering
        \includegraphics[width=\linewidth]{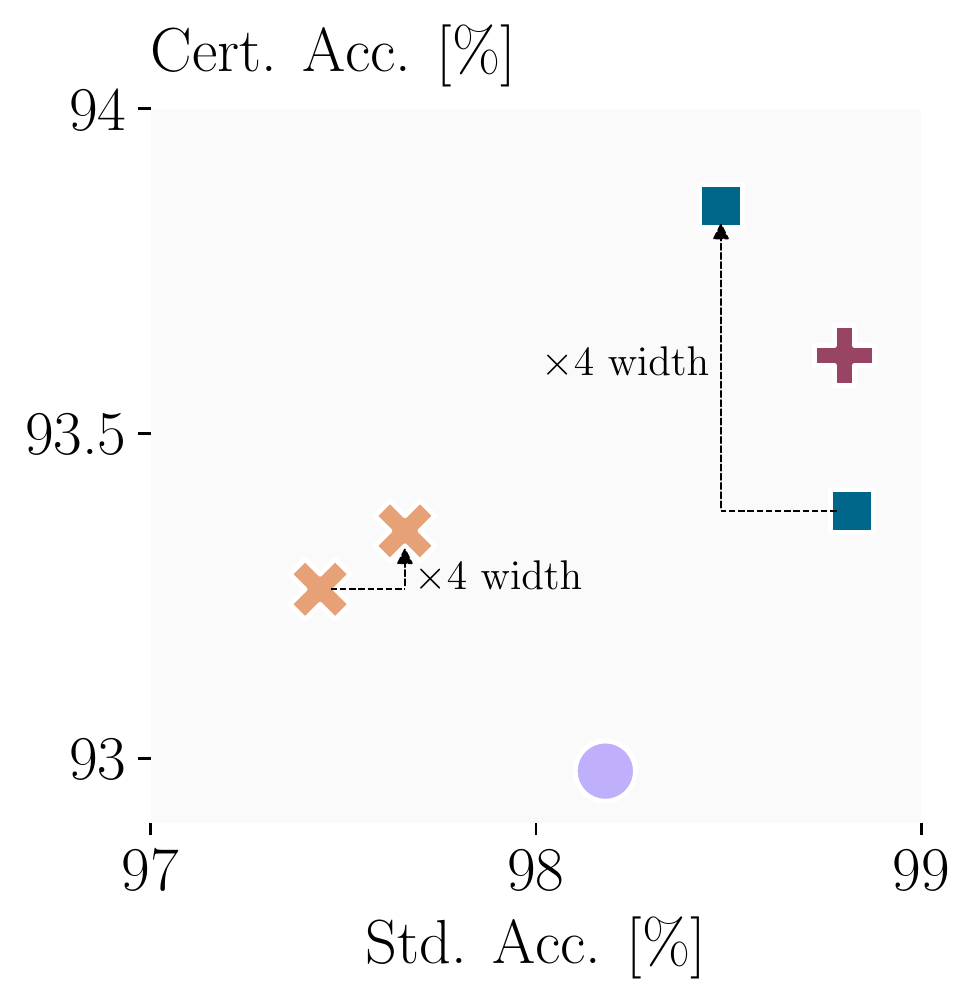}    
    \end{subfigure}
    \begin{subfigure}{0.9\linewidth}
        \centering
        \includegraphics[width=\linewidth]{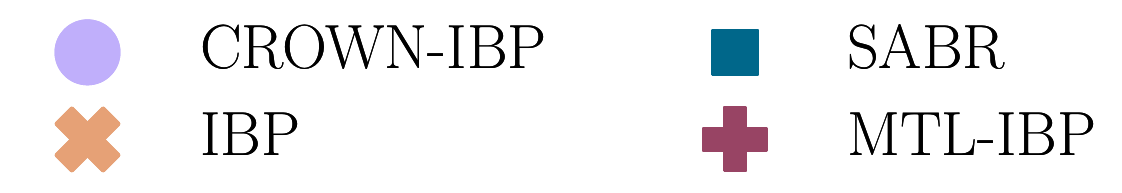}    
    \end{subfigure}
    \vspace{-2mm}
    \caption{Effect of a $4$-fold width increase on certified and standard accuracy for \mnist at $\epsilon=0.3$.}
    \label{fig:sota_width}
\end{wrapfigure}
that sufficient network width is essential for trained networks (see \cref{thm:ibp_reconstruction}). It can further be explained by the observation that increasing depth, at initialization, reduces tightness exponentially, while increasing width only reduces it polynomially. 
Intuitively, this suggests that less regularization is required to offset the tightness penalty of increasing network width rather than depth.

As these experiments indicate that optimal architectures for \ibp-based training have only moderate depth but large width, we train wider versions of the popular \cnns using \ibp, \sabr, and \ibpr, showing results in \cref{tb:sabr_wide,fig:sota_width}. We observe that this width increase improves certified accuracy in all settings. We note that, while these improvements might seem marginal, they are of similar magnitude as multiple years of progress on certified training methods, see \cref{fig:sota_width} where \crownibp \citep{ZhangCXGSLBH20} and \mtlibp \citep{PalmaBDKSL23} (the previous SOTA on \mnist) are shown for reference.

\subsection{Certified Training Increases Tightness} \label{sec:training_method}
\begin{figure}
    \centering
    \vspace{-2mm}
    \begin{subfigure}[b]{.3\linewidth}
        \centering
        \includegraphics[width=\linewidth]{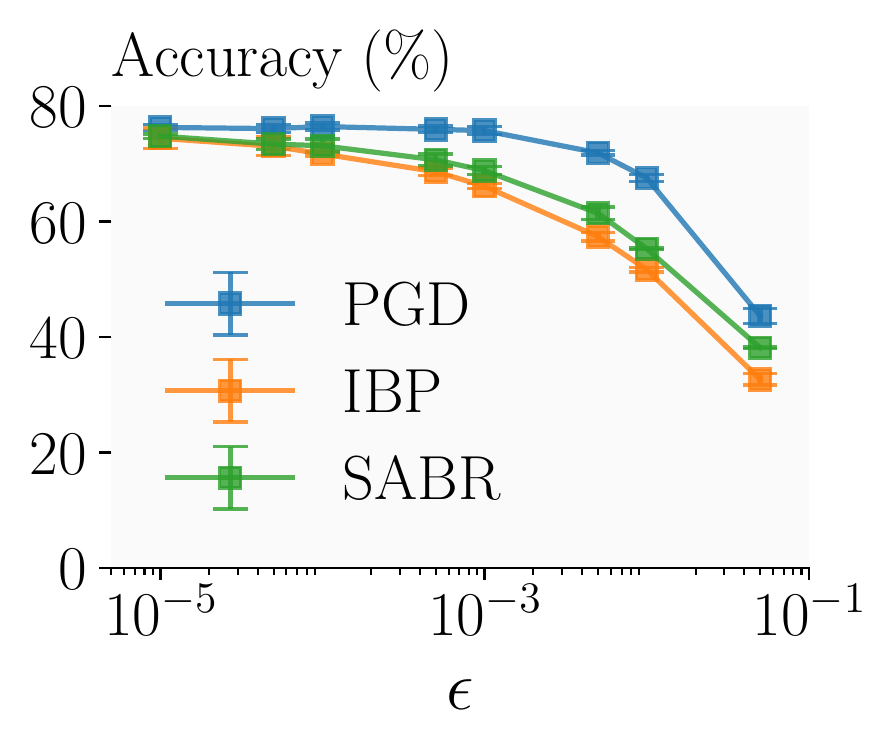}
    \end{subfigure}
    \hfil
    \begin{subfigure}[b]{.3\linewidth}
        \centering
        \includegraphics[width=\linewidth]{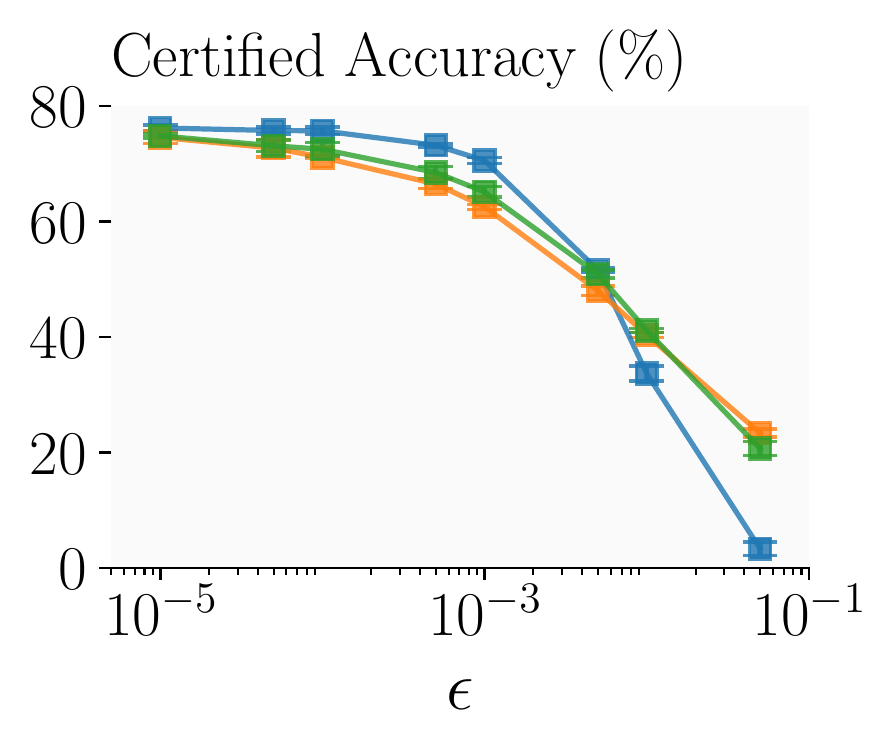}
    \end{subfigure}
    \hfil
    \begin{subfigure}[b]{.3\linewidth}
        \centering
        \includegraphics[width=\linewidth]{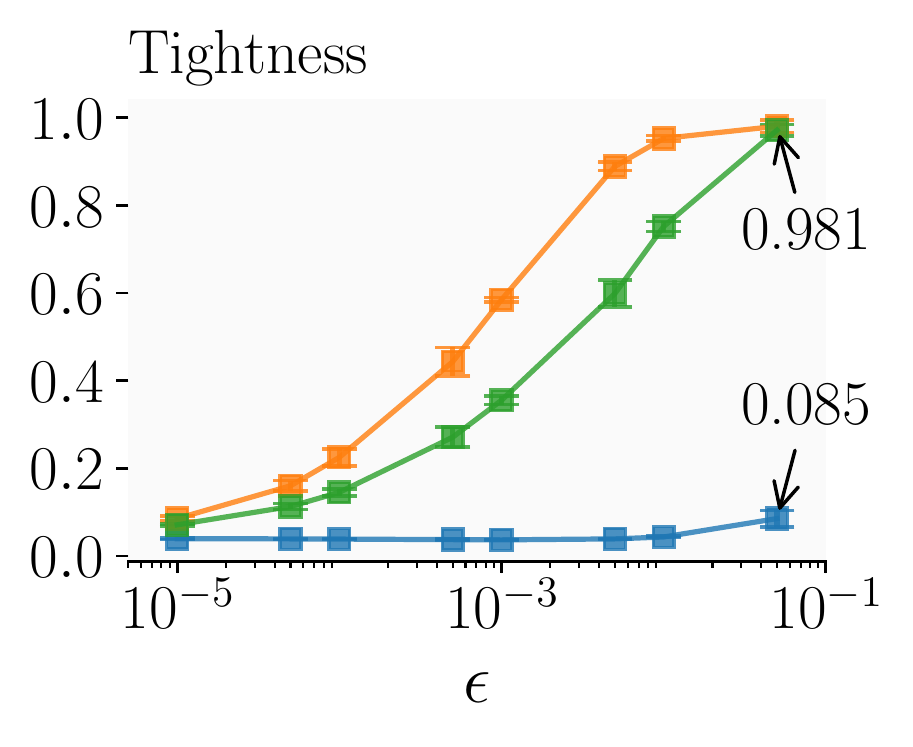}
    \end{subfigure}
    \vspace{-4mm}
    \caption{Tightness, standard, and certified accuracy for \cnnt on \cifar, depending on training method and perturbation magnitude $\epsilon$ used for training and evaluation.} \label{fig:cnn3_method}
    \vspace{-5mm}
\end{figure}

To assess how different training methods affect tightness, we train a \cnnt on \cifar for a wide range of perturbation magnitudes ($\epsilon \in [10^{-5}, 5 \cdot 10^{-2}]$) using \ibp, \pgd, and \sabr training and illustrate the resulting tightness and accuracies in \cref{fig:cnn3_method}.
Recall, that while \ibp computes and optimizes a sound over-approximation of the worst-case loss over the whole input region, \sabr propagates only a small subregion with \ibp, thus yielding an unsound but generally more precise approximation of the worst-case loss. \pgd, in contrast, does not use \ibp at all but rather trains with samples that approximately maximize the worst-case loss.
We observe that training with either \ibp-based method increases tightness with perturbation magnitude until networks become almost propagation invariant with $\tau=0.98$ (see \cref{fig:cnn3_method}, right). This confirms our theoretical results, showing that \ibp training increases tightness with $\eps$ (see \cref{thm:radius_tightness}).
In contrast, training with \pgd barely influences tightness. %
Further, the regularization required for such high tightness comes at the cost of standard accuracies being severely reduced (see \cref{fig:cnn3_method}, left). However, while this reduced standard accuracy translates to smaller certified accuracies for very small perturbation magnitudes ($\eps \leq 5 \cdot 10^{-3}$), the increased tightness improves certifiability sufficiently to yield higher certified accuracies for larger perturbation magnitudes ($\eps \geq 10^{-2}$).

We further investigate this dependency between (certified) robustness and tightness by varying the subselection ratio $\lambda$ when training with \sabr. Recall that $\lambda$ controls the size of the propagated regions for a fixed perturbation magnitude $\eps$, recovering \ibp for $\lambda=1$ and \pgd for $\lambda=0$. Plotting results in \cref{fig:SABR_lambda}, we observe that while decreasing $\lambda$, severely reduces tightness and thus regularization, it not only leads to increasing natural but also certified accuracies until tightness falls below $\tau < 0.5$ at $\lambda = 0.4$. We observe similar trends when varying the regularization level for other unsound certified training methods, discussed in \cref{app:STAPS_regularization}.
In \cref{fig:SABR_lambda_tightness}, we vary the perturbation size $\epsilon$ for three different $\lambda$ and show tightness over the size of the propagation region $\xi=\lambda \eps$ for a \cnnt and \cifar. Here, we observe that tightness is dominated by the size of the propagation region $\xi$ and not the robustness specification $\epsilon$, indicating that while training with IBP-bounds increases tightness, the resulting high levels of tightness and thus regularization are not generally necessary for robustness.
This helps to explain \sabr's success and highlights the potential for developing novel certified training methods that reduce tightness while maintaining sufficient certifiability.

\begin{wraptable}[14]{r}{.39\linewidth}
    \caption{Tightness and accuracies for various training methods on \cifar.
    } \label{tb:colt}
    \begin{adjustbox}{max width=\linewidth}
    \begin{threeparttable}[b]
    \centering
    \vspace{-3mm}
        \begin{tabular}{ccCcC}
            \toprule
            Method                & $\eps$ & \multicolumn{1}{c}{Accuracy} & Tightness  & \multicolumn{1}{c}{Certified}  \\
            \midrule
            \multirow{2}{*}{PGD}  & 2/255  & 81.2     & 0.001         &  \multicolumn{1}{c}{-}   \\
                                  & 8/255  & 69.3     & 0.007         &  \multicolumn{1}{c}{-}   \\
            \cmidrule{1-2}
            \multirow{2}{*}{COLT} & 2/255  & 78.4\textsuperscript{*}      & 0.009         &  60.7\textsuperscript{*} \\
                                  & 8/255  & 51.7\textsuperscript{*}      & 0.057         &  26.7\textsuperscript{*}  \\
            \cmidrule{1-2}
            \multirow{2}{*}{IBP-R}  & 2/255  & 78.2\textsuperscript{*}      & 0.033         &  62.0\textsuperscript{*}   \\
                                  & 8/255  & 51.4\textsuperscript{*}     & 0.124 &  27.9\textsuperscript{*} \\
            \cmidrule{1-2}
            \multirow{2}{*}{SABR} & 2/255  & 75.6     & 0.182         & 57.7   \\
                                  & 8/255  & 48.2     & 0.950         & 31.2   \\
            \cmidrule{1-2}
            \multirow{2}{*}{IBP}  & 2/255  & 63.0     & 0.803         &  51.3  \\
                                  & 8/255  & 42.2     & 0.977         &  31.0  \\
            \bottomrule
        \end{tabular}
        \begin{tablenotes}
            \item [*] Literature result.
          \end{tablenotes}
    \end{threeparttable}
\end{adjustbox}
\end{wraptable}

To study how certified training methods that do not use \ibp-bounds at all (\colt) or only as a regularizer with very small weight (\ibpr) affect tightness, we
compare tightness, certified, and standard accuracies on a 4-layer CNN (used by \colt and \ibpr) in \cref{tb:colt}. 
We observe that the orderings of tightness and accuracy are exactly inverted, highlighting the accuracy penalty of a strong regularization for tightness. 
While both \colt and \ibpr affect a much smaller increase in tightness than \sabr or \ibp, they still yield networks an order of magnitude tighter than \pgd, suggesting that slightly increased tightness might be desirable for certified robustness. 
This is further corroborated by the more heavily regularizing \sabr outperforming \ibpr at larger $\epsilon$ while being outperformed at smaller $\epsilon$.

\begin{figure}
    \begin{minipage}[t]{.64\linewidth}
        \centering
        \begin{subfigure}{.45\linewidth}
            \centering
            \includegraphics[width=\linewidth]{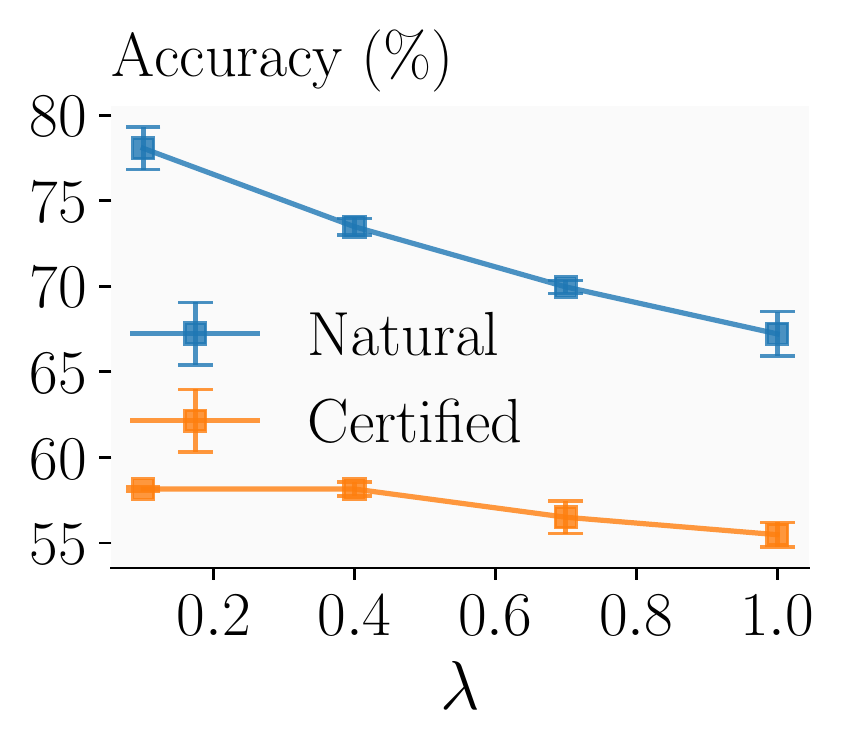}
        \end{subfigure}
        \hfil
        \begin{subfigure}{.45\linewidth}
            \centering
            \includegraphics[width=\linewidth]{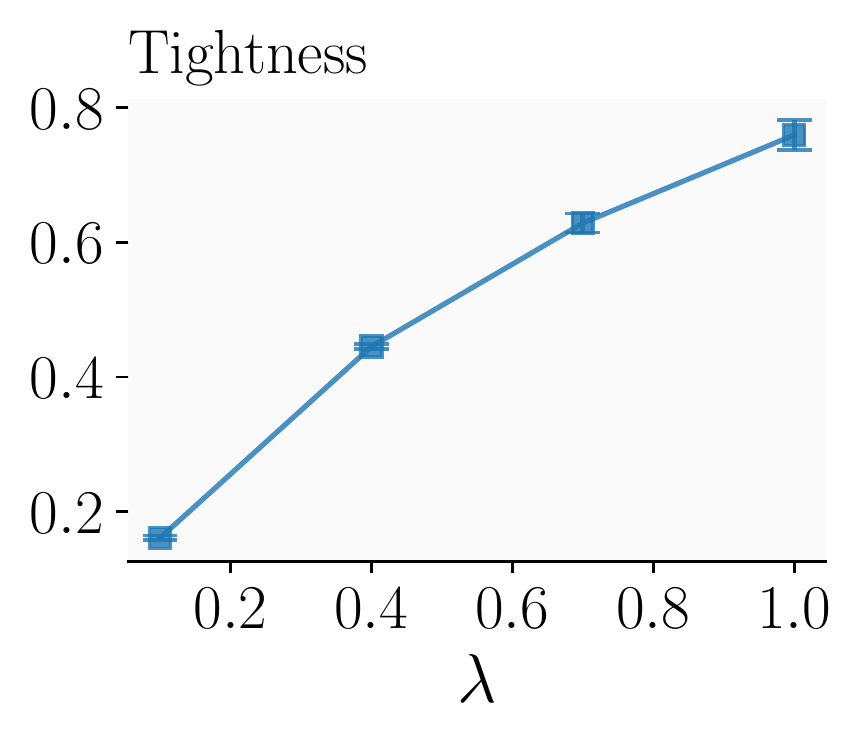}
        \end{subfigure}
        \vspace{-4mm}
        \caption{Accuracies and tightness of a \cnns for \cifar $\epsilon=\frac{2}{255}$ depending on regularization strength with \sabr.}
        \label{fig:SABR_lambda}
    \end{minipage}
    \hfil
    \begin{minipage}[t]{.32\linewidth}
        \centering
        \includegraphics[width=\linewidth]{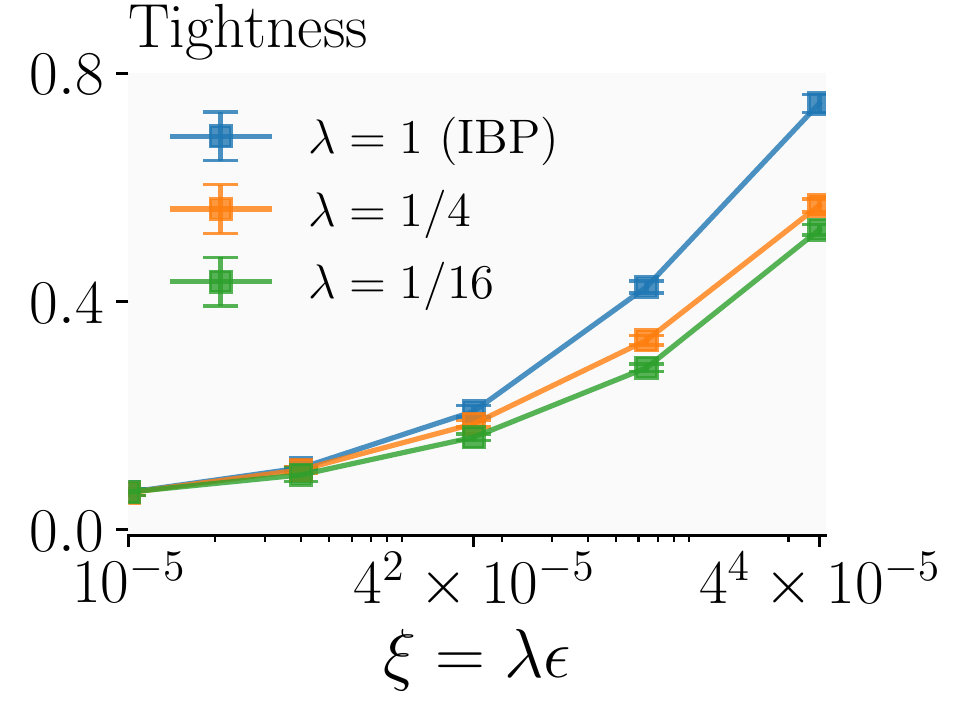}
        \vspace{-7.9mm}
        \caption{Tightness over propagation region size $\xi$ for \sabr.}
        \label{fig:SABR_lambda_tightness}
    \end{minipage}
    \vspace{-5mm}
\end{figure}

\vspace{-1mm}
\section{Related Work}
\vspace{-1mm}

\citet{BaaderMV20} show that continuous functions can be approximated by IBP-certifiable ReLU networks up to arbitrary precision. \citet{WangAPJ22} extend this result to more activation functions and characterize the hardness of such a construction. %
\citet{WangSGH22} find that \ibp-training converges to a global optimum with high probability for sufficient width.
\citet{mirman2022the} show that functions with points of non-invertibility can not be precisely approximated with \ibp. 
\citet{Zhu00C22a} show that width is advantageous while depth is not for approximate average case robustness.

\citet{ShiWZYH21} define \emph{tightness} as the size of the layerwise \ibpl, i.e., $\Delta = \overline{\vz}^\dagger - \underline{\vz}^\dagger$, rather than its ratio $\tau$ to the size the optimal box (\cref{def:relu_tightness}). They thus study the size of the approximation irrespective of the size of the ground truth, while we study the quality of the approximation. 
This leads to significantly different insights, e.g., propagation tightness $\tau$ remains the same under scaling of the network weights, while the abstraction size $\Delta$ is scaled proportionally. 

\citet{WuCCHG21} study the relation between empirical adversarial robustness and network width. They observe that in this setting, increased width actually hurts perturbation stability and thus potentially empirical robustness while improving natural accuracy.
In contrast, we have shown theoretically and empirically that width is beneficial for certified robustness when training with \ibp-based methods.

\vspace{-1mm}
\section{Conclusion}
\vspace{-1mm}

Motivated by the recent and surprising dominance of \ibp-based certified training methods, we investigate its underlying mechanisms and trade-offs.
By quantifying the relationship between \ibp and optimal \boxd bounds with our novel propagation tightness metric, we are able to predict the influence of architecture choices on deep linear networks at initialization and after training. 
We experimentally confirm the applicability of these results to ReLU networks and show that wider networks improve the performance of state-of-the-art methods, while deeper networks do not.
Finally, we show that \ibp-based training methods increase propagation tightness, depending on the size of the propagated region, at the cost of strong regularization. 
This observation not only helps explain the success of recent certified training methods but, in combination with the novel metric of propagation tightness, might constitute a key step towards developing novel training methods, balancing certifiability and the (over-)regularization resulting from propagation tightness.

\section*{Reproducibility Statement}
We publish our code, trained models, and detailed instructions on how to reproduce our results at \url{https://github.com/eth-sri/ibp-propagation-tightness}. 
Additionally, we provide detailed descriptions of all hyper-parameter choices, data sets, and preprocessing steps in \cref{app:exp_details}.

\section*{Acknowledgements}
We would like to thank our anonymous reviewers for their constructive comments and insightful questions.

This work has been done as part of the EU grant ELSA (European Lighthouse on Secure and Safe AI, grant agreement no. 101070617) and the SERI grant SAFEAI (Certified Safe, Fair and Robust Artificial Intelligence, contract no. MB22.00088). Views and opinions expressed are however those of the authors only and do not necessarily reflect those of the European Union or European Commission. Neither the European Union nor the European Commission can be held responsible for them. 

The work has received funding from the Swiss State Secretariat for Education, Research and Innovation (SERI).

\message{^^JLASTBODYPAGE \thepage^^J}

\clearpage
\bibliography{references}
\bibliographystyle{iclr2024_conference}

\message{^^JLASTREFERENCESPAGE \thepage^^J}

\ifbool{includeappendix}{%
	\clearpage
	\appendix
	\section{Additional Theoretical Results} \label{app:theory}

Below we present a corollary, formalizing the intutions we provided in \cref{sec:pi_conditions}.
\begin{restatable}[]{cor}{chosable_signs}
    \label[corollary]{cor:linear_param}
    Assume all elements of $\mW^{(1)}$, $\mW^{(2)}$ and $\mW^{(2)} \mW^{(1)}$ are non-zero and $\mW^{(2)} \mW^{(1)}$ is propagation invariant. Then choosing the signs of one row and one column of $\mW^{(2)} \mW^{(1)}$ fixes all signs of $\mW^{(2)} \mW^{(1)}$.
\end{restatable}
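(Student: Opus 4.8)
The plan is to reduce the statement to a purely combinatorial condition on the sign pattern of $M := \mW^{(2)}\mW^{(1)}$ and then read off the claimed degrees of freedom. Writing $s_{ij} := \sign(M_{ij}) \in \{+1,-1\}$ (well-defined since all entries of $M$ are non-zero by assumption), I would first invoke the contrapositive of \cref{thm:invariant-reg}. Because every entry of $\mW^{(1)}$ and $\mW^{(2)}$ is non-zero, the columns $W^{(1)}_{\cdot,j}$ and the rows $W^{(2)}_{i,\cdot}$ are all non-zero, so the hypothesis of \cref{thm:invariant-reg} is satisfied for every choice of indices $i,i',j,j'$. Propagation invariance therefore forbids $M_{ij} M_{ij'} M_{i'j} M_{i'j'} < 0$, and since all four factors are non-zero this product is strictly positive; equivalently $s_{ij}\,s_{ij'}\,s_{i'j}\,s_{i'j'} = +1$ for all $i,i',j,j'$.

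Next I would show that this constraint forces a rank-one sign pattern. Fixing a reference row index $i_0$ and a reference column index $j_0$, I specialize the identity to $i' = i_0$ and $j' = j_0$, obtaining $s_{ij}\,s_{i j_0}\,s_{i_0 j}\,s_{i_0 j_0} = +1$ for every $(i,j)$. Since each sign squares to one, this rearranges into the explicit formula
\begin{equation*}
  s_{ij} = s_{i j_0}\cdot s_{i_0 j}\cdot s_{i_0 j_0}.
\end{equation*}
The right-hand side depends only on the entries of column $j_0$ (through $s_{i j_0}$), the entries of row $i_0$ (through $s_{i_0 j}$), and the shared corner $s_{i_0 j_0}$, which belongs to both. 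Hence once the signs of one full row and one full column of $M$ are fixed, the sign of every remaining entry is determined, which is exactly the claim.

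I expect no serious obstacle: the only point requiring care is the translation of the ``$<0$ is forbidden'' statement of \cref{thm:invariant-reg} into the equality $s_{ij}s_{ij'}s_{i'j}s_{i'j'}=+1$, which relies crucially on the non-degeneracy assumption that all entries of $M$ (and of $\mW^{(1)},\mW^{(2)}$) are non-zero; otherwise the relevant product could vanish and the sign pattern would be underdetermined. As a consistency check, the factorization $s_{ij}=s_{ij_0}s_{i_0j}s_{i_0j_0}$ shows the free signs are precisely those of one row and one column, i.e.\ $d+d-1 = 2d-1$ of the $d^2$ signs for a $d\times d$ product, matching the count asserted in \cref{sec:pi_conditions}.
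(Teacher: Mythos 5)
Your proposal is correct, and it rests on the same key ingredient as the paper's proof, namely the contrapositive of \cref{thm:invariant-reg} applied to $2\times 2$ index blocks; but the way you exploit that constraint is genuinely different. The paper proceeds by an iterative construction: starting from the known first row and column, it repeatedly applies \cref{thm:invariant-reg} to \emph{adjacent} blocks $W_{i:i+1,\,j:j+1}$, fixing one new sign at a time (positive iff an odd number of the three known entries are positive), and then argues by recursion that this process eventually determines every entry. You instead anchor every block at the reference row $i_0$ and column $j_0$, which turns the constraint $s_{ij}\,s_{ij_0}\,s_{i_0j}\,s_{i_0j_0}=+1$ directly into the closed-form formula $s_{ij}=s_{ij_0}\,s_{i_0j}\,s_{i_0j_0}$, eliminating the induction entirely. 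Your route buys a cleaner and shorter argument (no bookkeeping about which signs are already fixed, no recursion to verify termination), an explicit formula for every sign rather than an algorithmic description, and an immediate justification of the $2d-1$ free-sign count stated in \cref{sec:pi_conditions}; the paper's constructive version, on the other hand, makes transparent \emph{how} the propagation of sign information proceeds locally through the matrix, which mirrors the block structure used in \cref{thm:invariant-reg} itself. One small point you handle correctly and should keep explicit: the passage from ``the product is not negative'' to ``the product equals $+1$'' genuinely needs the assumption that all entries of $\mW^{(2)}\mW^{(1)}$ are non-zero, and the applicability of \cref{thm:invariant-reg} for all index choices needs the non-vanishing of the rows of $\mW^{(2)}$ and columns of $\mW^{(1)}$; both are exactly the non-degeneracy hypotheses of \cref{cor:linear_param}.
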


\begin{proof}
    For notational reasons, we define $W:=W^{(2)} W^{(1)}$. Without loss of generality, assume we know the signs of the first row and the first column, \ie, $W_{1, \cdot}$ and $W_{\cdot, 1}$. We prove via a construction of the signs of all elements. The construction is given by the following: whenever $\exists i, j$, such that we know the sign of $W_{i,j}$, $W_{i, j+1}$ and $W_{i+1, j}$, we fix the sign of $W_{i+1, j+1}$ to be positive if there are an odd number of positive elements among $W_{i,j}$, $W_{i, j+1}$ and $W_{i+1, j}$, otherwise negative.

    By \cref{thm:invariant-reg}, propagation invariance requires us to fix the sign of the last element in the $W_{i:i+1, j:j+1}$ block in this way. We only need to prove that when this process terminates, we fix the signs of all elements. We show this via recursion.

    When $i=1$ and $j=1$, we have known the signs of $W_{i,j}$, $W_{i, j+1}$ and $W_{i+1, j}$, thus the sign of $W_{i+1, j+1}$ is fixed. Continuing towards the right, we gradually fix the sign of $W_{2, j+1}$ for $j=1, \dots, d-1$. Continuing downwards, we gradually fix the sign of $W_{i+1, 2}$ for $i=1, \dots, d-1$. Therefore, all signs of the elements of the second row and the second column are fixed. By recursion, we would finally fix all the rows and the columns, thus the whole matrix.
\end{proof}

We extend our result in \cref{sec:initialization} to two-layer ReLU networks. The intuition is that when the input data is symmetric around zero, ReLU status (activated or not) is independent to weights and the probability of activation is exactly 0.5.

\begin{restatable}[]{cor}{2_layer_relu_init}
    \label[corollary]{cor:2_layer_relu_init}
    Assume the input distribution is symmetric around zero, \ie, $p_X(x) = p_X(-x)$ for all $x>0$, and $P(X=0)=0$. Then for a two-layer ReLU network $\vf = \mW^{(2)} \relu(\mW^{(1)} \vx)$ initialized with i.i.d. Gaussian, the expected local tightness $\tau^\prime \sim \sqrt{2} \tau$, where $\tau$ is the expected tightness of corresponding deep linear network.
\end{restatable}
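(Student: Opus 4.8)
The plan is to evaluate the expected local tightness of \cref{def:relu_tightness} for $L=2$ with $\vd^{(2)}=\bm{1}$ (no output activation) and $\vd := \vd^{(1)}$ the hidden activation pattern, and compare it term-by-term with the exact deep-linear computation of \cref{thm:tightnessinit}. Fixing an output coordinate $i$ (all coordinates are equal in expectation by symmetry), I would write the numerator and denominator of the ReLU tightness as
\[
N = \sum_{j}\Big|\sum_{k} W^{(2)}_{ik}\, d_k\, W^{(1)}_{kj}\Big|,
\qquad
D = \sum_{j}\sum_{k} d_k\,|W^{(2)}_{ik}|\,|W^{(1)}_{kj}|,
\]
the $i$-th entries of $|\mW^{(2)}\diag(\vd)\mW^{(1)}|\bm{1}$ and $(|\mW^{(2)}|\diag(\vd)|\mW^{(1)}|)\bm{1}$. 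Interpreting the expected tightness as a ratio of expectations (as in \cref{thm:tightnessinit}), the goal reduces to showing $\E[D]=\tfrac12\E[D_{\mathrm{lin}}]$ exactly and $\E[N]\sim\tfrac{1}{\sqrt2}\E[N_{\mathrm{lin}}]$ asymptotically in the width $d_1$, where $N_{\mathrm{lin}},D_{\mathrm{lin}}$ are the corresponding DLN quantities; then $\tau' = \E[N]/\E[D] \sim \sqrt2\,\E[N_{\mathrm{lin}}]/\E[D_{\mathrm{lin}}] = \sqrt2\,\tau$. Throughout I would condition on $\vx$ (nonzero almost surely by the input assumption).

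\textbf{Denominator via symmetrization.}
For each hidden unit $k$, set $A_k := W^{(1)}_{k,\cdot}\cdot\vx$, so $d_k=\mathds{1}_{A_k>0}$. For fixed $\vx$, the pair $(A_k, W^{(1)}_{kj})$ is jointly Gaussian and centered, hence $(A_k, W^{(1)}_{kj})\overset{d}{=}(-A_k,-W^{(1)}_{kj})$. Consequently $\E[\mathds{1}_{A_k>0}\,g(W^{(1)}_{kj})]=\tfrac12\E[g(W^{(1)}_{kj})]$ for every even $g$, which is where the input symmetry (guaranteeing $\vx\neq\bm{0}$ and activation probability $\tfrac12$) enters. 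In particular $\E[d_k|W^{(1)}_{kj}|]=\tfrac12\E[|W^{(1)}_{kj}|]$ and $\E[d_k (W^{(1)}_{kj})^2]=\tfrac12\sigma_1^2$. Since $\mW^{(2)}$ is independent of $(\mW^{(1)},\vx)$ and thus of $d_k$, I can factor $\E[|W^{(2)}_{ik}|]$ out of each summand of $D$ to obtain $\E[D]=\tfrac12\E[D_{\mathrm{lin}}]$, an \emph{exact} halving.

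\textbf{Numerator via CLT.}
Fix $j$ and write $S=\sum_k T_k$ with $T_k=d_k W^{(2)}_{ik}W^{(1)}_{kj}$. Conditioned on $\vx$, the $T_k$ are i.i.d.\ across $k$ with mean zero (because $\E W^{(2)}_{ik}=0$). The off-diagonal covariances $\E[T_k T_{k'}]$ vanish since $W^{(2)}_{ik}$ enters linearly and is independent and centered; the diagonal term is $\E[T_k^2]=\sigma_2^2\cdot\tfrac12\sigma_1^2$ by the symmetrization above, so $\Var(S)=\tfrac{d_1}{2}\sigma_1^2\sigma_2^2$, exactly half the DLN value. As the $T_k$ possess all moments, the classical CLT applies, and the bounded second moments give uniform integrability of $S/\sqrt{d_1}$, so $\E|S|\to\sqrt{2/\pi}\,\sqrt{\Var(S)}$, i.e.\ $\E|S|\sim\sqrt{d_1/\pi}\,\sigma_1\sigma_2$. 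Summing over the $d_0$ coordinates $j$ and comparing with the DLN numerator $\E[N_{\mathrm{lin}}]=d_0\tfrac{2}{\sqrt\pi}\tfrac{\Gamma((d_1+1)/2)}{\Gamma(d_1/2)}\sigma_1\sigma_2\sim d_0\sqrt{2d_1/\pi}\,\sigma_1\sigma_2$ (from \cref{thm:tightnessinit}) yields $\E[N]\sim\tfrac{1}{\sqrt2}\E[N_{\mathrm{lin}}]$. Since these limits are independent of $\vx$, integrating over the input distribution preserves the leading constants, and combining with the denominator gives $\tau'\sim\sqrt2\,\tau$.

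\textbf{Main obstacle.}
The delicate step is the numerator. Because $d_k$ is itself a function of the first-layer weight row $W^{(1)}_{k,\cdot}$, conditioning on the activation pattern turns $W^{(1)}_{kj}$ into a truncated, non-symmetric Gaussian, so the exact conditioning-on-$\mW^{(2)}$ argument that produced the closed $\Gamma$-form in \cref{thm:tightnessinit} breaks down. I can still pin the variance down exactly (the clean factor $\tfrac12$), but the first absolute moment $\E|S|$ is not determined by the variance alone for a non-Gaussian sum; controlling it requires the CLT together with uniform integrability, which is exactly why the conclusion is asymptotic ($\sim$) rather than an identity. The remaining care is routine: verifying the Lindeberg/Lyapunov condition, justifying the moment convergence, and checking that the $\vx$-integration does not alter the leading behaviour.
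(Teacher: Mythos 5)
Your proof is correct and follows the same high-level skeleton as the paper's: each hidden unit is active with probability $\tfrac12$, the layer-wise box size halves exactly, the optimal box size shrinks by $1/\sqrt{2}$ asymptotically, hence $\tau'\sim\sqrt{2}\,\tau$. The execution differs in two places worth noting. The paper argues that the ReLU's effect ``can be viewed as randomly setting rows of $\mW^{(1)}$ to zero with probability $0.5$,'' asserting that activation status is independent of the initialized weights, and then simply reuses \cref{eq:3-1} and \cref{eq:3-2} with effective width $d_1/2$: the layer-wise size picks up a factor $\tfrac12$ and the optimal size a factor $\E\sqrt{\chi^2(d_1/2)}/\E\sqrt{\chi^2(d_1)}\sim 1/\sqrt{2}$. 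That independence claim is left informal (the activation indicator is plainly not independent of the weights themselves, only of even functions of them); your sign-flip symmetrization $(A_k,W^{(1)}_{kj})\overset{d}{=}(-A_k,-W^{(1)}_{kj})$ with $A_k=W^{(1)}_{k,\cdot}\vx$ makes precisely this step rigorous, and, as you observe, it works conditionally on any fixed $\vx\neq\bm{0}$, so it needs only the centered Gaussian weights --- the input-symmetry hypothesis is used merely to ensure $\vx\neq\bm{0}$ and $P(A_k=0)=0$ almost surely. For the numerator, the paper stays inside the chi-distribution computation of \cref{thm:tightnessinit} (conditioning on $\mW^{(1)}$ and the mask gives a conditional Gaussian, hence an $\E\sqrt{\chi^2(\cdot)}$ term with halved degrees of freedom), whereas you apply the CLT with uniform integrability (guaranteed by the bounded second moments) to the masked sum directly; both yield the same constant $1/\sqrt{2}$ and both are asymptotic in $d_1$, so nothing is lost. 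In short, the paper's route is shorter and preserves the closed $\Gamma$-function form by recycling the DLN formulas, while yours is more self-contained and repairs the one loose step (the ``independence'' of the activation pattern from the weights) in the paper's argument.
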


\begin{proof}
    Since the input $X$ is symmetric around 0, the distribution of $\mW^{(1)} \vx$ is symmetric around 0 as well, regardless of the initialized weights. By assumption on the input and weight distribution, $P(\mW^{(1)} \vx = 0)=0$, thus $P(\relu(\mW^{(1)} \vx) = 0) = 0.5$.  In addition, the status of activation is independent to the initialized weights. Thus, the effect can be viewed as randomly setting rows of $\mW^{(1)}$ to zero with probability 0.5. Following \cref{eq:3-1} and \cref{eq:3-2}, we get that the size of $\ibpl$ is scaled by 0.5, and the size of $\ibpb$ is scaled by $\E (\sqrt{\chi^2(d_1 / 2)}) / \E (\sqrt{\chi^2(d_1)}) \sim \frac{\sqrt{2}}{2}$. Therefore, $\tau^\prime \sim \sqrt{2} \tau$.
\end{proof}

We perform a Monte-Carlo estimation of the ratio $\tau^\prime / \tau$ with a two-layer fully connected network and a two-layer convolutional network on \mnist. The estimation is $1.4167 \pm 0.0059$ and $1.4228 \pm 0.0368$, respectively, which is close to the theoretical value $\sqrt{2} \approx 1.4142$. This confirms the correctness of our theoretical analysis and its generalization even to convolutional networks which do not fully satisfy the assumption.

\section{Deferred Proofs} \label{app:proofs}

\paragraph{Proof of \cref{lem:opt_box_bound}}

Here we prove \cref{lem:opt_box_bound}, restated below for convenience.

\exactcert*

\begin{proof}
    On the one hand, assume $y_i - y_{\text{true}} < 0$ for all $i$. Then for the $i^{th}$ output dimension, the optimal bounding box is $\max y_i - y_{\text{true}}$. Since the classifier is continuous, $\vf(\B(\vx, \bm{\epsilon}))$ is a closed and bounded set. Therefore, by extreme value theorem,  $\exists \eta \in \B(\vx, \bm{\epsilon})$ such that $\eta = \argmax y_i - y_{\text{true}}$, thus $\max y_i - y_{\text{true}}<0$. Since this holds for every $i$, $\ibpb(\vf, \B(\vx, \bm{\epsilon})) \subseteq \mathcal{R}_{< 0}^{K-1}$.

    On the other hand, assume $\ibpb(\vf, \B(\vx, \bm{\epsilon})) \subseteq \mathcal{R}_{< 0}^{K-1}$. Since $\vf(\B(\vx, \bm{\epsilon})) \subseteq \ibpb(\vf, \B(\vx, \bm{\epsilon})) \subseteq \mathcal{R}_{< 0}^{K-1}$, we get $y_i - y_{\text{true}} < 0$ for all $i$.
\end{proof}

\paragraph{Proof of \cref{lem:box_size_L}}

We first prove \cref{lem:box_size_L} for a 2-layer DLN as \cref{lem:box_size_2}.

\begin{restatable}[]{lem}{boxprop2}
\label[lemma]{lem:box_size_2}
    For a two-layer DLN $\vf = \mW^{(2)} \mW^{(1)}$, $(\ubopt - \lbopt) /2 =  \left| W^{(2)}W^{(1)} \right| \bm{\epsilon}$ and $(\ubbox - \lbbox) /2 = \left| W^{(2)} \right| \left| W^{(1)} \right| \bm{\epsilon}$. In addition, $\ibpb$ and $\ibpl$ have the same center $\vf(\vx)$.
\end{restatable}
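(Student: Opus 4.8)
The plan is to reduce the two-layer statement to a single elementary fact about how one linear layer transforms a hyperbox, and then to obtain the optimal and layer-wise bounds by one and two applications of this fact, respectively. Throughout, recall that a DLN has no bias, so I only need to track how the center and radius of an axis-aligned box evolve.

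\textbf{Single-layer building block.} First I would prove the following claim: for any matrix $\mW$ and any input hyperbox with center $\dot{\vx}$ and radius $\bm{\delta}\ge \vzero$, the optimal box $\ibpb(\mW, [\dot{\vx}-\bm{\delta},\dot{\vx}+\bm{\delta}])$ is exactly the hyperbox with center $\mW\dot{\vx}$ and radius $|\mW|\bm{\delta}$. Because the smallest enclosing box of a set is obtained coordinate-wise, it suffices to maximize and minimize each output coordinate independently. Fixing an output index $i$, the $i$-th coordinate of $\mW\vx'$ is the linear functional $\sum_j W_{ij}x'_j$, and over $x'_j \in [\dot{x}_j-\delta_j, \dot{x}_j+\delta_j]$ we have
\begin{equation*}
\max_{\vx'} \sum_j W_{ij} x'_j = \sum_j W_{ij}\dot{x}_j + \sum_j |W_{ij}|\delta_j, \qquad \min_{\vx'} \sum_j W_{ij} x'_j = \sum_j W_{ij}\dot{x}_j - \sum_j |W_{ij}|\delta_j,
\end{equation*}
since each summand $W_{ij}x'_j$ is independently extremized by taking $x'_j = \dot{x}_j + \sign(W_{ij})\delta_j$ for the maximum (and the opposite sign for the minimum), and this corner lies in the input box. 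This yields center $(\mW\dot{\vx})_i$ and radius $(|\mW|\bm{\delta})_i$ for coordinate $i$, proving the claim.

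\textbf{Assembling the two cases.} For the optimal box I would apply the building block once with $\mW = \mW^{(2)}\mW^{(1)}$ and input box $\B^{\bm{\epsilon}}(\vx)$ (center $\vx$, radius $\bm{\epsilon}$), giving center $\mW^{(2)}\mW^{(1)}\vx = \vf(\vx)$ and radius $|\mW^{(2)}\mW^{(1)}|\bm{\epsilon}$, i.e.\ $(\ubopt-\lbopt)/2 = |W^{(2)}W^{(1)}|\bm{\epsilon}$. For the layer-wise box I would apply the building block twice, following the definition $\ibpl = \ibpb(\mW^{(2)}, \ibpb(\mW^{(1)}, \B^{\bm{\epsilon}}(\vx)))$: the inner application produces a box with center $\mW^{(1)}\vx$ and radius $|\mW^{(1)}|\bm{\epsilon}$, and feeding this box into the outer application produces center $\mW^{(2)}\mW^{(1)}\vx = \vf(\vx)$ and radius $|\mW^{(2)}|(|\mW^{(1)}|\bm{\epsilon}) = |W^{(2)}||W^{(1)}|\bm{\epsilon}$, i.e.\ $(\ubbox-\lbbox)/2 = |W^{(2)}||W^{(1)}|\bm{\epsilon}$. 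Since both constructions yield a box centered at $\vf(\vx)$, the centers coincide.

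\textbf{Main obstacle.} The only non-bookkeeping step is verifying that the single-layer interval image is genuinely the \emph{smallest} enclosing box rather than merely an over-approximation; this hinges on the fact that the sign-aligned corner $x'_j = \dot{x}_j + \sign(W_{ij})\delta_j$ maximizing coordinate $i$ is itself feasible, so the interval endpoints are actually attained. Everything else — the coordinate-wise decoupling of the optimization and the composition of the two layers — is routine, and the general $L$-layer statement of \cref{lem:box_size_L} follows by an immediate induction on this same building block.
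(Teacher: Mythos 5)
Your proposal is correct and follows essentially the same route as the paper: establish the single-layer fact that a linear map sends a box with center $\dot{\vx}$ and radius $\bm{\delta}$ to an optimal box with center $\mW\dot{\vx}$ and radius $|\mW|\bm{\delta}$, then apply it once to the product $\mW^{(2)}\mW^{(1)}$ for $\ibpb$ and twice sequentially for $\ibpl$. If anything, your write-up is slightly more explicit than the paper's about why the single-layer formula gives the \emph{smallest} enclosing box (feasibility of the sign-aligned corner), a step the paper leaves implicit.
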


\begin{proof}
    First, assume $W^{(1)} \in \mathcal{R}^{d_1\times d_0}$, $W^{(2)} \in \mathcal{R}^{d_2\times d_1}$ and $B_i = [-1,1]^{d_i}$ for $i=0,1,2$, where $d_i \in \mathcal{Z}_+$ are some positive integers. The input box can be represented as $\diag(\epsilon_0) B_0 + b$ for $\epsilon_0 = \epsilon$.

    For a single linear layer, the box propagation yields
    \begin{align}
         \ibpm(W^{(1)}(\diag(\epsilon_0) B_0 + b)) \nonumber                                      
         & = \ibpm(W^{(1)} \diag(\epsilon_0) B_0) + W^{(1)} b \nonumber                                 \\
         & = \diag\left(\sum_{j=1}^{d_0} |W^{(1)}_{i,j}| \epsilon_0[j]\right) B_1 + W^{(1)} b \nonumber \\
         & := \diag(\epsilon_1) B_1 + W^{(1)} b. \label{eq:pf1-1}
    \end{align}
    Applying \cref{eq:pf1-1} iteratively, we get the explicit formula of layer-wise propagation for two-layer linear network:
    \begin{align}
         & \ibpm(W^{(2)} \ibpm(W^{(1)} (\diag(\epsilon_0) B_0 + b))) \nonumber                                                         \\
         & =\ibpm\left(W^{(2)}(\diag(\epsilon_1) B_1 + W^{(1)} b)\right) \nonumber                                                    \\
         & = \diag\left(\sum_{k=1}^{d_1} |W^{(2)}_{i,k}| \epsilon_1[k]\right) B_2 + W^{(2)} W^{(1)} b \nonumber                           \\
         & = \diag\left(\sum_{j=1}^{d_0} \epsilon_0[j] \left(\sum_{k=1}^{d_1} |W^{(2)}_{i,k} W^{(1)}_{k,j}|\right)\right)B_2 + W^{(2)} W^{(1)} b. \label{eq:pf1-2}
    \end{align}
    Applying \cref{eq:pf1-1} on $W:=W^{(2)} W^{(1)}$, we get the explicit formula of the tightest box:
    \begin{align}
         & \ibpm(W^{(2)} W^{(1)}(\diag(\epsilon_0) B_0 + b)) \nonumber                                                                 \\
         & = \diag\left(\sum_{j=1}^{d_0} | (W^{(2)} W^{(1)})_{i,j} | \epsilon_0[j]\right) B_2 + W^{(2)} W^{(1)} b\nonumber                     \\
         & = \diag\left(\sum_{j=1}^{d_0} \epsilon_0[j]\left| \sum_{k=1}^{d_1} W^{(2)}_{i,k} W^{(1)}_{k,j} \right| \right) B_2 + W^{(2)} W^{(1)} b. \label{eq:pf1-3}
    \end{align}
\end{proof}

Now, we use induction and \cref{lem:box_size_2} to prove \cref{lem:box_size_L}, restated below for convenience. The key insight is that a multi-layer DLN is equivalent to a single-layer linear network. Thus, we can group layers together and view general DLNs as two-layer DLNs.

\boxprop*

\begin{proof}
    For $L=2$, by \cref{lem:box_size_2}, the result holds. Assume for $L \le m$, the result holds. Therefore, for $L = m+1$, we group the first $m$ layers as a single layer, resulting in a ``two'' layer equivalent network. Thus, $(\ubopt - \lbopt) / 2 = \left|\mW^{(m+1)} \Pi_{k=1}^m \mW^{(k)} \right| \bm{\epsilon} = \left| \Pi_{k=1}^L \mW^{(k)} \right| \bm{\epsilon}$. Similarly, by \cref{eq:pf1-1}, we can prove $(\ubopt - \lbopt) / 2 = \left(\left|\mW^{(m+1)} \right|\Pi_{k=1}^m \left|\mW^{(k)} \right|\right) \bm{\epsilon} = \left(\Pi_{k=1}^L \left|\mW^{(k)} \right|\right) \bm{\epsilon}$. The claim about center follows by induction similarly.
\end{proof}

\paragraph{Proof of \cref{lem:invariant}}

Here, we prove \cref{lem:invariant}, restated below for convenience.

\invariant*

\begin{proof} \label{pf:invariant}
    We prove the statement via comparing the box bounds. By \cref{lem:box_size_2}, we need $\left| \sum_{k=1}^{d_1} W^{(2)}_{i,k} W^{(1)}_{k,j} \right| = \sum_{k=1}^{d_1} |W^{(2)}_{i,k} W^{(1)}_{k,j}|$. The triangle inequality of absolute function says this holds if and only if $W^{(2)}_{i,k} W^{(1)}_{k,j} \ge 0$ for all $k$ or $W^{(2)}_{i,k} W^{(1)}_{k,j} \le 0$ for all $k$.
\end{proof}

\paragraph{Proof of \cref{thm:invariant-reg}}

Here, we prove \cref{thm:invariant-reg}, restated below for convenience.

\noninvariant*

\begin{proof}
    The assumption $(W^{(2)} W^{(1)})_{i,j} \cdot (W^{(2)} W^{(1)})_{i, j^\prime} \cdot (W^{(2)} W^{(1)})_{i^\prime, j} \cdot (W^{(2)} W^{(1)})_{i^\prime, j^\prime} < 0$ implies three elements are of the same sign while the other element has a different sign. Without loss of generality, assume $(W^{(2)} W^{(1)})_{i^\prime, j^\prime} < 0$ and the rest three are all positive.

    Assume $W^{(2)}W^{(1)}$ is propagation invariant. By \cref{lem:invariant}, this means $W^{(2)}_{i,\cdot}.\text{sign} = W^{(1)}_{\cdot,j}.\text{sign}$, $W^{(2)}_{i,\cdot}.\text{sign} = W^{(1)}_{\cdot,j^\prime}.\text{sign}$, $W^{(2)}_{i^\prime, \cdot}.\text{sign} = W^{(1)}_{\cdot,j}.\text{sign}$ and $W^{(2)}_{i^\prime, \cdot}.\text{sign} =  - W^{(1)}_{\cdot,j^\prime}.\text{sign}$. Therefore, we have $- W^{(1)}_{\cdot,j^\prime}.\text{sign} = W^{(1)}_{\cdot,j^\prime}.\text{sign}$, which implies all elements of $W^{(1)}_{\cdot,j^\prime}$ must be zero. However, this results in $(W^{(2)} W^{(1)})_{i, j^\prime} = 0$, a contradiction.
\end{proof}

\paragraph{Proof of \cref{thm:tightnessinit}}

Here, we prove \cref{thm:tightnessinit}, restated below for convenience.

\initwidth*

\begin{proof}
    We first compute the size of the layer-wisely propagated box. From \cref{eq:pf1-2}, we get that for the $i$-th dimension,
    \begin{align*}
        \E(u_i - l_i)
         & = \E\left(\sum_{j=1}^{d_0} \epsilon_0[j] \left(\sum_{k=1}^{d_1} |W^{(2)}_{i,k} W^{(1)}_{k,j}|\right)\right)           \\
         & = \sum_{j=1}^{d_0} \epsilon_0[j] \left(\sum_{k=1}^{d_1} \E(|W^{(2)}_{i,k}|)\cdot \E(|W^{(1)}_{k,j}|)\right)           \\
         & = \sigma_1 \sigma_2\sum_{j=1}^{d_0} \epsilon_0[j] \left(\sum_{k=1}^{d_1} \E(|\mathcal{N}(0, 1)|)^2)\right).
    \end{align*}
    Since $\E(|\mathcal{N}(0, 1)|)=\sqrt{\frac{2}{\pi}}$ \footnote{https://en.wikipedia.org/wiki/Half-normal\_distribution}, we have
    \begin{equation}
        \E(u_i - l_i)  = \frac{2}{\pi}\sigma_1 \sigma_2 d_1  \|\epsilon_0\|_1 . \label{eq:3-1}
    \end{equation}
    Now we compute the size of the tightest box. From \cref{eq:pf1-3}, we get that for the $i$-th dimension,
    \begin{align*}
        \E(u_i^* - l_i^*)
          = \E\left( \sum_{j=1}^{d_0} \epsilon_0[j]\left| \sum_{k=1}^{d_1} W^{(2)}_{i,k} W^{(1)}_{k,j} \right| \right)
          = \sigma_1 \sigma_2\sum_{j=1}^{d_0} \epsilon_0[j] \E \left( \left| \sum_{k=1}^{d_1} X_k Y_k \right| \right),
    \end{align*}
    where $X_k$ and $Y_k$ are i.i.d. standard Gaussian random variables. Using the law of total expectation, we have
    \begin{align*}
        \E \left( \left| \sum_{k=1}^{d_1} X_k Y_k \right| \right)
         & = \E \left( \E \left( \left| \sum_{k=1}^{d_1} X_k Y_k \right| \;\middle|\; Y_k \right) \right)           \\
         & = \E \left( \E \left(\left|\mathcal{N}(0, \sum_{k=1}^{d_1} Y_k^2) \right| \;\middle|\; Y_k\right)\right) \\
         & = \sqrt{\frac{2}{\pi}} \E \left( \sqrt{\sum_{k=1}^{d_1} Y_k^2} \right)                                   \\
         & = \sqrt{\frac{2}{\pi}} \E (\sqrt{\chi^2(d_1)}).
    \end{align*}
    Since $\E (\sqrt{\chi^2(d_1)}) = \sqrt{2} \Gamma(\frac{1}{2} (d_1+1)) / \Gamma(\frac{1}{2}d_1)$, \footnote{https://en.wikipedia.org/wiki/Chi\_distribution} we have
    \begin{equation}
        \E(u_i^* - l_i^*)  = \frac{2}{\sqrt{\pi}}\sigma_1 \sigma_2 \|\epsilon_0\|_1 \Gamma(\frac{1}{2} (d_1+1)) / \Gamma(\frac{1}{2}d_1). \label{eq:3-2}
    \end{equation}
    Combining \cref{eq:3-1} and \cref{eq:3-2}, we have:
    \begin{equation}
        \frac{\E(u_i - l_i)}{\E(u_i^* - l_i^*)} = \frac{d_1 \Gamma(\frac{1}{2}d_1)}{\sqrt{\pi}\Gamma(\frac{1}{2}(d_1+1))}. \label{eq:3-3}
    \end{equation}
    To see the asymptotic behavior, use $\Gamma(x+\alpha)/\Gamma(x) \sim x^\alpha$,\footnote{https://en.wikipedia.org/wiki/Gamma\_function\#Stirling's\_formula} we have
    \begin{equation}
        \frac{\E(u_i - l_i)}{\E(u_i^* - l_i^*)} \sim \frac{1}{\sqrt{\pi}} d_1^{\frac{1}{2}}.
    \end{equation}
    To establish the bounds on the minimum expected slackness, we use \cref{lem:property}.
\end{proof}

\begin{restatable}[]{lem}{growth}
    \label[lemma]{lem:property}
    Let $g(n):= \frac{n \Gamma(\frac{1}{2}n)}{\sqrt{\pi}\Gamma(\frac{1}{2}(n+1))}$. $g(n)$ is monotonically increasing for $n \ge 1$. Thus, for $n \ge 2$, $g(n) \ge g(2) > 1.27$.
\end{restatable}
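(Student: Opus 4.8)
The plan is to treat $g$ as a smooth function of a real variable $n>0$ and to show that its logarithmic derivative is strictly positive; monotonicity on the integers $n\ge 1$ then follows a fortiori, and the explicit constant is obtained by evaluating $g(2)$ in closed form. Writing $\ln g(n) = \ln n + \ln\Gamma(n/2) - \tfrac12\ln\pi - \ln\Gamma(\tfrac{n+1}{2})$ and differentiating, I would obtain
\[
\frac{d}{dn}\ln g(n) = \frac1n - \frac12\big(\psi(\tfrac{n+1}{2}) - \psi(\tfrac n2)\big),
\]
where $\psi = \Gamma'/\Gamma$ is the digamma function. Thus everything reduces to the single inequality $\psi(x+\tfrac12)-\psi(x) < \tfrac1x$ for all $x>0$, applied at $x=n/2$ (for which $1/x = 2/n$).

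The heart of the proof — and the step I expect to be the main obstacle — is establishing this digamma inequality cleanly, since the ratio of Gamma values does not simplify algebraically. I would use the Gauss integral representation $\psi(z) = -\gamma + \int_0^\infty \frac{e^{-t}-e^{-zt}}{1-e^{-t}}\,dt$, which gives
\[
\psi(x+\tfrac12)-\psi(x) = \int_0^\infty \frac{e^{-xt}-e^{-(x+1/2)t}}{1-e^{-t}}\,dt = \int_0^\infty \frac{e^{-xt}}{1+e^{-t/2}}\,dt,
\]
where the last step uses the factorization $1-e^{-t}=(1-e^{-t/2})(1+e^{-t/2})$. Comparing with $\tfrac1x = \int_0^\infty e^{-xt}\,dt$ and noting that $0 < \frac{1}{1+e^{-t/2}} < 1$ for every $t>0$, the desired strict inequality follows immediately. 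Substituting $x=n/2$ then yields $\tfrac12(\psi(\tfrac{n+1}2)-\psi(\tfrac n2)) < \tfrac1n$, hence $\frac{d}{dn}\ln g(n) > 0$ for all $n>0$, so $g$ is strictly increasing on $(0,\infty)$ and in particular for $n\ge1$.

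For the explicit bound I would evaluate $g(2) = \frac{2\,\Gamma(1)}{\sqrt\pi\,\Gamma(3/2)}$ using $\Gamma(1)=1$ and $\Gamma(3/2)=\tfrac{\sqrt\pi}{2}$, giving $g(2)=\tfrac{4}{\pi}\approx 1.2732 > 1.27$; combined with monotonicity this yields $g(n)\ge g(2) > 1.27$ for all $n\ge 2$. Should a more elementary route be preferred over the integral argument, an alternative is to bound the discrete ratio $g(n+1)/g(n) = \frac{2(n+1)}{n^2}\big(\Gamma(\tfrac{n+1}2)/\Gamma(\tfrac n2)\big)^2$ from below via a Kershaw/Gautschi-type estimate $\Gamma(x+\tfrac12)/\Gamma(x)\ge\sqrt{x-\tfrac14}$ and checking $(x-\tfrac14)(2x+1)\ge 2x^2$ for $x\ge\tfrac12$; however, I would make the integral route the main argument since it is fully self-contained and disposes of all $n>0$ at once.
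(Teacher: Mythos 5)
Your proof is correct, and its skeleton coincides with the paper's: both differentiate $\ln g$ and reduce monotonicity to exactly the same digamma inequality, $\psi(x+\tfrac12)-\psi(x) < \tfrac1x$ at $x=n/2$. Where you diverge is in how that inequality is established. The paper disposes of it in one line using two standard facts about the digamma function: $\psi$ is increasing on $(0,\infty)$ and satisfies the recurrence $\psi(z+1)=\psi(z)+\tfrac1z$, so that $\psi(x+\tfrac12) < \psi(x+1) = \psi(x)+\tfrac1x$. Your Gauss-integral argument proves the same inequality with heavier machinery; what it buys is an exact expression for the gap, $\psi(x+\tfrac12)-\psi(x)=\int_0^\infty \frac{e^{-xt}}{1+e^{-t/2}}\,dt$, which makes the strictness and the size of the deficit explicit and would adapt to shifts other than $\tfrac12$. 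So the step you flagged as ``the main obstacle'' has a substantially shorter, fully elementary solution. On the other hand, your closing computation $g(2)=\frac{2\,\Gamma(1)}{\sqrt{\pi}\,\Gamma(3/2)}=\frac{4}{\pi}\approx 1.2732>1.27$ is a genuine addition: the paper asserts the numerical bound without evaluating $g(2)$. Your fallback via a Kershaw-type bound on $\Gamma(x+\tfrac12)/\Gamma(x)$ also works (the algebra $(x-\tfrac14)(2x+1)\ge 2x^2$ for $x\ge\tfrac12$ checks out) but is unnecessary given either of the main arguments.
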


\begin{proof}
    Using polygamma function $\psi^{(0)}(z) = \Gamma^\prime(z) / \Gamma(z)$,\footnote{https://en.wikipedia.org/wiki/Polygamma\_function} we have
    \begin{equation*}
        g^\prime(n) \propto 1+\frac{1}{2}n \left(\psi^{(0)}\left(\frac{1}{2}n\right) - \psi^{(0)}\left(\frac{1}{2}(n+1)\right)\right).
    \end{equation*}
    Using the fact that $\psi^{(0)}(z)$ is monotonically increasing for $z > 0$ and $\psi^{(0)}(z+1) = \psi^{(0)}(z) + \frac{1}{z}$, we have
    \begin{align*}
         & 1+\frac{1}{2}n \left(\psi^{(0)}\left(\frac{1}{2}n\right) - \psi^{(0)}\left(\frac{1}{2}(n+1)\right)\right) \\
         & > 1+\frac{1}{2}n \left(\psi^{(0)}\left(\frac{1}{2}n\right) - \psi^{(0)}\left(\frac{1}{2}n+1\right)\right) \\
         & = 1+\frac{1}{2}n \left(- \frac{2}{n}\right)                                                               \\
         & = 0.
    \end{align*}
    Therefore, $g^\prime(n)$ is strictly positive for $n \ge 1$, and thus $g(n)$ is monotonically increasing for $n \ge 1$.
\end{proof}

As a final comment, we visualize $g(n)$ in \cref{fig:gn}. As expected, $g(n)$ is monotonically increasing in the order of $O(\sqrt{n})$.

\begin{figure}
    \centering
    \includegraphics[width=.45\linewidth]{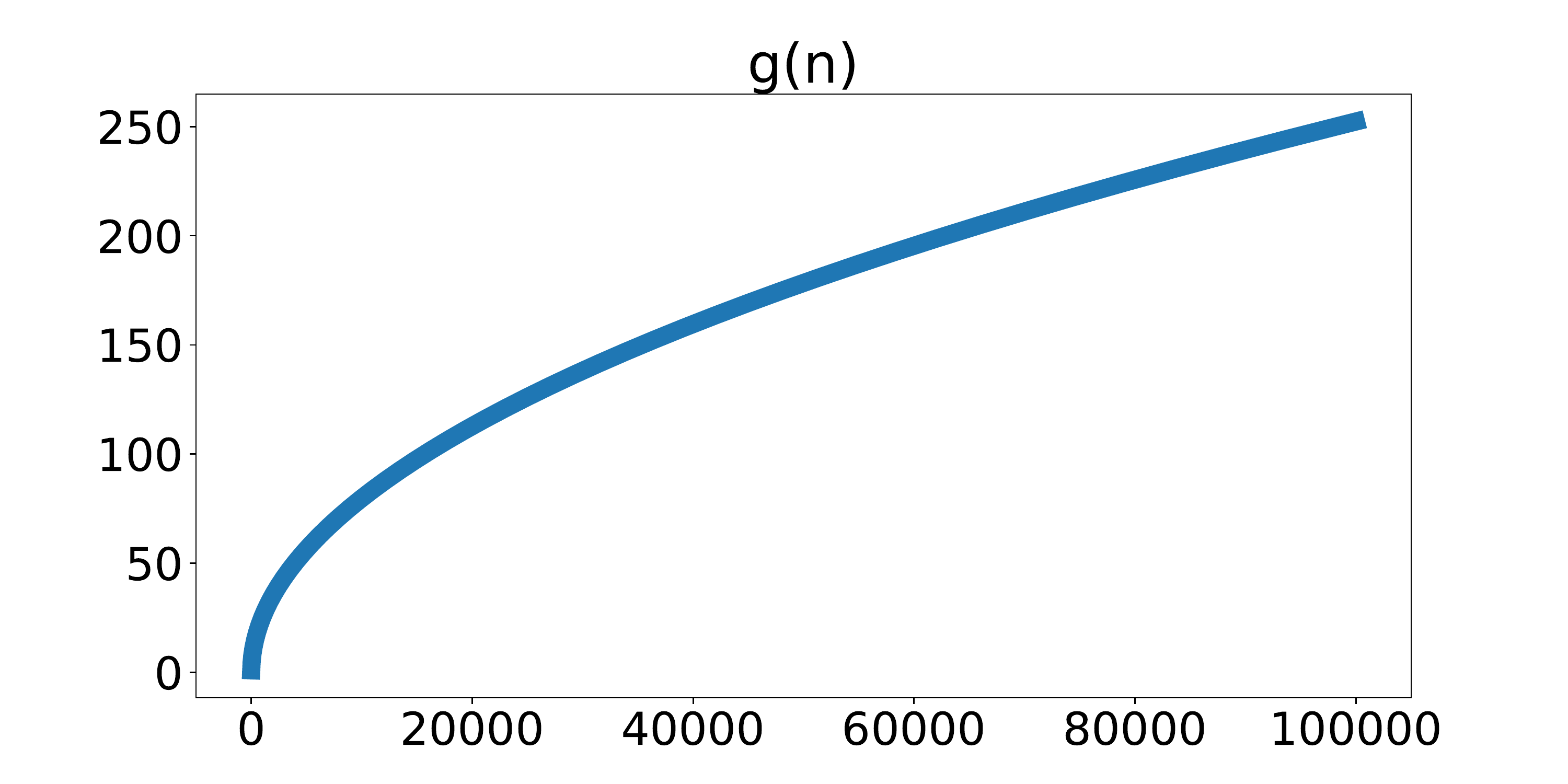}
    \includegraphics[width=.45\linewidth]{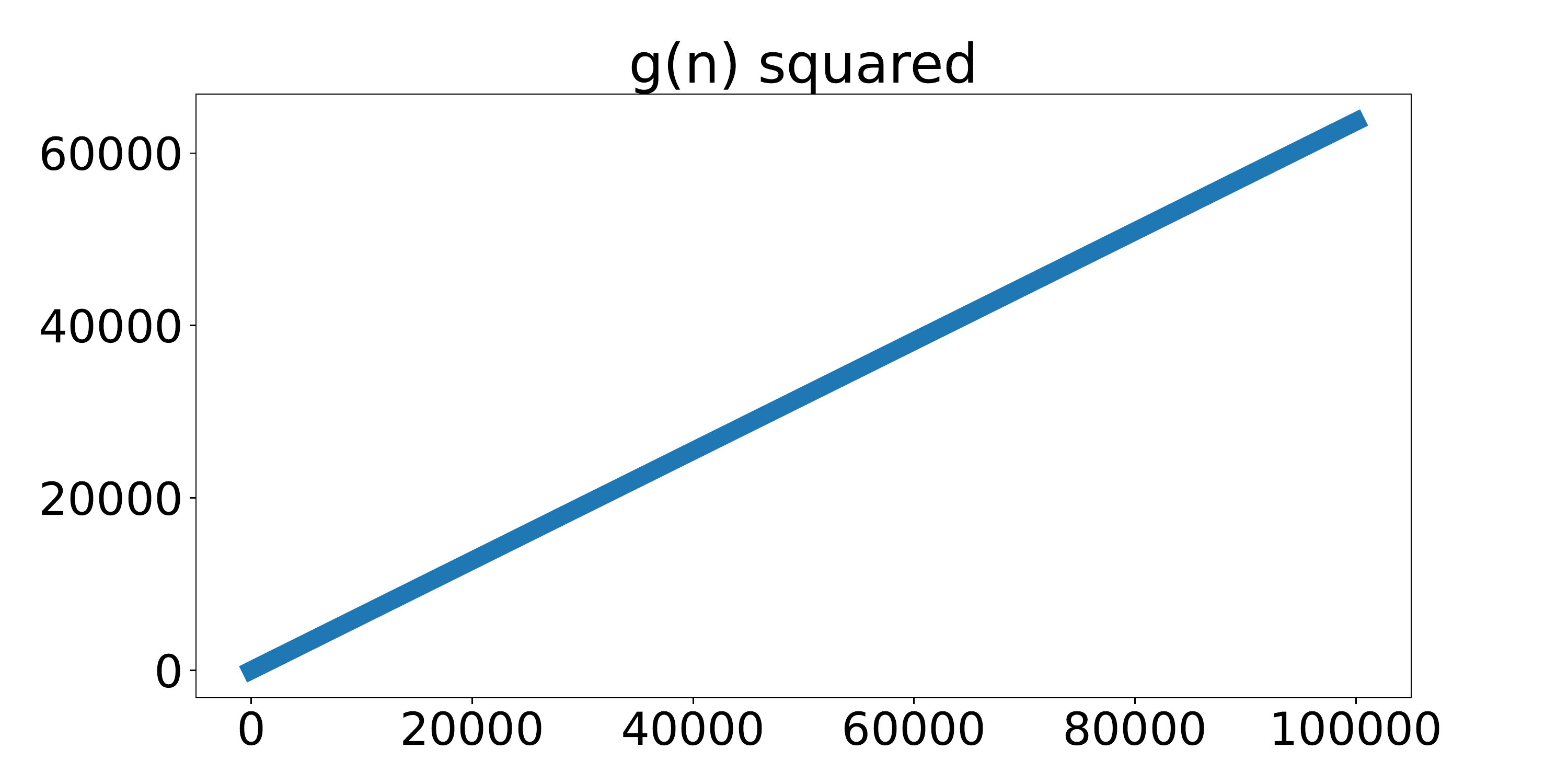}
    \caption{$g(n)$ and $g^2(n)$ visualized.} \label{fig:gn}
\end{figure}

\paragraph{Proof of \cref{cor:exp_growth}}

Here, we prove \cref{cor:exp_growth}, restated below for convenience.

\initdepth* 

\begin{proof}
    This is pretty straightforward and only requires a coarse application of \cref{thm:tightnessinit}. Without loss of generality, we assume $L$ is even. If $L$ is odd, then we simply discard the slackness introduced by the last layer, \ie, assume the last layer does not introduce additional slackness.

    We group the $2i-1$-th and $2i$-th layer as a new layer. By \cref{thm:tightnessinit}, these $L/2$ subnetworks all introduce an additional slackness factor of $\tau$. Note that \cref{eq:3-1} implies that the size of the output box is proportional to the size of the input box. Therefore, the layer-wisely propagated box of $\Pi_{i=1}^{L} W_i$ is $\tau^{L/2}$ looser than the layer-wisely propagated box of $\Pi_{j=1}^{L/2} (W_{2j-1} W_{2j})$. In addition, the size of the tightest box for $\Pi_{i=1}^{L} W_i$ is upper bounded by layer-wisely propagating $\Pi_{j=1}^{L/2} (W_{2j-1} W_{2j})$. Therefore, the minimum expected slackness is lower bounded by $\tau^{L/2}$.
\end{proof}

\paragraph{Proof of \cref{thm:radius_tightness}}

Here, we prove \cref{thm:radius_tightness}, restated below for convenience.

\ibptraining*

\begin{proof}
    We prove a stronger claim: $\langle \nabla_\theta (R(\bm{\epsilon}  + \Delta \bm{\epsilon}) - R(\bm{\epsilon})), \nabla_\theta \tau \rangle \le 0$ for all $\bm{\epsilon} \ge 0$ and $\Delta \bm{\epsilon}>0$. Let $\bm{\epsilon}=\bm{0}$ yields the theorem.

    We prove the claim for $\Delta \bm{\epsilon} \rightarrow 0$. For large $\Delta \bm{\epsilon}$, we can break it into $R(\bm{\epsilon}+\Delta \bm{\epsilon}) - R(\bm{\epsilon}) = \sum_{i=1}^n R(\bm{\epsilon} + \frac{i}{n} \Delta \bm{\epsilon}) - R(\bm{\epsilon} + \frac{i-1}{n} \Delta \bm{\epsilon})$, thus proving the claim since each summand satisfies the theorem.

    Let $L_1 = R(\bm{\epsilon})$ and $L_2 = R(\bm{\epsilon}+\Delta \bm{\epsilon})$. By Taylor expansion, we have $L_2 = L_1 + \Delta \bm{\epsilon}^\top \ibpW \nabla_\vu g = L_1 + \frac{1}{\tau}\Delta \bm{\epsilon}^\top \optW \nabla_\vu g$, where $\nabla_\vu g = \nabla_\vu g(\vu)$ evaluated at $\vu = \ibpW \bm{\epsilon}$. Note that the increase of $\bm{\epsilon}$ would increase the risk, thus $\nabla_\vu g \ge \bm{0}$. 
    
    For the $i^{\text{th}}$ parameter $\theta_i$, $\nabla_{\theta_i} (L_2 - L_1) \nabla_{\theta_i} \tau = \frac{1}{\tau^2} \Delta \bm{\epsilon}^\top (\tau \nabla_{\theta_i} \optW - \optW\nabla_{\theta_i}\tau) \nabla_\vu g \nabla_{\theta_i} \tau$. Thus, $\langle \nabla_\theta (L_2 - L_1), \nabla_\theta \tau \rangle = \frac{1}{\tau^2} \Delta \bm{\epsilon}^\top (\tau \sum_i \nabla_{\theta_i} \tau \cdot \nabla_{\theta_i} \optW - \optW \|\nabla_{\theta} \tau\|_2^2) \nabla_{\vu} g$. Since $\Delta \bm{\epsilon} > \bm{0}$ and $\nabla_\vu g \ge \bm{0}$, it sufficies to prove that $\tau \sum_i \nabla_{\theta_i} \tau \cdot \nabla_{\theta_i} \optW - \optW \|\nabla_{\theta} \tau\|_2^2$ is nonpositive, \ie, $\tau \langle \nabla_{\theta} \tau, \nabla_{\theta} \optW_{ij} \rangle - \optW_{ij}\|\nabla_\theta \tau\|_2^2$ is nonpositive for every $i,j$.

    Since $\|\vu\|_2 \|\vv\|_2 \ge \langle \vu, \vv \rangle$, we have 
    \begin{align*}
        &\quad \quad \frac{\|\nabla_\theta \optW_{ij}\|_2}{\optW_{ij}} \le \frac{1}{2} \frac{\|\nabla_\theta \ibpW_{ij}\|_2}{\ibpW_{ij}} \\
        & \Rightarrow \|\nabla_\theta \log \ibpW\|_2 \ge 2 \|\nabla_\theta \log \optW\|_2 \\
        & \Rightarrow \|\nabla_\theta \log \ibpW\|_2^2 \ge 2 \langle \nabla_\theta \log \ibpW, \nabla_\theta \log \optW \rangle
    \end{align*}
    Therfore, 
    $\|\nabla_\theta \log \tau \|_2^2 = \|\nabla_\theta (\log \optW_{ij} - \log\ibpW_{ij})\|_2^2 = \|\nabla_\theta \log \optW_{ij}\|_2^2 - 2 \langle \nabla_\theta \log \ibpW, \nabla_\theta \log \optW \rangle + \|\nabla_\theta \log \ibpW\|_2^2 \ge \|\nabla_\theta \log \optW_{ij}\|_2^2$. This means $\frac{\|\nabla_\theta \optW_{ij}\|_2}{\optW_{ij}} \le \frac{\|\nabla_\theta \tau\|_2}{\tau}$, thus $\optW_{ij}\|\nabla_\theta \tau\|_2^2 \ge \tau \|\nabla_\theta \tau\|_2 \|\nabla_\theta \optW_{ij}\|_2 \ge \tau \langle \nabla_{\theta} \tau, \nabla_{\theta} \optW_{ij} \rangle$, which fulfills our goal.
\end{proof}

\paragraph{Proof of \cref{thm:ibp_reconstruction}}

Here, we prove \cref{thm:ibp_reconstruction}, restated below for convenience.

\embedding*

\begin{proof}
    Since box propagation for linear functions maps the center of the input box to the center of the output box, the center of the output box is exactly $\hat{X}$. By \cref{lem:box_size_2}, we have $\bm{\delta} = |U_k||U_k|^\top \epsilon \bm{1}$. For notational simplicity, let $V = |U_k|$, thus
    \begin{align*}
        \bm{\delta}_i = \sum_{j=1}^k V_{ij} (\sum_{p=1}^d V^\top_{jp} \epsilon)
        = \epsilon\sum_{p=1}^d \sum_{j=1}^k V_{ij} V_{pj}
        = \epsilon \sum_{j=1}^k V_{ij} \|V_{:j}\|_1.
    \end{align*}
    Therefore, $\E \bm{\delta}_i / \epsilon = \sum_{j=1}^k \E (V_{ij} \|V_{:j}\|_1) = c k$, where $c = \E (V_{ij} \|V_{:j}\|_1)$. Since $V_{:j}$ is the absolute value of a column of the orthogonal matrix uniformly drawn, $V_{:j}$ itself is the absolute value of a vector drawn uniformly from the unit hyper-ball. By \citet{cook1957rational} and \citet{marsaglia1972choosing}, $V_{:j}$ is equivalent in distribution to \emph{i.i.d.} draw samples from the standard Gaussian for each dimension and then normalize it by its $L_2$ norm. For notational simplicity, let $V_{:j} \overset{d}{=} v = |u|$, where $u = \hat{u} / \|\hat{u}\|_2$ and all dimensions of $\hat{u}$ are \emph{i.i.d.} drawn from the standard Gaussian distribution, thus $c = \E(v_1 \|v\|_1)$.

    Expanding $\|v\|_1$, we have $c = \E(v_1^2) + \sum_{i=2}^d \E(v_1 v_i) = \frac{1}{d} \E (\|v\|_2^2) + (d-1) \E(v_1 v_2) = \frac{1}{d} + (d-1) \E(v_1 v_2)$. From page 20 of \citet{pinelis2016optimalorder}, we know each entry in $u$ converges to $\mathcal{N}(0, 1/d)$ at $O(1/d)$ speed in Kolmogorov distance. In addition, $\E(v_1 v_2) = \E(\E(v_2 \mid v_1) \cdot v_1) = \E( v_1 \sqrt{1- v_1^2}) \E(v^\prime_2)$, where $v^\prime$ is the absolute value of a random vector uniformly drawn from the $d-1$ dimensional sphere. Therefore, for large $d$, $c = (d-1)\E(v_1 \sqrt{1-v_1^2}) \E(v_2^\prime) = (d-1)\E(v_1) \E(v_2^\prime) = (d-1)\E(|\mathcal{N}(0, 1/d)|) \E(|\mathcal{N}(0, 1/(d-1))|) = \frac{2}{\pi}$.

    To show how good the asymptotic result is, we run Monte-Carlo to get the estimation of $c$. As shown in the left of \cref{fig:MCMC_reconstruction}, the Monte-Carlo result is consistent to this theorem. In addition, it converges very quickly, \eg, stablizing at 0.64 when $d \ge 100$.
    \begin{figure}[tbp]
        \centering
        \includegraphics[width=.25\linewidth]{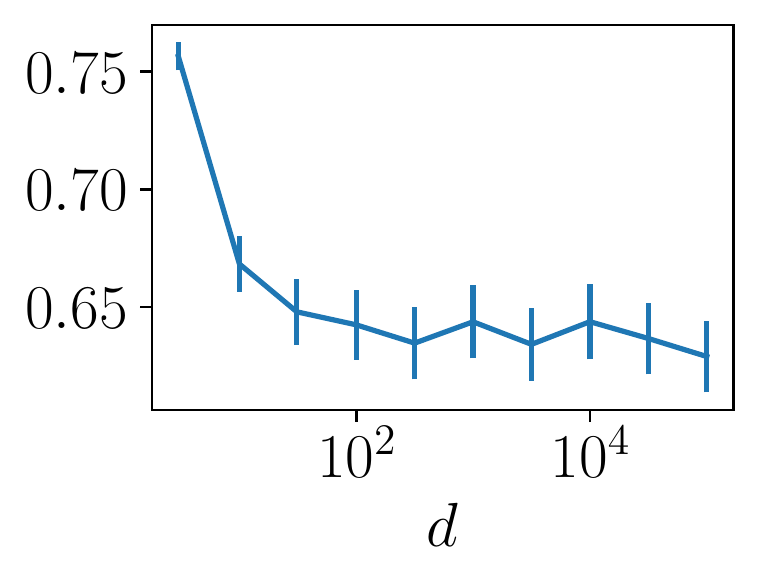}
        \includegraphics[width=.23\linewidth]{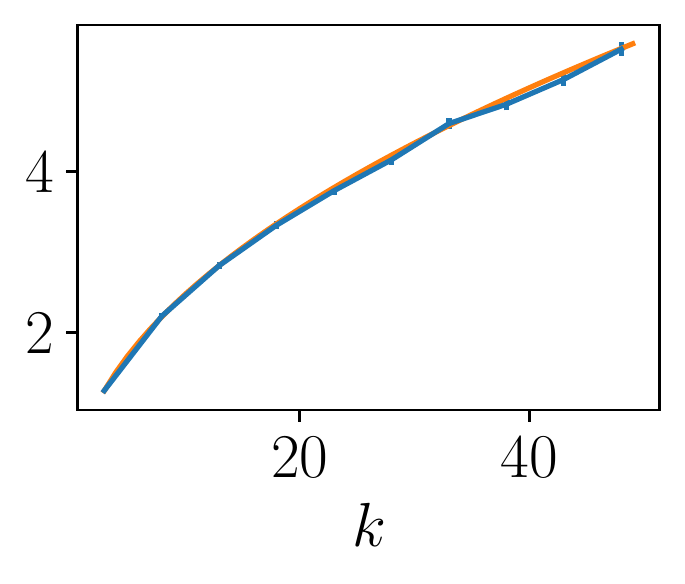}

        \caption{Monte-Carlo estimations of \cref{thm:ibp_reconstruction}. Result bases on 10000 samples for each $d$. Left: $c$ plotted against $d$ in log scale. Right: $\E(\bm{\delta}^*_i)$ plotted against $k$ for $d=2000$ (blue), together with the theoretical predictions (orange).} \label{fig:MCMC_reconstruction}
    \end{figure}

    Now we start proving (2). By \cref{lem:box_size_2}, we have $\delta^* = |U_k U_k^\top|\epsilon\bm{1}$. Thus, 
    $$\E(\bm{\delta}^*_i / \epsilon) = \sum_{j=1}^d \E\left|\sum_{p=1}^k U_{ip} U_{jp}\right|
    = \sum_{j \ne i} \E\left|\sum_{p=1}^k U_{ip} U_{jp}\right| + \E(\sum_{p=1}^k U_{ip}^2)
    = (d-1) \E\left|\sum_{p=1}^k U_{ip} U_{jp}\right| + \frac{k}{d}.$$
    In addition, we have
    \begin{align*}
        \quad (d-1)\E\left|\sum_{p=1}^k U_{ip} U_{jp}\right|  
        &= (d-1)\E_{U_i} \left(\E_{U_j} \left( \left|\sum_{p=1}^k U_{ip} U_{jp}\right| \bigg\vert U_i \right) \right) \\
        &\rightarrow (d-1)\E_{U_i} \left( \E\left|\mathcal{N}\left(0, \frac{\sum_{p=1}^k U_{ip}^2}{d-1}\right)\right|\right) \\
        &= (d-1) \sqrt{\frac{2}{\pi(d-1)}} \E\sqrt{\sum_{p=1}^k U_{ip}^2} \\
        &= \sqrt{\frac{2(d-1)}{\pi}} \E \sqrt{\frac{1}{d} \chi^2(k)} \\
        &\rightarrow \frac{2}{\sqrt{\pi}} \frac{\Gamma(\frac{1}{2}(k+1))}{\Gamma(\frac{1}{2}k)},
    \end{align*}
    where we use again that for large $d$, the entries of a column tends to Gaussian. This proves (2). The expected tightness follows by definition, \ie, dividing the result of (1) and (2).
\end{proof}

The right of \cref{fig:MCMC_reconstruction} plots the Monte-Carlo estimations against our theoretical results. Clearly, this confirms our result.

\section{Experimental Details}\label{app:exp_details}

\subsection{Dataset}

We use the \mnist \citep{lecun2010mnist} and \cifar \citep{krizhevsky2009learning} datasets for our experiments. Both are open-source and freely available. 
For \mnist, we do not apply any preprocessing or data augmentation.
For \cifar, we normalize images with their mean and standard deviation and, during training, first apply 2-pixel zero padding and then random cropping to $32 \times 32$.

\subsection{Model Architecture}

We follow previous works \citep{ShiWZYH21,MuellerEFV22,MaoMFV2023} and use a 7-layer convolutional network \cnns in most experiments. We also use a simplified 3-layer convolutional network \cnnt in \cref{sec:training_method}. Details about them can be found in the released code.

\subsection{Training}

Following previous works \citep{MuellerEFV22,MaoMFV2023}, we use the initialization, warm-up regularization, and learning schedules introduced by \citet{ShiWZYH21}. Specifically, for \mnist, the first 20 epochs are used for $\epsilon$-scheduling, increasing $\epsilon$ smoothly from 0 to the target value. Then, we train an additional 50 epochs with two learning rate decays of 0.2 at epochs 50 and 60, respectively. For \cifar, we use 80 epochs for $\epsilon$-annealing, after training models with standard training for 1 epoch. We continue training for 80 further epochs with two learning rate decays of 0.2 at epochs 120 and 140, respectively. The initial learning rate is $5\times 10^{-3}$ and the gradients are clipped to an $L_2$ norm of at most $10.0$ before every step.

\subsection{Certification}

We apply \mnbab \citep{FerrariMJV22}, a sate-of-the-art \citep{BrixMBJL22,MuellerBBLJ22} verifier based on multi-neuron constraints \citep{MullerMSPV22,SinghGPV19B} and the branch-and-bound paradigm \citep{BunelLTTKK20}  to certify all models. \mnbab is a state-of-the-art complete certification method built on multi-neuron relaxations. For \cref{tb:sabr_wide}, we use the same hyperparameters for \mnbab as \citet{MuellerEFV22} and set the timeout to  1000 seconds. For other experiments, we use the same hyperparameters but reduce timeout to 200 seconds for efficiency reasons.

\section{Extended Empirical Evaluation}

\subsection{\staps -Training and Regularization Level} \label{app:STAPS_regularization}

\begin{figure}
    \centering
    \vspace{-10mm}
    \begin{subfigure}{.34\linewidth}
        \centering
        \includegraphics[width=\linewidth]{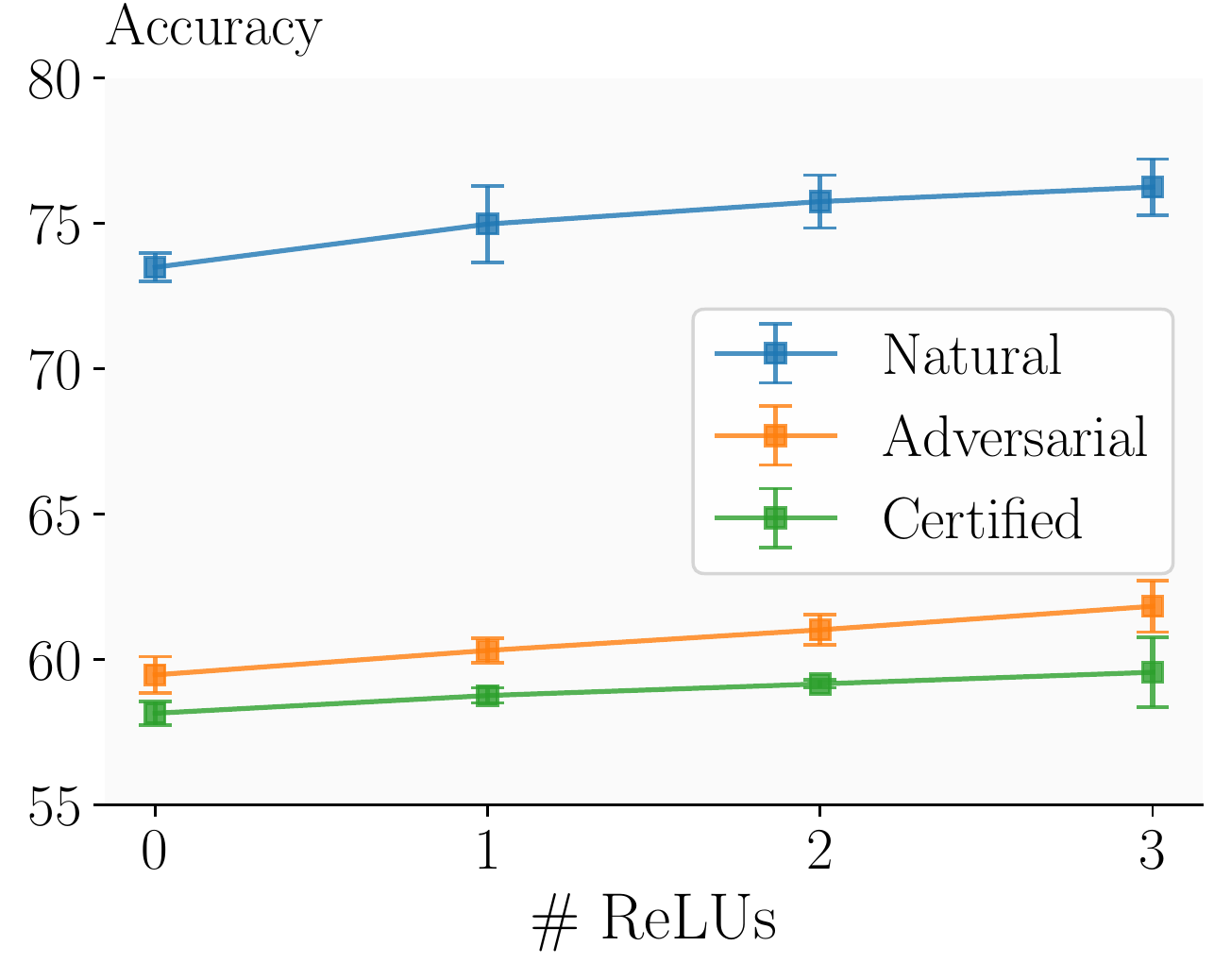}
    \end{subfigure}
    \hfil
    \begin{subfigure}{.34\linewidth}
        \centering
        \includegraphics[width=\linewidth]{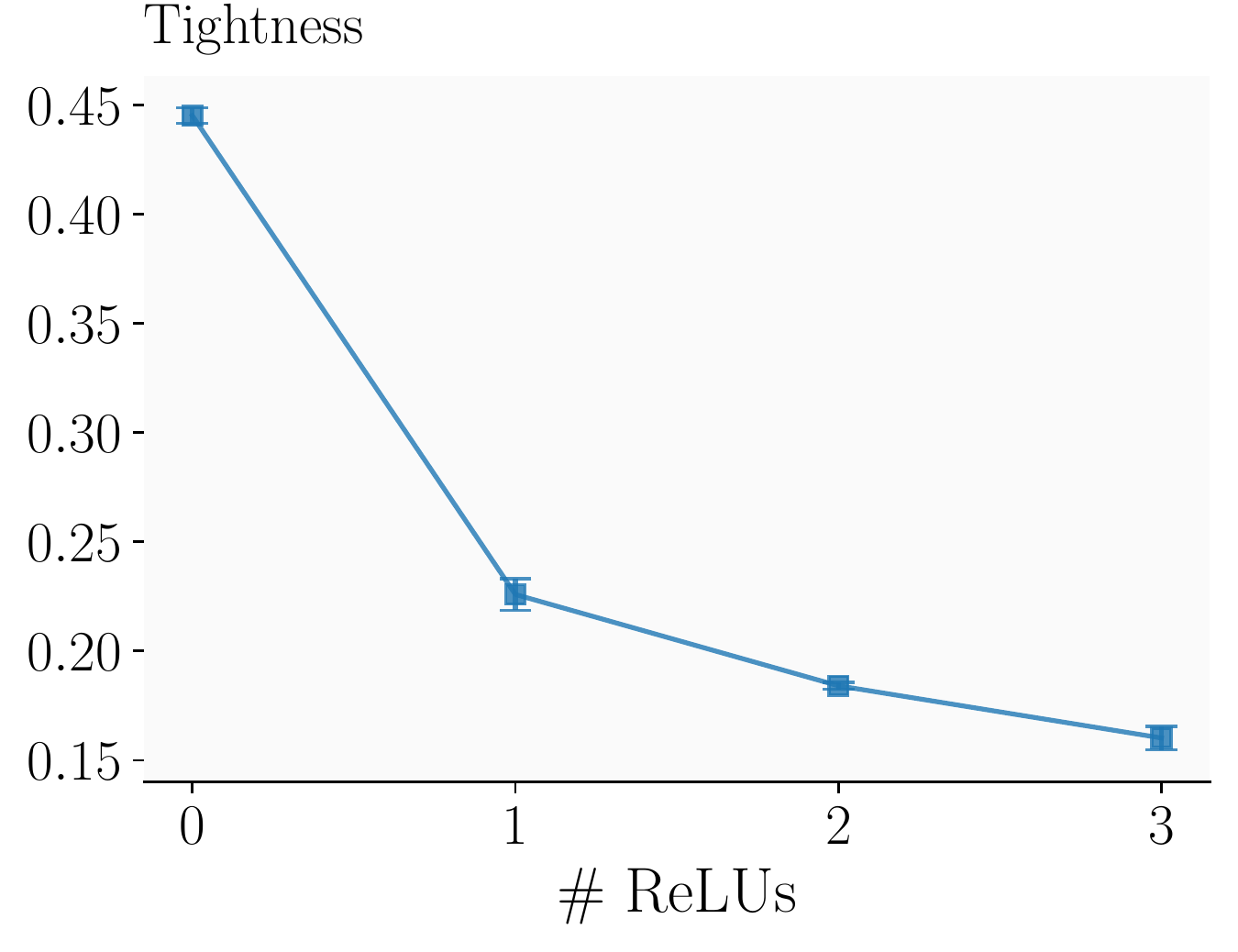}
    \end{subfigure}
    \caption{Accuracies and tightness of a \cnns for \cifar $\epsilon=\frac{2}{255}$ depending on regularization strength with \staps.}
    \label{fig:tightness_regularization_staps}
\end{figure}

To confirm our observations on the interaction of regularization level, accuracies, and propagation tightness from \cref{sec:training_method}, we extend our experiments to \staps \citep{MaoMFV2023}, an additional state-of-the-art certified training method beyond \sabr \citep{MuellerEFV22}. Recall that \staps combines \sabr with adversarial training as follows. The model is first (conceptually) split into a feature extractor and classifier. Then, during training \ibp is used to propagate the input region through the feature extractor yielding box bounds in the model's latent space. Then, adversarial training with \pgd is conducted over the classifier using these box bounds as input region. As \ibp leads to an over-approximation while \pgd leads to an under-approximation, \staps induces more regularization as fewer (ReLU) layers are included in the classifier.

We visualize the result of thus varying regularization levels by changing the number of ReLU layers in the classifier in \cref{fig:tightness_regularization_staps}. We observe very similar trends as for \sabr in \cref{fig:SABR_lambda}, although to a lesser extent, as $0$ ReLU layers in the classifier still recovers \sabr and not standard \ibp. Again, decreasing regularization (increasing the number of ReLU layers in the classifier) leads to reducing tightness and increasing standard and certified accuracies.

\subsection{ Tightness and Propagation Region Size} \label{app:tightness_prop_region_size}

We repeat the experiment illustrated in \cref{fig:SABR_lambda_tightness} for \cifar on \mnist using a \cnnt in \cref{fig:SABR_lambda_tightness_MNIST}. We again observe the propagation region size $\xi$ dominating the tightness (except for very large perturbation sizes of $\epsilon>0.2$), and smaller perturbation magnitudes leading to slightly larger tightness.

\begin{figure}
    \vspace{-10mm}
    \centering
    \includegraphics[width=0.34\linewidth]{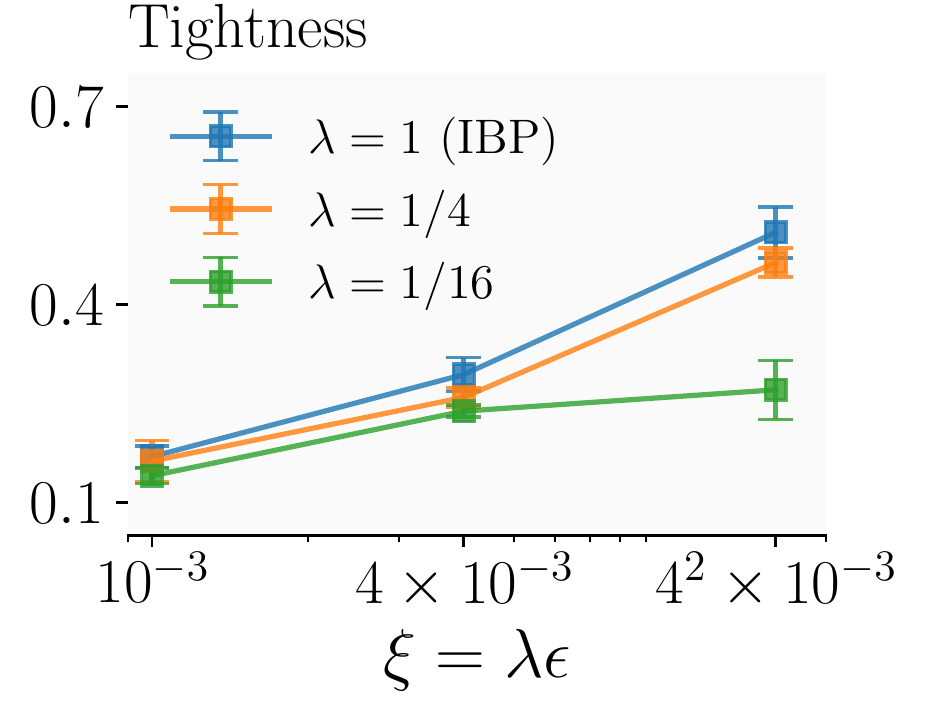}
    \vspace{-3.5mm}
    \caption{Tightness over propagation region size $\xi$ for \sabr and \mnist.}
    \label{fig:SABR_lambda_tightness_MNIST}
\end{figure}

\subsection{Tightness Approximation Error} \label{app:tightness_approximation_error}

\begin{wrapfigure}[14]{r}{0.39\linewidth}
    \vspace{-6mm}
    \centering
    \includegraphics[width=0.7\linewidth]{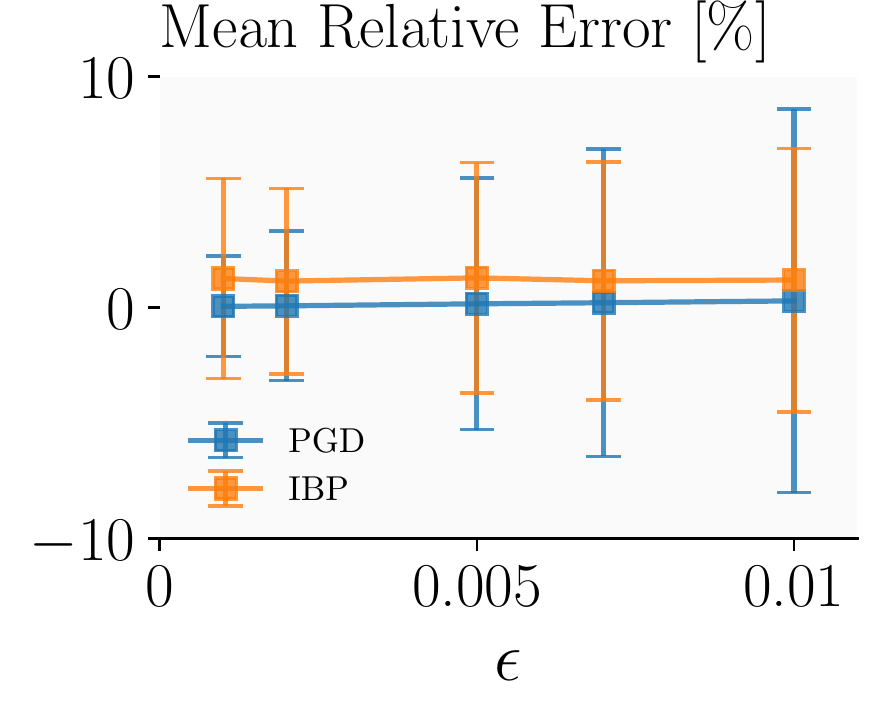}
    \vspace{-3mm}
    \caption{Mean relative error between local tightness (\cref{def:relu_tightness}) and true tightness computed with \milp for a \cnnt trained with \pgd or \ibp at $\epsilon=0.005$ on \cifar.}
    \label{fig:tightness_approximation_error_cifar}
\end{wrapfigure}

To investigate the approximation quality of our local tightness as defined in \cref{def:relu_tightness}, we compare it against the true tightness computed using MILP \citep{TjengXT19}. We confirm our results from \cref{fig:tightness_approximation_error} on \mnist on \cifar in \cref{fig:tightness_approximation_error_cifar}, where we again observe small approximation errors across a wide range of perturbation magnitudes. Interestingly, the effect of the chosen perturbation magnitude on the approximation error is less pronounced than on \mnist, remaining low even for large perturbation magnitudes ($\epsilon = 0.01 > 8/255$). While the approximation error remains below $0.3\%$ for a \pgd-trained net, our approximation exhibits a consistent bias for the \ibp-trained network, overestimating tightness by approximately $1.2\%$.

\subsection{Comparing Tightness to (Inverse) Robust Cross-Entropy Loss}
To investigate to what extent our novel tightness metric is complimentary to the (inverse) robust cross-entropy loss (see \cref{sec:background}) computed with \ibp, we repeat the key experiments confirming our theoretical insights with the inverse IBP-loss and observe significantly different, partially opposite trends.

\begin{wrapfigure}[18]{r}{.35\linewidth}
    \centering
    \vspace{-5mm}
    \begin{subfigure}{.95\linewidth}
        \centering
        \includegraphics[width=\linewidth]{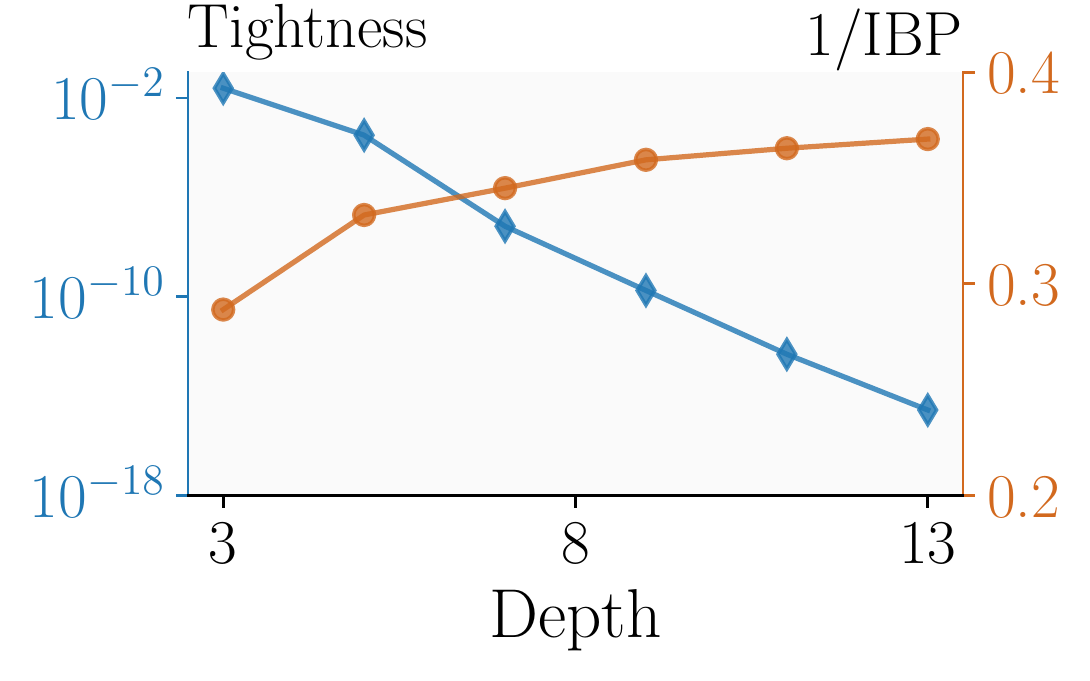}
    \end{subfigure}
    \begin{subfigure}{.95\linewidth}
        \centering
        \includegraphics[width=\linewidth]{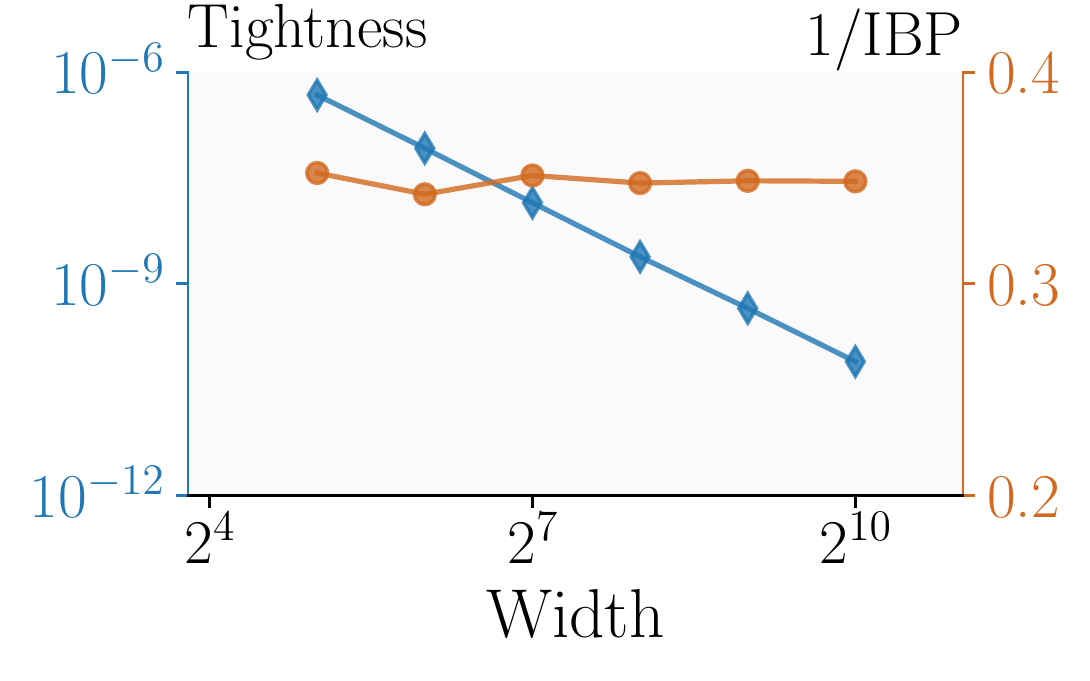}
    \end{subfigure}
    \vspace{-3.5mm}
    \caption{Tightness and inverse IBP loss at initialization depending on width and depth.}
    \label{fig:IBP_architecture_effect_ibp_loss}
\end{wrapfigure}
\paragraph{IBP Loss at Initialization}
We repeat our experiments on the dependence of tightness at initialization on network depth and width, illustrated in \cref{fig:IBP_architecture_effect}, and additionally report the inverse IBP loss in \cref{fig:IBP_architecture_effect_ibp_loss}. For all experiments, we use the initialization of \citet{ShiWZYH21} which has become the de-facto standard for IBP-based training methods. 
While we (theoretically and empirically) observe an exponential reduction in tightness with increasing depth, the inverse \ibp loss increases slightly. Similarly,
while we (theoretically and empirically) observe a polynomial reduction in tightness with increasing width, the inverse \ibp loss stays almost constant.
Note the logarithmic scale (and orders of magnitude larger changed) for tightness and the linear scale for the inverse \ibp loss.
This difference in trend is unsurprising as the custom initialization of \citet{ShiWZYH21} is designed to keep \ibp bound width constant over network depth and width.
We thus conclude that tightness and (inverse) \ibp loss yield fundamentally different results and insights when analyzing networks at initialization. 

\begin{figure}[t]
    \centering
    \vspace{-2mm}
    \begin{subfigure}{.31\linewidth}
        \centering
        \includegraphics[width=\linewidth]{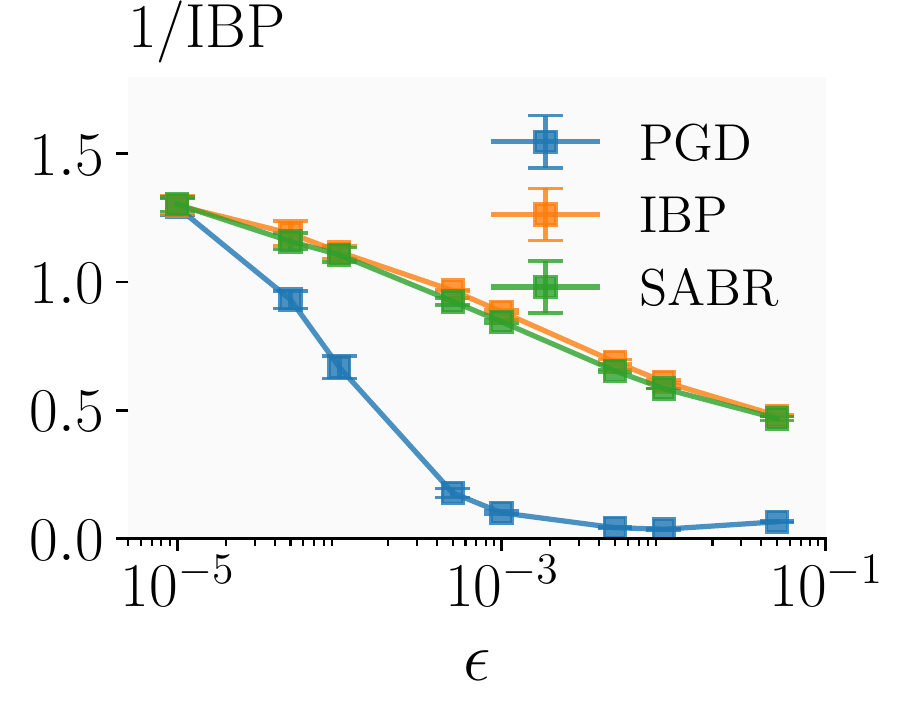}
        \vspace{-7mm}
        \caption{Evaluated at training $\epsilon$}
    \end{subfigure}
    \hfil
    \begin{subfigure}{.31\linewidth}
        \centering
        \includegraphics[width=\linewidth]{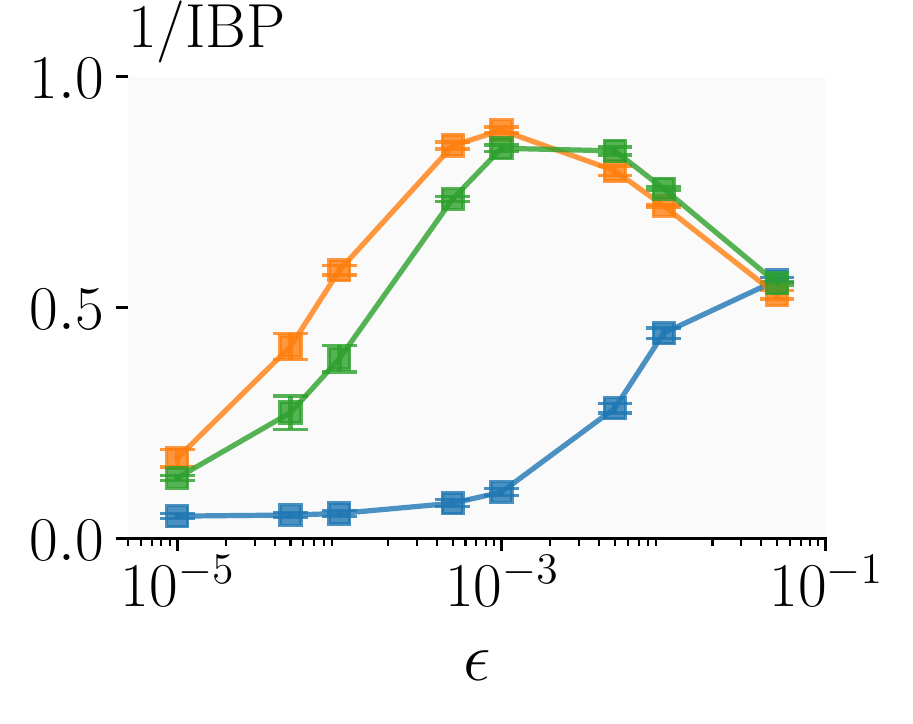}
        \vspace{-7mm}
        \caption{Evaluated at $\epsilon = 10^{-3}$}
    \end{subfigure}
    \hfil
    \begin{subfigure}{.31\linewidth}
        \centering
        \includegraphics[width=\linewidth]{figures/cnn3_method_PI}
        \vspace{-7mm}
        \caption{Evaluation $\epsilon$-independent}
    \end{subfigure}
    \vspace{-2mm}
    \caption{Tightness (right) and inverse IBP loss (left and center) after IBP training depending on training perturbation size $\epsilon$, evaluated at training $\epsilon$ (left) or constant $\epsilon=10^{-3}$ (center).}
    \label{fig:IBP_architecture_effect_ibp_loss_ibp}
\end{figure}

\paragraph{IBP Loss after Training}
We show the inverse IBP loss (left and center) and tightness (right) after IBP training depending on training perturbation size $\epsilon$, evaluated at training $\epsilon$ (left) or constant $\epsilon=10^{-3}$ (center) in \cref{fig:IBP_architecture_effect_ibp_loss_ibp}. 
We observe that inverse IBP loss, in contrast to tightness, is heavily dependent on the perturbation magnitude used for evaluation (compare left and center), making it poorly suited to analyze the effects of changing perturbation magnitude.
Further, when using the most natural perturbation magnitude, the $\epsilon$ used during training and certification (left), we observe completely different trends to tightness. 
For very small perturbation magnitudes, the inverse \ibp loss is very high, suggesting high (perturbation) robustness, but both inverse \ibp loss evaluated with a larger $\epsilon$ and tightness are low, showing that it neither permits precise analysis with \ibp nor is necessarily robust, again highlighting the difference between tightness and (inverse) \ibp loss.

\subsection{Tightness after \ibp Training} \label{app:ibp_training_tightness}
\begin{table}
    \centering
    \caption{Certified and standard accuracy depending on network width.} \label{tb:ibp_wide}
    \scalebox{0.8}{
        \begin{tabular}{@{}clcccccc@{}} \toprule
            Dataset                         & $\epsilon$                         & Method                 & Width       & Accuracy       & Certified    & Tightness  \\ \midrule
            \multirow{3}{*}{\mnist}         &  \multirow{3}{*}{$0.1$}           & \multirow{3}{*}{\ibp}  & $1\times$   & 85.70          & 67.71   &0.871       \\
                                            &                                    &                                                 & $2\times$   & 88.42          & 73.77       &0.857   \\       
                                            &                                    &                                                 & $4\times$   & \textbf{90.31}          & \textbf{79.89}  &0.803        \\
            \bottomrule
        \end{tabular}
    }
\end{table}
To confirm that wider models improve certified accuracy while slightly reducing tightness across network architectures, we also consider fully connected networks, which used to be the default in neural network verification \citep{SinghGPV19,SinghGMPV18}. We increase the width of a fully connected ReLU network with 6 hidden layers from 100 to 400 neurons and indeed observe a significant increase in certified accuracy (see \cref{tb:ibp_wide}).

}{}

\message{^^JLASTPAGE \thepage^^J}

\end{document}